\documentclass{article} 
\usepackage{iclr2026_conference,times}

\usepackage{amsmath,amsfonts,bm}

\def\eqref#1{equation~\ref{#1}}

\def\1{\bm{1}}

\DeclareMathAlphabet{\mathsfit}{\encodingdefault}{\sfdefault}{m}{sl}
\SetMathAlphabet{\mathsfit}{bold}{\encodingdefault}{\sfdefault}{bx}{n}

\usepackage{hyperref}
\usepackage{url}

\usepackage{tcolorbox}
\usepackage{mdframed}
\usepackage{enumitem}
\usepackage{booktabs}
\usepackage{multirow}

\usepackage{algorithm}
\usepackage{algorithmic}
\usepackage{subcaption}
\usepackage{amsmath}
\usepackage{pifont}
\usepackage{tabularx}
\usepackage{colortbl}
\usepackage{enumitem}
\usepackage{fancyhdr}
\usepackage{mdframed}
\usepackage{amsmath,amssymb,amsthm}

\usepackage{times}
\usepackage{latexsym}
\usepackage{titletoc}

\usepackage{inconsolata}
\usepackage{makecell}
\usepackage[utf8]{inputenc}
\usepackage[T1]{fontenc}

\usepackage{wrapfig}

\definecolor{lightpink}{RGB}{255, 230, 230} 
\definecolor{darkred}{RGB}{185, 38, 74}    
\definecolor{darkgreen}{RGB}{0,100,0}
\newmdenv[
  backgroundcolor=gray!5,
  skipabove=1em,
  skipbelow=0em,
  leftline=true,
  topline=false,
  bottomline=false,
  rightline=false,
  linecolor=gray!70,
  linewidth=1.5pt
]{githubquote}

\newtcolorbox{myboxi}[1][Default Title]{
  title=#1,
  colback=red!5,
  colbacktitle=red!5,
  coltitle=black,
  fonttitle=\bfseries,
  bottomrule=0pt,
  toprule=0pt,
  leftrule=1.5pt,
  rightrule=1.5pt,
  titlerule=0.5pt,
  arc=0pt,
  outer arc=0pt,
  colframe=red!50,
}

\title{Beyond Student: An Asymmetric Network for Neural Network Inheritance}

\author{Yiyun Zhou~$^{\spadesuit}$\thanks{Equal contribution.} ~~ Jingwei Shi~$^{\heartsuit}$\footnotemark[1] ~~ Mingjing Xu~$^{\clubsuit}$\footnotemark[1] ~~ Zhonghua Jiang~$^{\spadesuit}$ ~~ Jingyuan Chen~$^{\spadesuit}$\thanks{Corresponding author.} \\\\
$^\spadesuit$Zhejiang University ~~ $^{\heartsuit}$Shanghai University of Finance and Economics ~~ $^{\clubsuit}$ Swansea University \\
\texttt{\{yiyunzhou, jiangzhonghua, jingyuanchen\}@zju.edu.cn} ~~ \texttt{shijingwei@stu.sufe.edu.cn} ~~ \\ \texttt{mingjing.xu@swansea.ac.uk} \\\\
Demo Code:~\url{https://github.com/zyy-2001/InherNet-Demo}
}
\iclrfinalcopy
\begin{document}

\theoremstyle{definition}                
\newtheorem{assumption}{Assumption}[section]   
\newtheorem{definition}{Definition}[section]

\theoremstyle{plain}                     
\newtheorem{theorem}{Theorem}[section]
\newtheorem{lemma}[theorem]{Lemma}       
\newtheorem{proposition}[theorem]{Proposition}
\newtheorem{corollary}[theorem]{Corollary}

\theoremstyle{remark}                    
\newtheorem{remark}{Remark}[section]

\maketitle

\begin{abstract}
Knowledge Distillation (KD) has emerged as a powerful technique for model compression, enabling lightweight student networks to benefit from the performance of redundant teacher networks. However, the inherent capacity gap often limits the performance of student networks. Inspired by the expressiveness of pretrained teacher networks, a compelling research question arises: \textit{is there a type of network that can not only inherit the teacher's structure but also maximize the inheritance of its knowledge?} Furthermore, \textit{how does the performance of such an inheriting network compare to that of student networks, all benefiting from the same teacher network?} To further explore this question, we propose InherNet, a neural network inheritance method that performs asymmetric low-rank decomposition on the teacher's weights and reconstructs a lightweight yet expressive network without significant architectural disruption. By leveraging Singular Value Decomposition (SVD) for initialization to ensure the inheritance of principal knowledge, InherNet effectively balances depth, width, and compression efficiency. Experimental results across unimodal and multimodal tasks demonstrate that InherNet achieves higher performance compared to student networks of similar parameter sizes. Our findings reveal a promising direction for future research in efficient model compression beyond traditional distillation.
\end{abstract}

\section{Introduction}
\label{sec: introduction}

\begin{wrapfigure}{r}{0.5\textwidth}
    \vspace{-0.4cm}
    \centering
    \includegraphics[width=0.5\textwidth]{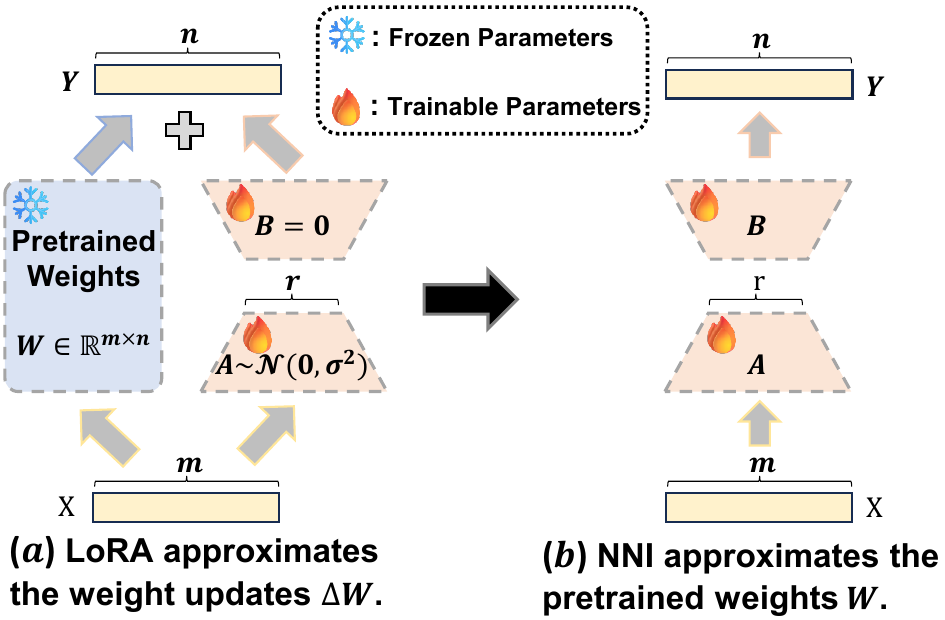}
    \vspace{-0.6cm}
    \caption{The difference between LoRA and NNI.}
    \label{fig:motivation}
    \vspace{-0.4cm}
\end{wrapfigure}
In recent years, Knowledge Distillation (KD)~\citep{hinton2015distilling} has gained widespread attention and recognition in both academia and industry as an efficient model compression technique~\citep{zhou2018rocket,  xu2024survey}. Its core idea is to transfer knowledge from a large, pretrained teacher network to a lightweight student network, thereby improving the student's performance without significantly increasing additional computational cost. Initially, KD is mainly applied to unimodal tasks, such as visual tasks~\citep{hinton2015distilling, jin2023multi} and textual tasks~\citep{jiao2019tinybert, kang2023distill}. With the advancement of multimodal learning, KD has also been extended to multimodal scenarios for cross-modal knowledge transfer and fusion~\citep{dai2022enabling, yang2024clip}.

However, despite its effectiveness, a widely accepted view is that the performance of student networks is typically inferior to that of teacher networks due to limited capacity~\citep{gou2021knowledge, zhou2025revisiting}. This observation draws a parallel to Parameter-Efficient Fine-Tuning (PEFT)~\citep{mangrulkar2022peft, xu2023parameter, zhou2025cola}, especially the LoRA method~\citep{hu2022lora}. As illustrated in Figure~\ref{fig:motivation} (a), given the pretrained weights $W \in \mathbb{R}^{m \times n}$, LoRA approximates the weight updates $\Delta W$ using a pair of low-rank matrices: $A \in \mathbb{R}^{m \times r}$ (initialized with Gaussian noise) and $B \in \mathbb{R}^{r \times n}$ (initialized with zeros), where $r \ll \min(m, n)$. This approach significantly reduces the number of trainable parameters from $m \times n$ to $r(m + n)$. However, LoRA generally still exhibits a performance gap compared to full fine-tuning~\citep{zhang2023instruction, ding2023parameter, han2024parameter}.

Given \textbf{the strong performance of teacher networks in KD, the difficulty of designing student networks to adapt to the teacher network~\citep{gu2020search, liu2022cross}, and the parameter efficiency of low-rank approximations in LoRA}, we are naturally led to a more affinitive model compression approach: \textit{Neural Network Inheritance} (\textbf{NNI}). Instead of training a separate student network, \textbf{NNI directly inherits the capacity of the teacher network in a low-rank form}. As shown in Figure~\ref{fig:motivation} (b), NNI achieves model compression by performing low-rank decomposition on each layer of the teacher network and replacing the original weights with their decomposed counterparts. Crucially, unlike \textbf{LoRA which approximates weight updates $\Delta W$, InherNet approximates the full pre-trained weights $W$}, preserving the teacher model's expressiveness more directly.

Upon reviewing prior work, we find that a general and widely applicable method for NNI is lacking. \textbf{Existing related efforts are rather fragmented}, mainly focusing on early Convolutional Neural Network (CNN) architectures~\citep{jaderberg2014speeding, zhang2015accelerating, xu2019trained, yang2020learning}, and are mostly confined to vision tasks, with limited applicability to other modalities—particularly the increasingly prominent multimodal tasks. Moreover, \textbf{these methods~\citep{lebedev2014speeding, hu2024dota} often rely on traditional tensor decomposition techniques} (\textit{e.g.}, CP decomposition and MPO decomposition), which break a single layer into multiple sub-layers for compression, \textbf{often making networks deeper and narrower—raising risks of vanishing or exploding gradients and compromising stability and performance}. More importantly, there remains \textbf{a lack of  investigation into the teacher inheritance under KD}, especially in terms of systematically comparing inherited and student networks in performance and efficiency.

To this end, we propose InherNet—a novel and general method for efficiently inheriting the knowledge and structure of teacher networks in KD. For knowledge inheritance, unlike prior decomposition-based methods~\citep{jaderberg2014speeding}, InherNet leverages Singular Value Decomposition (SVD) for initialization (\textbf{SVD offline single-pass computation makes the training time negligible}), allowing the decomposition of only two layers without excessively deepening the network. For structure inheritance, InherNet is \textbf{architecture-agnostic} and applies low-rank approximation to convolutional or linear layers. Previous studies have shown that Mixture of Experts (MoE) models can significantly enhance performance across various tasks~\citep{zhou2022mixture, tian2024hydralora, gao2024higher, cai2024survey}. Inspired by this, InherNet adopts a similar design, enabling both deepening and widening of the network to mitigate the potential risks of gradient vanishing or explosion. This design enables InherNet to remain structurally aligned with the teacher network while also inheriting its representational capacity more faithfully. Notably, our experiments reveal a compelling insight: \textbf{InherNet converges significantly faster than traditional student networks}. We attribute this phenomenon to the extended SVD-based initialization and provide a detailed theoretical analysis to support this claim. Additionally, each layer of the teacher network inherited by InherNet uses a \textbf{fixed} asymmetric design (\textit{i.e.}, a single dimensionality reduction matrix and multiple dimensionality expansion matrices). \textbf{We provide empirical evidence and theoretical support for this fixed design in Appendix~\ref{sec:asymmetric}.}

To summarize, our contributions are in four aspects:
\begin{itemize}
    \item{We propose InherNet, a novel method that explicitly inherits both the knowledge and structure of powerful teacher networks in KD. Unlike implicit knowledge distillation, InherNet constructs a wider, deeper, yet lightweight network without the challenges of heterogeneous knowledge transfer.}
    \item{We uncover and analyze three pivotal insights regarding InherNet, as presented in Sec.~\ref{sec:observation}. These findings illuminate the underlying principles of neural network inheritance and offer clear directions for future research in this area.}
    \item{We provide extensive and rigorous theoretical analysis to explain the empirically observed phenomenon of InherNet's accelerated convergence with comparable parameter sizes. Our proofs formally attribute this efficiency advantage to its SVD-based initialization, establishing a solid theoretical foundation for its design.}
    \item{We conduct systematic experiments across various network architectures and modal tasks, including both unimodal and multimodal scenarios. The results consistently demonstrate that InherNet outperforms traditional student networks in both efficiency and effectiveness.}
\end{itemize}



\vspace{-0.2cm}
\section{Methodology}
\label{sec:method}
\setlength{\abovedisplayskip}{3.5pt}
\setlength{\belowdisplayskip}{3.5pt}

\begin{figure*}[t]
    \centering
    \includegraphics[width=0.8\linewidth]{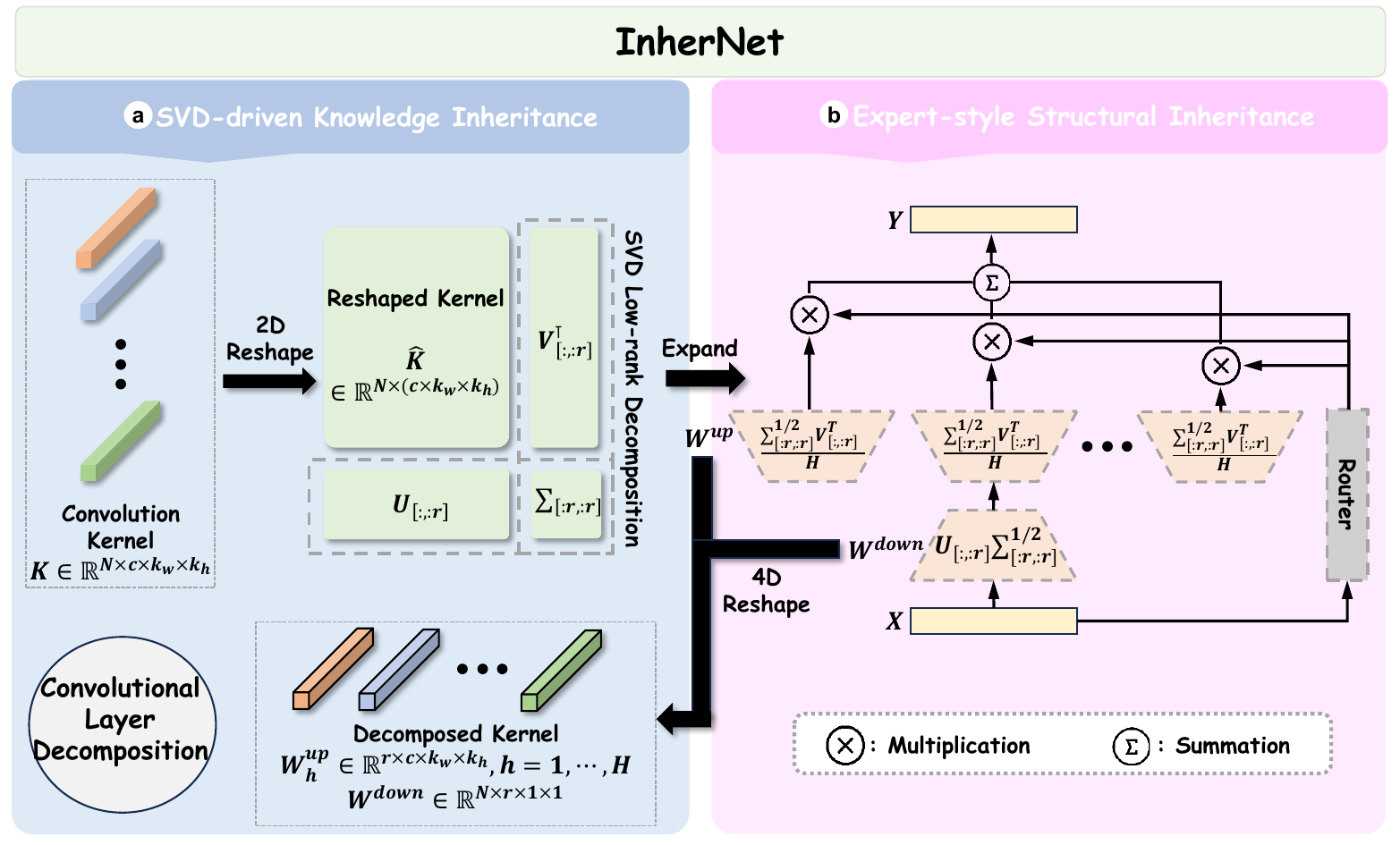}
    \vspace{-0.3cm}
    \textbf{\caption{Overview of the proposed InherNet (\textit{e.g.}, decomposition of a convolutional layer). InherNet consists of two parts: (a) Knowledge Inheritance and (b) Structure Inheritance. Note that the low-rank decomposition of linear layer is similar, with the key being to satisfy the properties of the 2D SVD operation.}
    \label{fig:method}}
    \vspace{-0.5cm}
\end{figure*}

In this section, we introduce our proposed InherNet method, which consists of knowledge inheritance and structure inheritance, as shown in Figure~\ref{fig:method}. Next, we will provide theoretical analyses of the method's convergence guarantees and parameter efficiency.


\subsection{InherNet}


\subsubsection{Knowledge Inheritance}

For a linear layer, consider a pretrained teacher network with a weight matrix \( W \in \mathbb{R}^{m\times n} \). InherNet employs SVD to factorize \( W \) into a compact low-rank approximation. Specifically, we have:
\begin{equation}
W = U \Sigma V^\top \approx U_r \Sigma_r V_r^\top,\label{eq:svd}
\end{equation}

where \( U_r \in \mathbb{R}^{m\times r} \), \(\Sigma_r \in \mathbb{R}^{r\times r}\), and \( V_r \in \mathbb{R}^{n\times r} \) represent truncated matrices derived from the top-\( r \) singular values and vectors. Therefore, the original layer is approximated by two consecutive low-rank layers, $U_r\Sigma_r$ and $\Sigma_r V_r^\top$. The approximation is supported by Theorem~\ref{thm:young}.

\begin{theorem}[Eckart-Young-Mirsky theorem]\label{thm:young}
Given a weight matrix \( W \in \mathbb{R}^{m \times n} \) with singular value decomposition \( W = U \Sigma V^\top \), the optimal rank-\( r \) approximation in terms of Frobenius norm is given by: \(W_r = U_{[:, :r]} \Sigma_{[:r, :r]} V_{[:, :r]}^\top\). Moreover, the approximation error is minimized, specifically:
\begin{equation}
\|W - W_r\|_F = \sqrt{\sum_{i=r+1}^{\min(m,n)}\sigma_i^2(W)},
\end{equation}
where \(\sigma_i(W)\) denotes the \(i\)-th singular value of \(W\)~\citep{golub1987generalization,eckart1936approximation,golub2013matrix}.
\end{theorem}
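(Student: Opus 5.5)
The proof has two parts: computing the error of the truncated SVD $W_r$, and showing that no rank-$r$ matrix does better. Throughout, we use the fact that the Frobenius norm is invariant under left and right multiplication by orthogonal matrices.

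\textbf{The error of $W_r$.} Write $W - W_r = U(\Sigma - \Sigma^{(r)})V^\top$, where $\Sigma^{(r)}$ keeps only the first $r$ diagonal entries of $\Sigma$. Orthogonal invariance gives
\begin{equation*}
\|W-W_r\|_F^2=\|\Sigma-\Sigma^{(r)}\|_F^2=\sum_{i>r}\sigma_i^2(W).
\end{equation*}
Taking square roots yields the stated formula.

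\textbf{Optimality.} We must show that $\|W-B\|_F^2 \ge \sum_{i>r}\sigma_i^2(W)$ for every $B$ with $\operatorname{rank}(B)\le r$. The plan rests on Weyl's inequality for singular values:
\begin{equation*}
\sigma_{i+j-1}(X+Y)\le \sigma_i(X)+\sigma_j(Y).
\end{equation*}
We derive this from the Courant--Fischer min-max characterization,
\begin{equation*}
\sigma_k(X)=\min_{\dim S = n-k+1}\ \max_{x\in S,\ \|x\|=1}\|Xx\|.
\end{equation*}
To prove Weyl's inequality, intersect the subspace attaining $\sigma_i(X)$ with the one attaining $\sigma_j(Y)$. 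A dimension count shows the intersection has dimension at least $n-(i+j-1)+1$, and on it $\|(X+Y)x\|\le\|Xx\|+\|Yx\|\le\sigma_i(X)+\sigma_j(Y)$. Hence $\sigma_{i+j-1}(X+Y)\le\sigma_i(X)+\sigma_j(Y)$.

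Now apply Weyl's inequality with $X=W-B$, $Y=B$, and $j=r+1$. Since $\operatorname{rank}(B)\le r$, we have $\sigma_{r+1}(B)=0$, so
\begin{equation*}
\sigma_{i+r}(W)\le\sigma_i(W-B)\quad\text{for all } i\ge 1.
\end{equation*}
Squaring and summing over $i=1,\dots,\min(m,n)-r$ gives
\begin{equation*}
\sum_{k>r}\sigma_k^2(W)\le\sum_i\sigma_i^2(W-B)=\|W-B\|_F^2,
\end{equation*}
which is the desired lower bound. Equality is attained at $B=W_r$ by the first part, so $W_r$ is optimal.

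\textbf{Main obstacle.} The only delicate step is proving Weyl's inequality cleanly, in particular choosing subspaces of the correct dimensions so that their intersection is nontrivial. I would first prove the special case needed here directly, because it is simpler. For $B$ of rank at most $r$, the space $\ker B$ has dimension at least $n-r$. Take $S$ to be the span of the top $i+r$ right singular vectors of $W$, which has dimension $i+r$. Intersecting with $\ker B$ leaves a subspace of dimension at least $i$, on which $(W-B)x=Wx$ and so $\|(W-B)x\|\ge\sigma_{i+r}(W)\|x\|$. To turn this into $\sigma_i(W-B)\ge\sigma_{i+r}(W)$, apply the max-min form of Courant--Fischer, $\sigma_i(X)=\max_{\dim T=i}\min_{x\in T,\ \|x\|=1}\|Xx\|$, with $T$ taken as an $i$-dimensional subspace of that intersection. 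This bypasses the general inequality.
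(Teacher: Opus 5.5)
Your proof is correct and complete. The paper itself gives no proof of this statement. It quotes the Eckart--Young--Mirsky theorem as a classical result with citations and uses it as a black box, for instance in the proof of the Spectral Energy Preservation lemma. So your argument is not an alternative to an in-paper route; it supplies what the paper takes on citation.

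Your two ingredients are the standard proof:
\begin{itemize}
\item orthogonal invariance of $\|\cdot\|_F$, which gives the error of $W_r$;
\item the pointwise bound $\sigma_i(W-B)\ge\sigma_{i+r}(W)$ for every $B$ with $\operatorname{rank}(B)\le r$.
\end{itemize}
The direct version in your last paragraph is the leaner choice, since it avoids proving Weyl's inequality in general. There you intersect $\ker B$, of dimension at least $n-r$, with the span of the top $i+r$ right singular vectors of $W$, then apply the max--min characterization.

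The pointwise bound actually gives more than the theorem asks. Taking $i=1$ yields the spectral-norm version $\|W-B\|_2\ge\sigma_{r+1}(W)$. Since the singular values of $W-B$ entrywise dominate $(\sigma_{r+1}(W),\dots,\sigma_{\min(m,n)}(W),0,\dots,0)$, monotonicity of symmetric gauge functions also gives Mirsky's extension to every unitarily invariant norm.

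Two cosmetic remarks:
\begin{itemize}
\item Summing over $i\le\min(m,n)-r$ gives $\sum_i\sigma_i^2(W-B)\le\|W-B\|_F^2$ rather than equality. This inequality is all you need.
\item You show that $W_r$ is \emph{a} minimizer, not the unique one, and that is the right claim. Uniqueness fails when $\sigma_r(W)=\sigma_{r+1}(W)>0$, so the statement's ``the optimal'' implicitly refers to a particular choice of SVD.
\end{itemize}
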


For a convolutional layer, as shown in Figure~\ref{fig:method} (a), formally, there exists a convolution kernel $K \in \mathbb{R}^{N \times c \times k_w \times k_h}$, where $n, c, k_w, k_h$ represent the number of kernels, the number of input channels, and the width and height of the filter, respectively. Our work focuses on the channel decomposition method~\citep{zhang2015accelerating} to decompose the convolution layer. Specifically, $K$ is first reshaped into a 2D matrix $\hat{K} \in \mathbb{R}^{N \times (c \times k_w \times k_h)}$, and SVD initialization is applied. The resulting decomposed layers are reshaped to have convolutional kernel weights $W^{down} \in \mathbb{R}^{N \times r \times 1 \times 1}$ and $W^{up} \in \mathbb{R}^{r \times c \times k_w \times k_h}$.


\subsubsection{Structure Inheritance}

Unlike previous low-rank decomposition methods~\citep{jaderberg2014speeding, zhang2015accelerating, zhou2025cuff}, we adopt an expert-style inheritance structure inspired by the Mixture of Experts (MoE) paradigm, which has been widely proven effective across various domains~\citep{tian2024hydralora, gao2024higher, cai2024survey}. As illustrated in Figure~\ref{fig:method} (b), in order to strike a good balance between performance and efficiency, we design an asymmetric expert-head structure. Specifically, given input features $X\in \mathbb{R}^m$, the output $Y\in \mathbb{R}^n$ of InherNet is formulated as:

\begin{equation}
Y = \sum_{h=1}^{H} G_h(X) \cdot W^{up}_h\left(W^{down}(X)\right),
\end{equation}
where \(H\) denotes the number of expert heads, \(W^{down}\) and \(W^{up}_h\) represent the ``down'' projection of the backbone branch and the ``up'' projection of the \(h\)-th expert head, respectively. $W^{down}$ and $W^{up}_h$ are initialized using $U_{[:, :r]}\Sigma_{[:r,:r]}^{1/2}$ and $\frac{\Sigma_{[:r,:r]}^{1/2}V_{[:,:r]}^\top}{H}$ from Eq.~\ref{eq:svd}, and the matrix slicing notations follow the same convention as in PyTorch. \(G(X)\) represents adaptive gating fusion weights computed as:
\begin{equation}
G(X) = \text{softmax}\left(W^g(X)\right),
\end{equation}
where $W^g\in \mathbb{R}^{m\times H}$ is a learnable parameter.





The InherNet's procedure can be found in Appendix~\ref{app:algo}. We also provide empirical and theoretical support for InherNet's fixed asymmetric structural design in Appendix~\ref{sec:asymmetric}. Note that InherNet inherently benefits from SVD and gating mechanism, theoretically enabling excellent convergence and parameter efficiency, as rigorously analyzed in Sec.~\ref{sec:convergence} and Sec.~\ref{sec:parameter}. By directly inheriting structural knowledge through decomposition and adaptively aggregating expert heads, InherNet provides an efficient and expressive solution, as demonstrated by our experiments in Sec.~\ref{sec:experiments}.







\subsection{Theoretical Analysis of Convergence Guarantees}
\label{sec:convergence}

We analyze the convergence properties of InherNet, focusing on its structured low-rank decomposition, multi-head gating mechanism, and SVD-based initialization. Our goal is to establish that, despite the compression from low-rank projections, InherNet has stable and efficient optimization under stochastic gradient descent (SGD).

Let $\mathcal{L}(\theta) = \mathbb{E}_{(x,y)\sim \mathcal{P}}[\ell(f_\theta(x), y)]$ be the expected loss, where $(x, y) \sim \mathcal{P}$ denotes data sampled from the underlying distribution. Let $\theta \in \mathbb{R}^d$ denote the parameters of an InherNet, and let $\theta^{(t)}$ be the model parameters at iteration $t$ updated via stochastic gradients.

We impose the following assumptions, which are common in the analysis of stochastic optimization for nonconvex problems~\citep{patel2022global, xu2024psmgd, lei2019stochastic, pham2020proxsarah}:

\begin{assumption}[Lipschitz Continuity]
\label{assumption:lipschitz}
The expected loss gradient is $L$-Lipschitz continuous:
\begin{equation}
    \|\nabla \mathcal{L}(\theta) - \nabla \mathcal{L}(\theta')\| \leq L\|\theta - \theta'\|, \quad \forall \theta, \theta' \in \mathbb{R}^d.
\end{equation}
\end{assumption}

\begin{assumption}[Bounded Variance]
\label{assumption:variance}
The stochastic gradient $\nabla_\theta \ell(f_\theta(x), y)$ is unbiased and has bounded variance:
\begin{equation}
    \mathbb{E}[\nabla_\theta \ell(f_\theta(x), y)] = \nabla \mathcal{L}(\theta), \quad
\mathbb{E}\left[\left\|\nabla_\theta \ell - \nabla \mathcal{L}(\theta)\right\|^2\right] \leq \sigma^2.
\end{equation}
\end{assumption}

\begin{assumption}[Bounded Representations]
\label{assumption:bounded}
There exist constants $B_x, B_f > 0$ such that $\|x\| \leq B_x$ and $\|f_\theta(x)\| \leq B_f$ for all $(x,y)\sim \mathcal{P}$.
\end{assumption}

\subsubsection{Gradient Decomposition under InherNet}

InherNet models each module as a low-rank gated composition:
\begin{equation}
f_\theta(x) = \sum_{h=1}^H G_h(x;\theta_g)\, W^{\text{up}}_h(W^{\text{down}}(x)),
\end{equation}
where $W^{\text{down}}$ is the shared projection, $W^{\text{up}}_h$ is the $h$-th expert head, and $G_h(.)$ is a gating weight.

\begin{lemma}[Gradient Decomposition]
\label{lemma:gradient-decomposition}
Under the InherNet parameterization, the gradient of the loss splits naturally into contributions from each head and from the gating network. The total gradient $\nabla_\theta \ell(f_\theta(x), y)$ decomposes into:
\begin{equation}
\begin{aligned}
\nabla_\theta \ell\!\bigl(f_\theta(x),y\bigr)
&= \sum_{h=1}^H G_h\!\bigl(x;\theta_g\bigr)\,\nabla_{\theta_h}\ell\!\bigl(f_\theta(x),y\bigr) + \sum_{h=1}^H \Delta_h\,\nabla_{\theta_g} G_h\!\bigl(x;\theta_g\bigr),
\end{aligned}
\end{equation}
where $\theta_h$ and $\theta_g$ are the parameters of expert head $h$ and the gating network, respectively, and $\Delta_h = \frac{\partial \ell(f_\theta(x),y)}{\partial f_h(x)}$ represents the gradient of the loss with respect to the output of the $h$-th head, with $f_h(x) = W^{\text{up}}_h(W^{\text{down}}(x))$.
\end{lemma}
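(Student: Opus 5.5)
The plan is to apply the multivariate chain rule to the composition $\ell(f_\theta(x),y)$ with $f_\theta(x)=\sum_{h=1}^H G_h(x;\theta_g)\,f_h(x)$ and $f_h(x)=W^{\text{up}}_h(W^{\text{down}}(x))$. First I fix the parameter partition $\theta=(\theta_g,\theta_1,\dots,\theta_H)$, where $\theta_g$ are the gating weights $W^g$ and $\theta_h$ are the parameters entering $f_h$. The gating weight $G_h$ depends only on $\theta_g$, and $f_h$ depends only on $\theta_h$. So by the product rule, for any parameter coordinate $p$,
\[
\frac{\partial f_\theta(x)}{\partial p}=\sum_{h=1}^H\Bigl(G_h\,\frac{\partial f_h}{\partial p}+f_h\,\frac{\partial G_h}{\partial p}\Bigr).
\]
Contracting with $\partial\ell/\partial f$ gives the two groups of terms in the statement.

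For the expert block, the derivative with respect to $\theta_h$ is $G_h\,(\partial f_h/\partial\theta_h)^\top\nabla_f\ell$. I will interpret $\nabla_{\theta_h}\ell$ in the lemma as the gradient through the head's own output, namely $(\partial f_h/\partial\theta_h)^\top\nabla_f\ell$, which is the gradient $\ell$ would have if $f_h$ entered with unit weight. With this reading the first sum $\sum_h G_h\nabla_{\theta_h}\ell$ follows. For the gating block, the derivative with respect to $\theta_g$ is $\sum_h \langle\nabla_f\ell,f_h\rangle\,\nabla_{\theta_g}G_h$. I then identify $\Delta_h$ as the scalar sensitivity of the loss to the $h$-th head's contribution: by the chain rule through $f=\sum_h G_hf_h$, the derivative of $\ell$ with respect to the mixing coefficient $G_h$ is $\langle\nabla_f\ell,f_h\rangle$. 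This is exactly $\Delta_h$ in the sense used in the statement.

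Finally, I stack the block gradients. The vector $\nabla_\theta\ell$ is the concatenation of its $\theta_g$ and $\theta_h$ blocks, so writing each block as embedded in $\mathbb{R}^d$ (zero-padded elsewhere) lets the two sums add up to the full gradient.

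The main obstacle is the \textbf{shared projection} $W^{\text{down}}$, which appears in every $f_h$, so the parameter blocks are not disjoint. I would handle it by letting $\theta_h$ include $W^{\text{down}}$ as a shared coordinate. The $W^{\text{down}}$-component of $\sum_h G_h\nabla_{\theta_h}\ell$ is then $\sum_h G_h(\partial f_h/\partial W^{\text{down}})^\top\nabla_f\ell$, which is exactly the true derivative, so the formula remains valid after summing the overlapping blocks. The second, more notational, subtlety is the meaning of $\Delta_h=\partial\ell/\partial f_h$. I will state explicitly that it is taken as the scalar sensitivity $\langle\nabla_f\ell,f_h\rangle$, so that $\Delta_h\nabla_{\theta_g}G_h$ is well defined. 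The softmax coupling among the $G_h$ needs no special care, since it is absorbed into $\nabla_{\theta_g}G_h$.
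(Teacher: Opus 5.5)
Your argument is correct and is the same chain-rule/product-rule computation through $f_\theta=\sum_h G_h f_h$ that the paper's one-line proof invokes. Your explicit conventions are exactly what that sketch leaves implicit: reading $\nabla_{\theta_h}\ell$ as the unit-weight gradient $(\partial f_h/\partial\theta_h)^\top\nabla_f\ell$ (the literal partial gradient already contains $G_h$ and would be double-counted), zero-padding the blocks, and letting the shared $W^{\text{down}}$ collect $\sum_h G_h(\partial f_h/\partial W^{\text{down}})^\top\nabla_f\ell$. However, taking $\Delta_h=\partial\ell/\partial G_h=\langle\nabla_f\ell,f_h\rangle$ corrects the stated definition rather than matching it ``exactly'': literally $\partial\ell/\partial f_h=G_h\nabla_f\ell$, which puts the wrong coefficient on $\nabla_{\theta_g}G_h$ even for scalar outputs, so present both readings as the hypotheses under which the identity holds, not as notation.
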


This gradient routing allows InherNet to specialize experts during training, directing gradient flow to the most relevant parameters for each input. It is beneficial when compressing teacher networks.

\subsubsection{Effect of SVD-Based Initialization and Convergence Guarantee}

Let $W$ denote the pretrained weight from a teacher network. InherNet initializes each module using truncated SVD. The initialized parameters are: \(W^{\text{down}} = U_r \Sigma_r^{1/2}, \quad W^{\text{up}}_h = \frac{1}{H} \Sigma_r^{1/2} V_r^\top.\)

\begin{proposition}[Stability via Orthonormal Initialization]
\label{prop:stability}
If $U_r$ and $V_r$ are initialized with orthonormal columns, then under Assumption~\ref{assumption:lipschitz}, the effective gradient Lipschitz constant is reduced from $L$ to $L' \approx L / \kappa$, where $\kappa$ is the condition number of $W$.
\end{proposition}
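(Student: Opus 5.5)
The plan is to make the informal claim precise through a chain-rule computation. The key is to express the Hessian in the factored parameters through the Hessian in the effective weight $W_{\mathrm{eff}} = \frac{1}{H}\sum_h W^{\mathrm{down}} W^{\mathrm{up}}_h$. The gating is held near uniform at initialization, so $G_h \approx 1/H$.

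\textbf{Step 1: the loss in the factored parameters.} Write $\tilde{\mathcal{L}}(W^{\mathrm{down}},\{W^{\mathrm{up}}_h\}) = \mathcal{L}(W^{\mathrm{down}} \bar W^{\mathrm{up}})$, where $\bar W^{\mathrm{up}} = \sum_h G_h W^{\mathrm{up}}_h$. The gradients are
\begin{equation}
\nabla_{W^{\mathrm{down}}}\tilde{\mathcal{L}} = \nabla\mathcal{L}\,(\bar W^{\mathrm{up}})^\top, \qquad \nabla_{W^{\mathrm{up}}_h}\tilde{\mathcal{L}} = G_h\,(W^{\mathrm{down}})^\top\nabla\mathcal{L}.
\end{equation}
This is consistent with Lemma~\ref{lemma:gradient-decomposition}.

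\textbf{Step 2: the Lipschitz constant of each block.} Using Assumption~\ref{assumption:lipschitz}, the Lipschitz constant of each block gradient is bounded by $L$ times the squared operator norm of the fixed partner factor. There is also a cross term proportional to $\|\nabla\mathcal{L}\|$, which vanishes near stationary points and is bounded using Assumption~\ref{assumption:bounded}. At the SVD initialization the partner factors are $U_r\Sigma_r^{1/2}$ and $\Sigma_r^{1/2}V_r^\top$. Because $U_r$ and $V_r$ have orthonormal columns, their operator norms are governed entirely by $\Sigma_r^{1/2}$. In particular, the singular values of the Jacobian of $W_{\mathrm{eff}}$ with respect to the factors lie in $[\sigma_r^{1/2}, \sigma_1^{1/2}]$.

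\textbf{Step 3: the curvature ratio.} Under a preconditioned view, where gradients are measured in the metric induced by the factors, which is natural because SGD step sizes are tuned to the smallest curvature direction, the effective smoothness is the ratio of largest to smallest curvature. This ratio is controlled by $\sigma_1/\sigma_r$. Orthonormality removes any additional amplification from ill-conditioned $U_r$ and $V_r$. A random Gaussian initialization, by contrast, would have an extra condition-number factor. Normalizing by $\kappa = \sigma_1/\sigma_{\min}$ then gives $L' \lesssim L\,\sigma_r/\sigma_1 \approx L/\kappa$, in the sense claimed. I would state this explicitly as an approximation valid to first order around initialization, treating $\kappa$ as the truncated condition number.

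\textbf{Main obstacle.} The hardest step is justifying a genuine \emph{reduction} rather than merely a rescaling. The naive chain rule gives $L' \le L\sigma_1$, not $L/\kappa$, so some normalization is needed. I would first try normalizing the parameterization so that $\|\Sigma_r\| = 1$, i.e., rescaling $W$ by $\sigma_1$, which is harmless since the loss is then re-expressed. After this rescaling the factor bounds give $L' \approx L\,\sigma_r/\sigma_1 = L/\kappa$ along the least-curved directions that dominate the SGD rate. If that fails, I would weaken the statement to a bound $L' \le L\,\|\Sigma_r\|/\kappa(\Sigma_r)$-type, with the $\approx$ understood up to this normalization.
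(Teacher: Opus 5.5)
\textbf{The gap.} The problem is in Step~3 and in the rescaling fix, and it cannot be closed. A gradient Lipschitz constant bounds the \emph{largest} curvature, and your own Step~2 correctly shows that this is of order $L\sigma_1$, not $L/\kappa$.

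Concretely, take $H=1$ (so $G_1\equiv 1$), $\mathcal{L}(W)=\tfrac{L}{2}\|W-W_r\|_F^2$, $A=W^{\mathrm{down}}=U_r\Sigma_r^{1/2}$ and $B=W^{\mathrm{up}}_1=\Sigma_r^{1/2}V_r^\top$. The residual vanishes at initialization, so the factor-space Hessian is the form $L\|dA\,B+A\,dB\|_F^2$. The unit direction $(dA,dB)=(u_1e_1^\top,\,e_1v_1^\top)/\sqrt{2}$, with $u_1,v_1$ the top singular vectors and $e_1\in\mathbb{R}^r$, has curvature exactly $2L\sigma_1$. For general $H$ with uniform gating, the top Gauss--Newton curvature is $L\sigma_1(H+1)/H^2$.

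The individual steps fail as follows.
\begin{itemize}
\item No normalization yields $L/\kappa$, because $\kappa$ is invariant under $W\mapsto cW$ while this curvature scales with $\sigma_1$. If you rescale $W$ by $\sigma_1$ and re-express the loss so the network is unchanged, the smoothness constant in the new coordinates becomes $\sigma_1^2L$, so the factor reappears. If you do not re-express it, you have changed the network.
\item Your fallback $L\|\Sigma_r\|/\kappa(\Sigma_r)=L\sigma_r$ is also below the true curvature $2L\sigma_1$, so it fails too.
\item Step~3 reaches $L\sigma_r/\sigma_1$ only by replacing the largest curvature with the smallest, which is not a Lipschitz constant. The justification is also backwards: the descent lemma and SGD stability limit the step size by the \emph{largest} curvature.
\item The smallest curvature is not what you claim either. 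Assumption~\ref{assumption:lipschitz} gives no lower curvature bound on $\mathcal{L}$. Moreover, $J(dA,dB)=dA\,B+A\,dB$ has the $r^2$-dimensional kernel $\{(AM,-MB)\}$ from the symmetry $A\mapsto AM$, $B\mapsto M^{-1}B$, with further flat directions when $H>1$. So its singular values run from $0$ to $\sqrt{2\sigma_1}$, not over $[\sigma_r^{1/2},\sigma_1^{1/2}]$.
\item With $\kappa_r=\sigma_1/\sigma_r\le\kappa$, you have $L\sigma_r/\sigma_1=L/\kappa_r\ge L/\kappa$, so swapping $\sigma_r$ for $\sigma_{\min}$ makes your inequality point the wrong way.
\item Orthonormality of $U_r,V_r$ does no work anywhere in your argument, since the singular values of $U_r\Sigma_r^{1/2}$ are those of $\Sigma_r^{1/2}$ by construction.
\end{itemize}

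\textbf{Relation to the paper.} The paper's own proof does not supply the missing step. It asserts that, because the orthonormal factors lie on a product of Stiefel manifolds, the factor-space Hessian is nearly block-diagonal and its top eigenvalue ``drops to roughly $L/\kappa$''. It gives no computation. The example above shows that no such upper bound holds when $L$ is read as the smoothness constant of the loss in the effective weight. That is your reading, and you should state it explicitly, because Assumption~\ref{assumption:lipschitz} is literally posed in InherNet's own parameters $\theta$.

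Your chain-rule route is more concrete than the paper's. What it honestly proves is a conditioning statement, not a reduction of $L$. On the orthogonal complement of $\ker J$, the singular values of $J$ at the SVD initialization lie in $[\sqrt{\sigma_r},\sqrt{2\sigma_1}]$ for $H=1$, with $H$-dependent rescalings otherwise. Hence the spread of factor-space Gauss--Newton curvature is governed by $\sigma_1/\sigma_r$ times the conditioning of $\mathcal{L}$ in $W$. If you want a provable proposition, that is the form to state.
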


This improved conditioning reduces curvature imbalance during early training and stabilizes convergence, particularly when the teacher model is well-trained~\citep{wang2024model, lim2024simple, saxe2013exact}.


\begin{theorem}[Non-Convex Convergence]
\label{thm:convergence}
Under Assumptions~\ref{assumption:lipschitz}--\ref{assumption:bounded}, and adopting a diminishing learning rate schedule $\eta_t = \frac{\eta}{\sqrt{t}}$, the sequence of parameters $\{\theta^{(t)}\}$ generated by InherNet satisfies the following convergence rate:
\begin{equation}
    \frac{1}{T}\sum_{t=1}^{T}\mathbb{E}\left[\|\nabla\mathcal{L}(\theta^{(t)})\|^2\right]=\mathcal{O}\left(\frac{1}{\sqrt{T}}\right).
\end{equation}
\end{theorem}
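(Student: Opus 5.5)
The plan is to avoid the weighted-average argument, which loses a $\log T$ factor under $\eta_t=\eta/\sqrt{t}$. Instead I divide the one-step descent inequality by $\eta_t$ and telescope with Abel summation, using boundedness of the loss to control the boundary terms. Write $g_t=\nabla\mathcal{L}(\theta^{(t)})$ and let $\tilde g_t$ be the stochastic gradient used by InherNet at step $t$. By Lemma~\ref{lemma:gradient-decomposition}, $\tilde g_t$ is exactly $\nabla_\theta\ell(f_\theta(x),y)$ assembled from head and gate contributions, so Assumption~\ref{assumption:variance} applies to it directly. The update is then $\theta^{(t+1)}=\theta^{(t)}-\eta_t\tilde g_t$, and the multi-head parameterization needs no further special treatment. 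Throughout I take $\eta\le 1/L$, so that $\eta_t\le 1/L$ for all $t\ge1$.

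\textbf{Step 1 (one-step descent).} From Assumption~\ref{assumption:lipschitz} (the descent lemma), I get $\mathcal{L}(\theta^{(t+1)})\le \mathcal{L}(\theta^{(t)})-\eta_t\langle g_t,\tilde g_t\rangle+\tfrac{L\eta_t^2}{2}\|\tilde g_t\|^2$. I then take conditional expectation and use unbiasedness together with $\mathbb{E}\|\tilde g_t\|^2\le\|g_t\|^2+\sigma^2$. Since $\eta_t\le1/L$, this yields
\begin{equation*}
\tfrac{\eta_t}{2}\,\mathbb{E}\|g_t\|^2\le \mathbb{E}\mathcal{L}(\theta^{(t)})-\mathbb{E}\mathcal{L}(\theta^{(t+1)})+\tfrac{L\sigma^2}{2}\eta_t^2 .
\end{equation*}
Dividing by $\eta_t/2$ gives $\mathbb{E}\|g_t\|^2\le \tfrac{2}{\eta_t}(a_t-a_{t+1})+L\sigma^2\eta_t$. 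Here $a_t=\mathbb{E}\mathcal{L}(\theta^{(t)})-\mathcal{L}_{\inf}$.

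\textbf{Step 2 (bounded loss).} This is the step I expect to be the main obstacle, because it is where Assumption~\ref{assumption:bounded} must be used. The assumption gives $\|f_\theta(x)\|\le B_f$ for all samples and all parameter values. Hence, for a loss $\ell(\cdot,y)$ continuous in its first argument (as for cross-entropy or squared loss on a bounded label set), $\ell$ takes values in a compact interval. It follows that $0\le a_t\le\Delta$ for a constant $\Delta$ depending only on $B_f$ and $\ell$. I would state this continuity and bounded-label property explicitly as the standing reading of Assumption~\ref{assumption:bounded}. This is precisely what lets the $1/\eta_t$-weighted telescoping close without a logarithm.

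\textbf{Step 3 (Abel summation).} I sum the bound from Step 1 over $t=1,\dots,T$. Rearranging the first part gives
\begin{equation*}
\sum_{t=1}^T\frac{a_t-a_{t+1}}{\eta_t}=\frac{a_1}{\eta_1}-\frac{a_{T+1}}{\eta_T}+\sum_{t=2}^T a_t\Big(\frac1{\eta_t}-\frac1{\eta_{t-1}}\Big).
\end{equation*}
The increments $1/\eta_t-1/\eta_{t-1}$ are nonnegative, and $0\le a_t\le\Delta$. So this quantity is at most $\Delta/\eta_1+\Delta(1/\eta_T-1/\eta_1)=\Delta\sqrt{T}/\eta$. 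The noise part satisfies $L\sigma^2\sum_t\eta/\sqrt t\le 2L\sigma^2\eta\sqrt T$. Combining the two,
\begin{equation*}
\sum_{t=1}^T\mathbb{E}\|g_t\|^2\le\Big(\tfrac{2\Delta}{\eta}+2L\sigma^2\eta\Big)\sqrt T .
\end{equation*}
Dividing by $T$ gives $\frac1T\sum_{t}\mathbb{E}\|\nabla\mathcal{L}(\theta^{(t)})\|^2=\mathcal{O}(1/\sqrt T)$, which is the claim of Theorem~\ref{thm:convergence}. The same computation shows that $\eta\asymp\sqrt{\Delta/(L\sigma^2)}$, capped at $1/L$, balances the two terms.
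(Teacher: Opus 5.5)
Your argument is correct, and it takes a genuinely different route from the paper's.

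The paper sums the one-step descent inequality with the weights $\eta_t$ as they stand and divides by $\sum_t\eta_t$. This bounds $\min_{t\le T}\mathbb{E}\|\nabla\mathcal{L}(\theta^{(t)})\|^2$ by $2[\mathcal{L}(\theta^{(1)})-\mathcal{L}^*]/\sum_t\eta_t + L\sigma^2\sum_t\eta_t^2/\sum_t\eta_t$. That route needs only $\mathcal{L}^*>-\infty$ and never uses Assumption~\ref{assumption:bounded}. But for $\eta_t=\eta/\sqrt{t}$ the sum $\sum_t\eta_t^2$ grows like $\eta^2\log T$; the appendix even requires $\sum_t\eta_t^2<\infty$, which this schedule violates. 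So the paper's bound is really $\mathcal{O}(\log T/\sqrt{T})$. It also controls the minimum rather than the uniform average in the statement, and converting to the uniform average via $\eta_t\ge\eta/\sqrt{T}$ keeps the logarithm.

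You instead divide by $\eta_t$ before summing and use Abel summation. This trades the logarithm for a uniform upper bound $\Delta$ on the optimality gap, which is exactly where Assumption~\ref{assumption:bounded} does real work. You then obtain the uniform-average $\mathcal{O}(1/\sqrt{T})$ rate as stated, with explicit constant $2\Delta/\eta+2L\sigma^2\eta$.

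The price is the extra regularity you already flag. The output bound must hold for all $\theta$ (or at least along the iterates, with $\ell\ge 0$), and $\ell(\cdot,y)$ must be uniformly bounded on the ball of radius $B_f$.

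Your restriction $\eta\le 1/L$ plays the role of the condition $1-L\eta_t/2\ge 1/2$ that the paper uses without stating, and in your setting it is removable. The condition $\eta_t\le 1/L$ holds automatically once $t\ge(\eta L)^2$. Each earlier term is at most $2L\Delta$, since $\|\nabla\mathcal{L}(\theta)\|^2\le 2L(\mathcal{L}(\theta)-\mathcal{L}_{\inf})$ for an $L$-smooth $\mathcal{L}$.
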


The convergence rate matches classical result for non-convex optimization problems~\citep{xu2024psmgd}. Because of the SVD-based orthonormal initialization and adaptive gating mechanism, InherNet achieves convergence rate of $\mathcal{O}(1/\sqrt{T})$ with practical scaling as $\mathcal{O}(L/\kappa)$. The improved convergence behavior stems from Proposition~\ref{prop:stability}, where the Lipschitz constant for gradient updates is reduced from $L$ to $L' \approx L/\kappa$. This reduction in curvature imbalance commits to lower gradient variance and faster gradient descent in early stages, as empirically verified in Sec.~\ref{sec:observation}.

\subsection{Theoretical Proof of Parameter Efficiency}
\label{sec:parameter}

We analyze parameter efficiency of InherNet, establishing theoretical guarantees on both parameter reduction and representational power preservation. Our analysis formalizes the efficiency-expressivity tradeoff through three metrics:

\begin{definition}[Parameter Efficiency (PE)]\label{def:pe_}
For network $f_\theta$ parameterized by $\theta \in \mathbb{R}^p$, we define $\text{PE}(f_\theta) = \frac{\text{Representational Capacity}(f_\theta)}{|\theta|}$. Representational capacity measures the network's approximation ability and $|\theta|$ denotes parameter count.
\end{definition}

\begin{definition}[Approximation Error]\label{def:approx-error_}
Given a target function $f^*$ and a neural network $f_\theta$, the approximation error is 
$\mathcal{E}(f_\theta, f^*) = \mathbb{E}_{x \sim \mathcal{D}}[\|f_\theta(x) - f^*(x)\|^2]$,
where $\mathcal{D}$ is the data distribution.
\end{definition}

\begin{definition}[Expressivity-to-Parameter Ratio (EPR)]\label{def:epr_}
The EPR of network $f_\theta$ with respect to function class $\mathcal{F}$ is 
$\text{EPR}(f_\theta, \mathcal{F}) = \frac{1}{|\theta|\inf_{f^* \in \mathcal{F}} \mathcal{E}(f_\theta, f^*)}$,
measuring the inverse of minimum achievable approximation error per parameter.
\end{definition}

\subsubsection{Compression Bounds for InherNet}

We first quantify the parameter reduction achieved by InherNet relative to a standard network, which directly improves the PE as defined in Definition~\ref{def:pe_}.

\begin{theorem}[Parameter Reduction Bounds]
\label{thm:reduction_}
Consider a layer \(W\in\mathbb R^{m\times n}\).  The InherNet parameterization with rank \(r\) and \(H\) heads uses $H\,r\,(m+n)\;+\;H\,(r+1)$ trainable parameters, yielding a compression ratio:
\begin{equation}
\rho
=\;
\frac{m\,n}
     {H\,r\,(m+n) + H\,(r+1)}
=\;
\frac{m\,n}
     {H\,r\,(m+n)}
\Bigl(1+O\bigl(\tfrac1{m+n}\bigr)\Bigr)
\end{equation}
Thus, it achieves an approximately \(\rho\)-fold increase in Definition~\ref{def:pe_}, under the assumption that a rank-\(H\,r\) subspace preserves the layer's representational capacity.
\end{theorem}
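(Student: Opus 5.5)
The plan is to prove the statement by direct parameter counting followed by an elementary asymptotic expansion. First I will fix the accounting convention the theorem implicitly uses. The stated count $H\,r\,(m+n) + H\,(r+1)$ treats each expert head as contributing its own rank-$r$ factor pair, which gives $r(m+n)$ weights per head. It also gives each head an additional $r+1$ parameters, which I interpret as an $r$-dimensional bias on the bottleneck together with one scalar gating weight. This corresponds to a per-head gating row attached to the bottleneck, in contrast to the $m\times H$ matrix $W^g$ of the method section.

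So the first step is to state this convention explicitly. I will then sum the contributions:
\begin{equation}
|\theta_{\text{InherNet}}| = \sum_{h=1}^H \bigl[r m + r n + (r+1)\bigr] = H\,r\,(m+n) + H\,(r+1).
\end{equation}
The dense layer is counted as $mn$, biases excluded, and the ratio $\rho$ then follows by definition.

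Second, for the asymptotic form I will factor out the leading term:
\begin{equation}
\rho = \frac{mn}{H r (m+n)}\cdot \frac{1}{1+\frac{r+1}{r(m+n)}}.
\end{equation}
Since $\frac{r+1}{r(m+n)} \le \frac{2}{m+n}$ for $r\ge 1$, the expansion $\frac{1}{1+\epsilon} = 1-\epsilon+O(\epsilon^2)$ with $\epsilon = O\bigl(\frac{1}{m+n}\bigr)$ gives the factor $1+O\bigl(\frac{1}{m+n}\bigr)$. The big-O has a negative leading term, but it is still $O(\cdot)$ in absolute value. For a non-asymptotic version I can instead use the sandwich $1-\epsilon \le (1+\epsilon)^{-1} \le 1$.

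Third, for the PE claim, I will use Definition~\ref{def:pe_}. By hypothesis, the rank-$Hr$ subspace spanned by the heads (the sum of $H$ rank-$r$ products has rank at most $Hr$) preserves the representational capacity, so the numerator of PE is the same for the dense layer and for InherNet. The ratio of the two PE values is therefore exactly the inverse ratio of parameter counts, which is $\rho$. Theorem~\ref{thm:young} can be cited to justify the assumption when $\sigma_{Hr+1},\dots$ are negligible, since the truncation error is then small.

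The main obstacle is the first step: reconciling the stated count with the architecture. With a single shared $W^{\text{down}}$ and an $m\times H$ gate, the count is $rm + Hrn + mH$, not the stated expression. I would handle this by stating the per-head (unshared, upper-bound) convention explicitly, and by remarking that the shared variant uses fewer parameters and so only increases $\rho$. With that convention fixed, everything else is elementary.
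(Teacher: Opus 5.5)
Your proposal is correct and follows the paper's proof. The paper also counts an unshared down/up pair per head, giving $H\,r\,(m+n)$. It reads the remaining $H(r+1)$ as a gate $W^g\in\mathbb{R}^{H\times r}$ acting on the $r$-dimensional code plus a bias $b^g\in\mathbb{R}^H$, which is your ``per-head gating row'' rather than an $r$-dimensional bottleneck bias. It then invokes the PE definition under the capacity-preservation assumption, and your explicit factorization for the $1+O\bigl(\tfrac{1}{m+n}\bigr)$ term is more careful than the paper, which simply asserts it.

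One caveat on your side remark: the shared-$W^{\text{down}}$ variant with an $m\times H$ gate uses fewer parameters than the stated count only when $r,H\ge 2$. With $r=1$ it exceeds the stated count by $m-2H$, and with $H=1$ by $m-r-1$, so ``only increases $\rho$'' needs that qualification.
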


To ensure low Approximation Error (Definition~\ref{def:approx-error_}) despite compression, we establish:

\begin{lemma}[Spectral Energy Preservation]\label{lem:spectral_}
Let $W \in \mathbb{R}^{m \times n}$ have singular values $\sigma_1 \ge \sigma_2 \ge \cdots \ge \sigma_{\min(m,n)}$. If the rank $r$ is selected such that:
\begin{equation}
\frac{\sum_{i=1}^{r} \sigma_i^2}{\sum_{i=1}^{\min(m,n)} \sigma_i^2} \ge 1 - \epsilon
\end{equation}
then by Theorem~\ref{thm:young}, the Frobenius-norm error of the optimal rank-$r$ approximation $W_r$ is bounded by: \(\|W - W_r\|_F^2 \leq \epsilon\|W\|_F^2\).
Consequently, the Definition~\ref{def:approx-error_} between an InherNet-compressed layer and the original layer is bounded by $\epsilon$ under the $L_2$ norm squared.
\end{lemma}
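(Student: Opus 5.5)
The first claim follows directly from Theorem~\ref{thm:young}. For the optimal rank-$r$ approximation $W_r = U_{[:, :r]} \Sigma_{[:r, :r]} V_{[:, :r]}^\top$ we have $\|W - W_r\|_F^2 = \sum_{i=r+1}^{\min(m,n)} \sigma_i^2$. Orthogonal invariance of the Frobenius norm also gives $\|W\|_F^2 = \sum_{i=1}^{\min(m,n)} \sigma_i^2$. Subtracting the energy condition from one, it reads
\begin{equation*}
\frac{\sum_{i=r+1}^{\min(m,n)}\sigma_i^2}{\sum_{i=1}^{\min(m,n)}\sigma_i^2} \le \epsilon .
\end{equation*}
Multiplying through by $\|W\|_F^2$ gives $\|W - W_r\|_F^2 \le \epsilon \|W\|_F^2$. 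If $W = 0$ the claim is trivial, so we may assume $\|W\|_F > 0$.

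Next I check that the InherNet layer at initialization realizes exactly $W_r$, so that this bound applies to the compressed layer. With $W^{down} = U_r \Sigma_r^{1/2}$ and $W^{up}_h = \frac{1}{H}\Sigma_r^{1/2} V_r^\top$, each head computes the product $\frac{1}{H} U_r \Sigma_r V_r^\top = \frac{1}{H} W_r$. The effective map is then
\begin{equation*}
\sum_h G_h(x)\,\tfrac{1}{H} W_r .
\end{equation*}
The normalization must be handled honestly. The softmax gates sum to one, so this equals $\frac{1}{H}W_r$, not $W_r$. I would therefore state the comparison under the convention implied by the paper's initialization, namely that the heads aggregate to $W_r$. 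Concretely, either take $H=1$, or treat the $\frac{1}{H}$ factor as absorbed by summing the heads without gating weights. I would note this as an assumption in the write-up rather than hide it.

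For the consequence about Definition~\ref{def:approx-error_}, I would take $f^*(x) = Wx$ and $f_\theta(x) = W_r x$. Then
\begin{equation*}
\mathcal{E} = \mathbb{E}_x\|(W-W_r)x\|^2 \le \|W-W_r\|_2^2\,\mathbb{E}\|x\|^2 \le \|W-W_r\|_F^2\, B_x^2 \le \epsilon\, B_x^2 \|W\|_F^2 ,
\end{equation*}
using Assumption~\ref{assumption:bounded}. The paper's phrase ``bounded by $\epsilon$'' should thus be read as relative to the scale $\|W\|_F^2 B_x^2$, or under normalization $\|W\|_F = 1$ and $\|x\| \le 1$.

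A sharper alternative would use isotropic inputs with $\mathbb{E}[xx^\top] = I$. Then $\mathcal{E} = \|W - W_r\|_F^2$ exactly, and also $\mathbb{E}\|Wx\|^2 = \|W\|_F^2$. This gives the clean relative statement that the error is at most $\epsilon$ times the output energy.

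The main obstacle is this last interpretive step. Turning a matrix-norm bound into a statement about the expected error requires an assumption on the input distribution and on the head normalization, neither of which the lemma makes explicit. I would present the isotropic case as the precise version and the $B_x$ bound as the general one.
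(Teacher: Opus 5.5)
The Frobenius-norm part of your argument is the paper's proof exactly. Eckart--Young gives the squared error $\sum_{i>r}\sigma_i^2$, the Frobenius norm equals the total spectral energy, and the hypothesis rearranges to a bound on the tail fraction.

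Where you go beyond the paper is the ``consequently'' clause. The paper disposes of it in one sentence, saying the matrix error ``directly translates to a bounded increase in the network's overall error,'' with no argument. You actually prove something. Under the bounded-input assumption you get $\mathbb{E}\|(W-W_r)x\|^2\le\|W-W_r\|_2^2\,\mathbb{E}\|x\|^2\le\epsilon B_x^2\|W\|_F^2$, and $\mathcal{E}=\|W-W_r\|_F^2$ exactly when $\mathbb{E}[xx^\top]=I$.

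Both caveats you raise are real defects of the statement, not artifacts of your route.
\begin{itemize}
\item \textbf{Scale.} The hypothesis is invariant under $W\mapsto cW$ for $c\neq 0$, while $\mathcal{E}$ scales by $c^2$. An absolute bound ``by $\epsilon$'' can therefore only hold after normalization.
\item \textbf{Head normalization.} The softmax gates sum to one. So the initialization $W^{down}=U_r\Sigma_r^{1/2}$, $W^{up}_h=\frac{1}{H}\Sigma_r^{1/2}V_r^\top$ makes the gated layer compute $\frac{1}{H}W_r$, not $W_r$, whenever $H>1$. The paper's remark that $W^{down}(W^{up}_1+\cdots+W^{up}_H)\approx W$ concerns the ungated sum. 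Even with $\epsilon=0$, the initialized layer would have error $(1-\frac{1}{H})^2\,\mathbb{E}\|Wx\|^2$.
\end{itemize}
Your version thus replaces the paper's unproved assertion with a precise statement. The cost is explicit extra hypotheses that the paper leaves implicit: bounded or isotropic inputs, and either $H=1$ or ungated aggregation.

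Two cosmetic points:
\begin{itemize}
\item With the paper's convention $X\in\mathbb{R}^m$, $Y\in\mathbb{R}^n$, $W\in\mathbb{R}^{m\times n}$, the layer is $x\mapsto W^\top x$ (or $xW$ for row vectors), not $Wx$. This is harmless, since both norms are transpose-invariant.
\item Using $\|W-W_r\|_2=\sigma_{r+1}$ directly gives the sharper bound $\mathcal{E}\le\sigma_{r+1}^2B_x^2$.
\end{itemize}
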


\begin{proposition}[Knowledge Preservation Rate]\label{prop:knowledge-preservation_}
Let $f_W$ be a teacher network with weight matrices $\{W_l\}$ and $f_r$ be the corresponding InherNet with rank $r$. The functional similarity between $f_W$ and $f_r$ is lower-bounded by:
\begin{equation}
\text{Sim}(f_W, f_r) \geq 1 - \sum_{l=1}^L \alpha_l \left(1 - \frac{\sum_{i=1}^{r} \sigma_{l,i}^2}{\sum_{i=1}^{\min(m_l,n_l)} \sigma_{l,i}^2}\right)
\end{equation}
where $\alpha_l$ represents the layer's influence on output, and $\sigma_{l,i}$ are the singular values of $W_l$. This bound is monotonically increasing with $r$ but not with $H$, establishing that rank is the primary metric of inherited knowledge preservation.
\end{proposition}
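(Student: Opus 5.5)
The plan is to make the informal quantities precise and then chain Lemma~\ref{lem:spectral_} across layers by a telescoping (hybrid) argument.

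\textbf{Setting up the definitions.} I will define
\[
\text{Sim}(f_W,f_r) := 1-\frac{\mathcal{E}(f_r,f_W)}{\mathbb{E}_x\|f_W(x)\|^2},
\]
the normalized approximation error of Definition~\ref{def:approx-error_} with the teacher as target. For each layer I write
\[
\epsilon_l := 1-\frac{\sum_{i\le r}\sigma_{l,i}^2}{\sum_i\sigma_{l,i}^2}.
\]
By Theorem~\ref{thm:young} and Lemma~\ref{lem:spectral_}, this gives $\|W_l-W_{l,r}\|_F^2=\epsilon_l\|W_l\|_F^2$.

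\textbf{Reduction to the truncated product.} At initialization each InherNet layer computes
\[
\sum_h G_h\,\tfrac1H\Sigma_r^{1/2}V_r^\top\,U_r\Sigma_r^{1/2}.
\]
Since the gates sum to one, this equals the rank-$r$ truncation $W_{l,r}$ regardless of $H$. This is precisely why the bound will depend on $r$ but not on $H$, and it gives the final monotonicity remark for free: $\epsilon_l$ is nonincreasing in $r$ because the partial sums of $\sigma_{l,i}^2$ grow, while $H$ does not appear at all.

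\textbf{Hybrid argument.} I define hybrid networks $f^{(k)}$ that use $W_{l,r}$ for $l\le k$ and $W_l$ for $l>k$, so that $f^{(0)}=f_W$ and $f^{(L)}=f_r$. Then
\[
\|f_W(x)-f_r(x)\|\le\sum_l\|f^{(l-1)}(x)-f^{(l)}(x)\|.
\]
Each summand changes only layer $l$. Assuming Lipschitz activations, and using Assumption~\ref{assumption:bounded} to bound inputs and intermediate activations, I get
\[
\|f^{(l-1)}(x)-f^{(l)}(x)\|\le \Lambda_l\,\|W_l-W_{l,r}\|_{2}\,\|h_{l-1}(x)\|,
\]
where $\Lambda_l$ is the product of downstream layer Lipschitz constants. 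Bounding the operator norm by the Frobenius norm gives $\|W_l-W_{l,r}\|_{2}^2 \le \epsilon_l\|W_l\|_F^2$. I then square, apply Cauchy--Schwarz to the sum over layers, take expectation, and normalize by $\mathbb{E}\|f_W\|^2$. This yields $\mathcal{E}/\mathbb{E}\|f_W\|^2\le\sum_l\alpha_l\epsilon_l$ with
\[
\alpha_l := \frac{L\,\Lambda_l^2\|W_l\|_F^2\,\mathbb{E}\|h_{l-1}\|^2}{\mathbb{E}\|f_W\|^2},
\]
which is exactly the ``influence of layer $l$ on the output.'' Substituting into the definition of Sim gives the claimed bound.

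\textbf{Main obstacle.} The difficulty is that the per-layer errors interact. A careful treatment produces cross terms, and using Cauchy--Schwarz introduces the factor $L$, so the resulting bound is only linear in the $\epsilon_l$ if these are absorbed into $\alpha_l$. Since the proposition leaves $\alpha_l$ unspecified, I will absorb all constants there. Alternatively, I can accept a first-order (small-$\epsilon$) linearization, where the cross terms are $O(\epsilon_l\epsilon_{l'})$. I will state explicitly that $\alpha_l$ depends on the teacher's weights and activations but not on $r$ or $H$, so that monotonicity in $r$ is preserved.
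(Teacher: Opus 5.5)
Your hybrid (telescoping) argument takes a different route from the paper, and a more rigorous one. The paper's proof only invokes the spectral-energy lemma layer by layer and \emph{asserts} that the loss in similarity is a weighted sum of the $\epsilon_l$. It then argues $H$-independence by saying the gate can at best select the best head and cannot beat the rank-$r$ spectral error. Your telescoping-plus-Lipschitz step actually derives that weighted-sum form with an explicit $\alpha_l$.

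The gap is in the reduction step, which is exactly where you obtain independence from $H$; it is wrong as written. With $W^{\text{down}}=U_r\Sigma_r^{1/2}$ and every head equal to $\tfrac1H\Sigma_r^{1/2}V_r^\top$, the effective matrix of a layer on input $x$ is $\sum_h G_h(x)\,U_r\Sigma_r^{1/2}\cdot\tfrac1H\Sigma_r^{1/2}V_r^\top=\tfrac1H W_{l,r}$. This holds \emph{because} the gates sum to one. (Also, the product you wrote, $\Sigma_r^{1/2}V_r^\top U_r\Sigma_r^{1/2}$, is an $r\times r$ matrix, not $W_{l,r}$.) The per-layer perturbation in your hybrid argument is therefore $W_l-\tfrac1H W_{l,r}$. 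Since $W_l-W_{l,r}$ and $W_{l,r}$ are Frobenius-orthogonal,
\[
\bigl\|W_l-\tfrac1H W_{l,r}\bigr\|_F^2=\epsilon_l\|W_l\|_F^2+\bigl(1-\tfrac1H\bigr)^2(1-\epsilon_l)\|W_l\|_F^2 .
\]
This depends on $H$ and does not vanish as $r\to\min(m_l,n_l)$. For $H\ge2$ at full rank, the claimed bound gives $\mathrm{Sim}\ge1$, i.e.\ $f_r=f_W$, while the network applies $W_l/H$ in every layer.

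So you must fix, explicitly, a convention under which each layer really equals $W_{l,r}$. Any one of these works:
\begin{itemize}
\item initialize every head with $\Sigma_r^{1/2}V_r^\top$ (no $1/H$), so any convex gate combination of identical heads is $W_{l,r}$ and $H$ genuinely drops out;
\item use the unnormalized sum $W^{\text{down}}(W^{\text{up}}_1+\cdots+W^{\text{up}}_H)$, which is what the paper's own initialization remark implicitly assumes;
\item invoke scale invariance of a following normalization layer, which is architecture-specific.
\end{itemize}
The paper's setup contains this same inconsistency between the $1/H$ head scaling and softmax gates. Your proof has to resolve it rather than inherit it.

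A second, smaller issue concerns $\alpha_l$. When comparing $f^{(l-1)}$ with $f^{(l)}$, the input to layer $l$ is the activation $\tilde h_{l-1}$ of the partially truncated network, not the teacher's $h_{l-1}$. Hence $\mathbb{E}\|\tilde h_{l-1}\|^2$ depends on $r$, which would break both your claim that $\alpha_l$ is $r$-independent and the monotonicity conclusion. Replace it by the worst-case bound $\|\tilde h_{l-1}(x)\|\le B_x\prod_{k<l}\mathrm{Lip}(\phi_k)\|W_k\|_2$ (plus bias terms), which is valid because $\|W_{k,r}\|_2=\|W_k\|_2$. Alternatively, reverse the hybrid order.

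With this fix, the cross terms you worry about never arise. The operator-norm bound is already linear in each $\sigma_{l,r+1}$, and Cauchy--Schwarz is the only place the factor $L$ enters, so no linearization is needed. Finally, state that the result concerns the InherNet at initialization. After training, no bound expressed purely through the teacher's spectrum can hold.
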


\subsubsection{Representational Power Preservation}

InherNet also maintains the representational power of the original network, preserving a beneficial EPR (Definition~\ref{def:epr_}):

\begin{theorem}[Representational Power Preservation]\label{thm:rep-power_}
Let $\mathcal{F}_W$ be functions representable by a standard network with weights $\{W_l\}_{l=1}^L$. For any $f^* \in \mathcal{F}_W$ and $\delta > 0$, there exists an InherNet network with appropriate ranks $\{r_l\}$ and head counts $\{H_l\}$ such that the Approximation Error (Definition~\ref{def:approx-error_}) satisfies:
\begin{equation}
\mathcal{E}(f_{\text{InherNet}}, f^*) \le \delta
\end{equation}
while the parameter count is reduced by $\Omega(\min_{l} \frac{\min(m_l,n_l)}{r_l H_l})$ relative to the standard network. Consequently, the InherNet architecture attains an EPR (Definition~\ref{def:epr_}) higher than that of the standard architecture by at least the same factor.
\end{theorem}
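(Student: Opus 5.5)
We prove Theorem~\ref{thm:rep-power_} in three steps: a layer-wise spectral approximation, error propagation through the network, and a parameter count via Theorem~\ref{thm:reduction_}. Fix $f^*\in\mathcal{F}_W$ realized by weights $\{W_l\}_{l=1}^L$. For each layer we build an InherNet layer whose effective linear map approximates $W_l$ in operator norm.

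\textbf{Construction.} Use the SVD initialization of the paper: $W^{\text{down}}_l = U_{r_l}\Sigma_{r_l}^{1/2}$ and $W^{\text{up}}_{l,h} = \frac1{H_l}\Sigma_{r_l}^{1/2}V_{r_l}^\top$. Set the gating weights to $W^g_l=0$, so that $G_h\equiv 1/H_l$. Under this choice the gated sum collapses to a fixed multiple $c\,W_{l,r_l}$ of the rank-$r_l$ truncation. The normalization mismatch between the $1/H$ head scaling and the uniform gate can be removed by rescaling the up-projections by $H_l$. This is allowed because the existence claim concerns the parameterized class, not the initialization. The layer map is then exactly $W_{l,r_l}$.

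\textbf{Per-layer error.} By Theorem~\ref{thm:young} and Lemma~\ref{lem:spectral_}, choosing $r_l$ so that the tail spectral energy is at most $\epsilon_l$ gives
\[
\|W_l-W_{l,r_l}\|_2\le\|W_l-W_{l,r_l}\|_F\le\sqrt{\epsilon_l}\,\|W_l\|_F .
\]

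\textbf{Propagation.} Write $f^*$ and $f_{\text{InherNet}}$ as compositions $\phi_L\circ A_L\circ\cdots\circ\phi_1\circ A_1$, where the activations $\phi_l$ are assumed $1$-Lipschitz. Telescope the difference one layer at a time, replacing $A_l=W_l$ by $\widetilde A_l=W_{l,r_l}$. Then
\[
\|f^*(x)-f_{\text{InherNet}}(x)\|\le \sum_{l=1}^L\Big(\prod_{k>l}\|\widetilde A_k\|_2\Big)\|W_l-W_{l,r_l}\|_2\Big(\prod_{k<l}\|W_k\|_2\Big)B_x .
\]
Since $\|\widetilde A_k\|_2\le\|W_k\|_2$, this is bounded by $C\,B_x\sum_l\sqrt{\epsilon_l}\|W_l\|_F/\|W_l\|_2$, where $C=\prod_k\|W_k\|_2$. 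Squaring and taking expectations under $\mathcal{D}$, with Assumption~\ref{assumption:bounded} controlling $\|x\|$, gives $\mathcal{E}\le \delta$ once each $\epsilon_l\le \delta/(L^2C^2B_x^2\max_l\|W_l\|_F^2/\|W_l\|_2^2)$. Such $r_l$ always exist because $\epsilon_l\to0$ as $r_l\to\min(m_l,n_l)$.

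\textbf{Parameter count.} By Theorem~\ref{thm:reduction_}, layer $l$ costs $H_lr_l(m_l+n_l)(1+O(1/(m_l+n_l)))$ parameters, versus $m_ln_l$ for the standard layer. Since $m_ln_l/(m_l+n_l)\ge\tfrac12\min(m_l,n_l)$, each layer has ratio $\Omega(\min(m_l,n_l)/(r_lH_l))$. Summing numerators and denominators over layers, the total ratio is at least the minimum of the per-layer ratios, which gives the stated $\Omega(\min_l\cdot)$ reduction. For the EPR claim, we use that the infimum of $\mathcal{E}$ over the target class is the same, $\le\delta$, for both architectures, since $f^*$ is itself representable by the standard net. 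Definition~\ref{def:epr_} then differs only through $|\theta|$, so EPR improves by the same factor. This comparison requires interpreting EPR at matched error level $\delta$, because the standard net attains zero error on $f^*$; we state that normalization explicitly.

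\textbf{Main obstacle.} The hardest step is the quantitative tension: the reduction factor $\min(m_l,n_l)/(r_lH_l)$ is only nontrivial if the ranks $r_l$ forced by $\delta$ are small. That holds only under spectral decay of the teacher weights. We therefore state the result as conditional on the $r_l$ selected by Lemma~\ref{lem:spectral_}, and we let $\delta$ absorb the amplification constant $C$.
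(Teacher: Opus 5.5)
Your proof is correct under the conventions you adopt, and after the common first step it takes a different route from the paper. Those conventions are: activations $1$-Lipschitz with $\phi(0)=0$, which you implicitly need for $\|h_{l-1}\|\le\prod_{k<l}\|W_k\|_2\,\|x\|$; a plain feed-forward chain; and EPR compared at matched error.

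The paper also starts from layer-wise truncation via Lemma~\ref{lem:spectral}. It then adds a ``multi-head specialization'' stage, asserting that gating routes each input to a head with $\|\hat W_l^{(h^*(x))}-W_l\|_F^2\le\frac{\epsilon_l}{H_l}\|W_l\|_F^2$. After that it simply states that the $\epsilon_l$ can be chosen so the errors ``compose to at most $\delta$.'' It also says the parameter and EPR claims follow from Theorem~\ref{thm:reduction} and Definition~\ref{def:epr}.

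You instead neutralize the gate ($W^g_l=0$, up-projections rescaled by $H_l$), so each module is exactly $W_{l,r_l}$. This also correctly fixes the $1/H_l$ mismatch in the softmax-gated SVD initialization. You replace the unquantified composition step by an explicit telescoping Lipschitz bound with constant $C=\prod_k\|W_k\|_2$, which gives a concrete admissible $\epsilon_l$. You also supply the $mn/(m+n)\ge\frac12\min(m,n)$ and mediant arithmetic for the global parameter ratio.

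What your route buys is that every step is checkable and nothing rests on the specialization claim. That claim is not actually available. Every head's effective map $W^{\text{down}}_lW^{\text{up}}_{l,h}$ has rank at most $r_l$, so by Theorem~\ref{thm:young} no head's Frobenius error can fall below the spectral tail $\sum_{i>r_l}\sigma_{l,i}^2$, let alone to a $1/H_l$ fraction of it.

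The paper's route is aiming to show that heads help. You neither prove nor need this, since the statement is a pure existence claim, and $H_l=1$ in fact maximizes the reduction factor.

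Your explicit constants also expose two caveats the paper's proof glosses over, both of which you rightly state. First, the admissible tail energy scales like $\delta/(L^2C^2B_x^2)$, so the forced ranks may approach $\min(m_l,n_l)$. The ``reduction'' factor then becomes $O(1/H_l)$ and is vacuous without spectral decay. Second, the EPR claim is degenerate as literally defined: the standard network reproduces $f^*$ exactly and so has infinite EPR. Your matched-error normalization is therefore a genuine repair, not a convenience.
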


This preservation stems from two complementary mechanisms:
1. \textbf{Layer-wise low-rank approximation:} Capturing the dominant spectral components of each weight matrix;
2. \textbf{Multi-head specialization:} Allowing different heads to specialize on different input subspaces.

\begin{corollary}[Universal Approximation]\label{cor:universal_}
InherNet retains the universal function approximation property of standard neural networks while using asymptotically fewer parameters. For any continuous target function, a sufficiently large InherNet network can approximate it arbitrarily well (to Approximation Error $\delta \to 0$) with significantly higher Parameter Efficiency and EPR than a standard network of comparable accuracy.
\end{corollary}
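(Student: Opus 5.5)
The plan is to derive Corollary~\ref{cor:universal_} by chaining the classical universal approximation theorem with Theorem~\ref{thm:rep-power_}, splitting the target error budget $\delta$ into two halves. Fix a continuous target $f^*$ on a compact domain (Assumption~\ref{assumption:bounded} gives $\|x\|\le B_x$, so the inputs live in a compact ball) and fix $\delta>0$.

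\textbf{Step 1 (standard approximant).} Invoke the classical universal approximation theorem for the standard architecture (e.g.\ sufficiently wide one-hidden-layer networks with a non-polynomial activation). This yields weights $\{W_l\}$ such that the standard network $f_W$ satisfies $\mathcal{E}(f_W,f^*)\le\delta/4$ in the sense of Definition~\ref{def:approx-error_}. Uniform approximation on the compact support of $\mathcal{D}$ implies this $L_2(\mathcal{D})$ bound.

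\textbf{Step 2 (inherit $f_W$).} Since $f_W\in\mathcal{F}_W$, Theorem~\ref{thm:rep-power_} applied with tolerance $\delta/4$ produces an InherNet $f_{\text{InherNet}}$ with ranks $\{r_l\}$ and heads $\{H_l\}$ satisfying $\mathcal{E}(f_{\text{InherNet}},f_W)\le\delta/4$. Lemma~\ref{lem:spectral_} and Proposition~\ref{prop:knowledge-preservation_} control the choice of $r_l$ through the spectral tail of each $W_l$. The parameter count is reduced by the factor $\Omega(\min_l \min(m_l,n_l)/(r_lH_l))$.

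\textbf{Step 3 (combine).} Use $\|a+b\|^2\le 2\|a\|^2+2\|b\|^2$ inside the expectation to get
\[
\mathcal{E}(f_{\text{InherNet}},f^*)\le 2\,\mathcal{E}(f_{\text{InherNet}},f_W)+2\,\mathcal{E}(f_W,f^*)\le\delta .
\]
Letting $\delta\to0$ gives universal approximation.

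\textbf{Step 4 (efficiency).} Compare against the standard network of comparable accuracy, namely $f_W$ itself. By Theorem~\ref{thm:reduction_}, $|\theta_{\text{InherNet}}|$ is smaller by the ratio $\rho$. Since the error is matched up to a constant factor $4$, Definitions~\ref{def:pe_} and~\ref{def:epr_} give PE and EPR larger by roughly $\rho/4$.

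The main obstacle is the phrase ``asymptotically fewer parameters.'' The reduction is only nontrivial if $r_lH_l\ll\min(m_l,n_l)$ as the approximating networks grow. That requires the spectra of the weight matrices $W_l$ from Step~1 to decay, so that the spectral-energy condition of Lemma~\ref{lem:spectral_} holds with a small rank. This is not guaranteed for arbitrary weights.

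The first remedy I would try is to choose the Step~1 approximant with low-rank structure from the outset, for instance by using width-$N$ constructions whose hidden-layer matrices have bounded rank. Failing that, I would state the efficiency claim conditionally, under the same rank-$Hr$ preservation hypothesis assumed in Theorem~\ref{thm:reduction_}.

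A secondary technical point is that errors compound across layers: Lipschitz constants of the activations multiply the per-layer perturbation errors. Theorem~\ref{thm:rep-power_} already absorbs this, so the corollary only inherits that dependence rather than having to re-prove it.
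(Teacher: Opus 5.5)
Your Steps 1--3 follow the route the paper intends. The paper gives no separate proof; the corollary is meant to drop out of density of standard networks plus Theorem~\ref{thm:rep-power_}. These steps do settle the approximation half. You do not even need the multi-head stage of Theorem~\ref{thm:rep-power_}, whose $\epsilon_l/H_l$ routing bound is only asserted (and, in Frobenius norm, cannot improve on the Eckart--Young tail of a rank-$r_l$ head). With $r_l=\min(m_l,n_l)$ and every head set to the same full SVD factor, the softmax gates sum to one and the InherNet computes $f_W$ exactly. That observation also shows where the real problem lies: the accuracy claim is trivial precisely where there is no compression, since that InherNet has more parameters than $f_W$.

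The efficiency half (``asymptotically fewer parameters'', ``significantly higher PE and EPR'') is a genuine gap. You correctly flag it but do not close it, and the paper does not close it either. The factor $\Omega(\min_l\min(m_l,n_l)/(r_lH_l))$ is a reduction only if $r_lH_l\ll\min(m_l,n_l)$ survives as the tolerance goes to $0$. That requires spectral decay of the approximating weights, which is exactly the capacity-preservation hypothesis built into Theorem~\ref{thm:reduction_}.

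Your first remedy does not supply that decay. In a one-hidden-layer approximant, $\min(m_l,n_l)$ is already the input or output dimension. A first layer of rank below the input dimension makes the standard network a function of a projection of $x$, so it cannot approximate, e.g., $\|x\|^2$; no compression is available there. In a deep approximant whose wide hidden matrices have built-in rank $w$, the same factorization is available to a standard network with layers of width $O(w)$. Any gain is then measured only against the artificially wide $f_W$, not against standard networks of comparable accuracy. What is actually provable is your fallback: a statement conditional on spectral decay of the inherited $W_l$, with the comparison made against the network being inherited.

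Step~4 also contains a slip independent of that gap. The bounds $\mathcal{E}(f_W,f^*)\le\delta/4$ and $\mathcal{E}(f_{\text{InherNet}},f^*)\le\delta$ are both upper bounds, so they do not make the errors ``matched up to a factor $4$''. $\mathcal{E}(f_W,f^*)$ may be far below $\delta/4$ (even $0$), and then the EPR ratio $\rho\,\mathcal{E}(f_W,f^*)/\mathcal{E}(f_{\text{InherNet}},f^*)$ has no lower bound of $\rho/4$. Tie the InherNet tolerance to the actual error instead: require $\mathcal{E}(f_{\text{InherNet}},f_W)\le\mathcal{E}(f_W,f^*)$ when the latter is positive. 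Your Step~3 inequality then gives $\mathcal{E}(f_{\text{InherNet}},f^*)\le 4\,\mathcal{E}(f_W,f^*)$ and hence the factor $\rho/4$. The cost is that the ranks must grow as $f_W$ becomes more accurate, which feeds straight back into the spectral-decay issue.

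Finally, Definition~\ref{def:epr_} literally takes $\inf_{f^*\in\mathcal{F}}$. This infimum is $0$ for every continuous network when $\mathcal{F}$ is the class of continuous functions. So the EPR comparison has content only once you fix the target $f^*$, as you implicitly do. PE, whose numerator is never defined, can only be compared under the same assumption as in Theorem~\ref{thm:reduction_}.
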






\begin{table*}[th!]
\centering

\caption{Comparison of top-1 Accuracy (\%) on the CIFAR-100 validation set between InherNet and previous state-of-the-art knowledge distillation methods. \textbf{Best results} are in bold, and the \underline{second-best} are underlined.}
\label{tab:cifar}
\vspace{-0.3cm}
\resizebox{\columnwidth}{!}{
\begin{tabular}{@{}c|c|ccccccc@{}}
\toprule[1.3pt]
\multirow{4}{*}{KD Type}     & \multirow{2}{*}{Teacher} & ResNet32$\times 4$ & VGG13 & WRN-40-2 & WRN-40-2 & ResNet56 & ResNet110 & ResNet110 \\
                          &                          & 79.42              & 74.64 & 75.61    & 75.61    & 72.34    & 74.31     & 74.31     \\ \cmidrule(l){2-9} 
                          & \multirow{2}{*}{Student} & ResNet8$\times 4$  & VGG8  & WRN-40-1 & WRN-16-2 & ResNet20 & ResNet32  & ResNet20  \\
                          &                          & 72.50              & 70.36 & 71.98    & 73.26    & 69.06    & 71.14     & 69.06     \\ \midrule
\multirow{5}{*}{Feature}  & CRD                      & 75.51              & 73.94 & 74.14    & 75.48    & 71.16    & 73.48     & 71.46     \\
                          & OFD                      & 74.95              & 73.95 & 74.33    & 75.24    & 70.98    & 73.23     & 71.29     \\
                          & ReviewKD                 & 75.63              & 74.84 & 75.09    & 76.12    & 71.89    & 73.89     & 71.34     \\
                          & SimKD                    & 78.08              & 74.89 & 74.53    & 75.53    & 71.05    & 73.92     & 71.06     \\
                          & CAT-KD                   & 76.91              & 74.65 & 74.82    & 75.60    & 71.62    & 73.62     & 71.37     \\ \midrule
\multirow{5}{*}{Logit}    & KD                       & 73.33              & 72.98 & 73.54    & 74.92    & 70.66    & 73.08     & 70.67     \\
                          & KD+CTKD                  & 73.39              & 73.52 & 73.93    & 75.45    & 71.19    & 73.52     & 70.99     \\
                          & DKD                      & 76.32              & 74.68 & 74.81    & 76.24    & 71.97    & 74.11     & 71.06     \\
                          & MLKD                     & 77.08              & 75.18 & 75.35    & \underline{76.63}    & 72.19    & 74.11     & 71.89     \\
                          & MLKD+Logit Std.          & \underline{78.28}
                          & \underline{75.22} & 75.56    & \textbf{76.95}    & 72.33    & \underline{74.32}     & 72.27     \\ \midrule
\multirow{2}{*}{InherNet} & Small                    & 77.57              & \textbf{75.68} & \underline{76.04}    & 76.04    & \underline{72.67}    & 74.13     & \underline{74.13}     \\ \cmidrule(l){2-9} 
                          & Large                    & \textbf{78.53}              & 75.16 & \textbf{76.39}    & 76.39    & \textbf{73.67}    & \textbf{75.88}     & \textbf{75.88}     \\ \bottomrule[1.3pt]
\end{tabular}%
}

\vspace{-0.5cm}
\end{table*}

\begin{proposition}[Diminishing Returns of Multiple Heads]\label{prop:head-diminishing_}
For a given layer with rank $r$, the marginal reduction in approximation error from adding the $(H+1)$-th head satisfies:
\begin{equation}
\mathcal{E}(r,H) - \mathcal{E}(r,H+1) \leq \frac{c}{H^2}\mathcal{E}(r,1)
\end{equation}
where $c$ is a constant depending on input distribution characteristics, and $\mathcal{E}(r,H)$ denotes the approximation error with rank $r$ and $H$ heads. This establishes that benefits from additional heads diminish at a rate of $\Omega(1/H^2)$.
\end{proposition}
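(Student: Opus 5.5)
The plan is to model $\mathcal{E}(r,H)$, the approximation error with $H$ gated heads, as a function that decreases in $H$ with a $1/H$ envelope. Once that is in place, a telescoping argument turns the envelope into the stated bound on consecutive differences.

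\textbf{Step 1: a convex-combination representation.} Fix the shared projection $W^{down}$ at rank $r$. Then the $H$-head output $\sum_h G_h(x)W^{up}_h W^{down}x$ is a gate-weighted average of $H$ rank-$r$ maps. Because the softmax gates are nonnegative and sum to one, the $H$-head class contains the $(H-1)$-head class: give the new head the same $W^{up}$ as an existing one, or drive its gate logit to $-\infty$. Hence $\mathcal{E}(r,H)$ is nonincreasing in $H$, and every marginal difference $\mathcal{E}(r,H)-\mathcal{E}(r,H+1)$ is nonnegative.

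\textbf{Step 2: an $O(1/H)$ envelope.} Write the best single-head residual as $R(x)=f^*(x)-W^{up}_\star W^{down}x$, so that $\mathbb{E}\|R\|^2=\mathcal{E}(r,1)$. Each head can correct this residual only along the part the gate routes to it. I would use a Maurey-type (Jones/Barron) sparse-approximation argument. Treat the ideal input-dependent correction as an expectation over a distribution of rank-$r$ corrections, and the $H$ heads with gates as an empirical average of $H$ samples from it. The averaging argument then gives
\begin{equation}
\mathcal{E}(r,H)-\mathcal{E}_\infty(r)\le \frac{c_0}{H}\,\mathcal{E}(r,1),
\end{equation}
where $\mathcal{E}_\infty(r)$ is the infimum over arbitrarily many heads. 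The constant $c_0$ depends on the input distribution through the variance of the routed correction relative to $\mathcal{E}(r,1)$; this is the constant $c$ in the statement.

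\textbf{Step 3: from the envelope to marginal decrements.} Let $a_H=\mathcal{E}(r,H)-\mathcal{E}_\infty(r)\ge 0$. From Steps 1 and 2 we know $a_H$ is nonincreasing and $a_H\le c_0\mathcal{E}(r,1)/H$. An envelope alone does not bound individual differences by $1/H^2$, so I would add a discrete convexity (diminishing-increments) property: $a_H-a_{H+1}$ is nonincreasing in $H$. Given that, the standard argument
\begin{equation}
\tfrac{H}{2}(a_H-a_{H+1})\le \sum_{k=\lceil H/2\rceil}^{H}(a_k-a_{k+1})\le a_{\lceil H/2\rceil}\le \tfrac{2c_0}{H}\mathcal{E}(r,1)
\end{equation}
yields $a_H-a_{H+1}\le \frac{4c_0}{H^2}\mathcal{E}(r,1)$. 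This is the claim with $c=4c_0$.

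\textbf{Main obstacle.} The hard step is justifying the diminishing-increments property, since the gated head class is not obviously convex in $H$. I would first try to prove it by a splitting argument: from an optimal $(H+1)$-head solution, merging two heads costs at most what adding a head saved at the previous step. Failing that, I would state diminishing increments as an assumption on the input distribution and absorb it into $c$, since the paper already treats $c$ as distribution-dependent.
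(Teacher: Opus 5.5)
You are right that Step 3 is the weak point, but the merge idea cannot repair it. Merging two heads of an optimal $(H+1)$-head solution costs at least $a_H-a_{H+1}$. So bounding that cost by $a_{H-1}-a_H$ is a strengthening of the convexity you need, not a route to it. Worse, diminishing increments is false in a natural instance of the model.

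Take the main-text gate $G(X)=\mathrm{softmax}(W^gX)$, which sees the layer input and has no bias. Take a linear layer target $f^*$, inputs $x\sim\mathcal{N}(0,I)$, and $z=W^{down}x$.
\begin{itemize}
\item Two heads: $G_1=1/(1+e^{-(w_1-w_2)^\top x})$, so $G_1-\tfrac12$ is odd. The output equals $\tfrac12(A_1+A_2)z+(G_1-\tfrac12)(A_1-A_2)z$, i.e.\ a single-head map plus an even function. The even part is $L^2$-orthogonal to the odd residual $f^*(x)-\tfrac12(A_1+A_2)z$. Hence $\mathcal{E}(r,2)=\mathcal{E}(r,1)$.
\item Three heads: use logits $(tv^\top x,-tv^\top x,0)$ with $\|v\|=1$. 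Then $G_3=\phi(v^\top x)$ with $\phi(s)=1/(1+2\cosh(ts))$, which is even and non-constant. Write the single-head residual as $R(x)=Ex$; the normal equations give $E(W^{down})^\top=0$, and
\[
M:=\mathbb{E}\bigl[G_3(x)\,R(x)\,z^\top\bigr]=(\gamma-\alpha)\,Ev\,(W^{down}v)^\top,\qquad \alpha=\mathbb{E}\,\phi(s),\quad \gamma=\mathbb{E}\,\phi(s)s^2,\quad s\sim\mathcal{N}(0,1).
\]
Here $\gamma<\alpha$ because $\phi$ decreases in $s^2$. Choosing $v$ with components in both $\mathrm{row}(E)$ and $\mathrm{row}(W^{down})$ makes $M\neq0$ whenever $\mathcal{E}(r,1)>0$.
\item Setting $A_1=A_2=A_\star$ and $A_3=A_\star+\epsilon M$ lowers the error by $2\epsilon\|M\|_F^2-O(\epsilon^2)$. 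So $\mathcal{E}(r,3)<\mathcal{E}(r,2)$, i.e.\ $a_1-a_2=0<a_2-a_3$.
\end{itemize}
The proposition itself survives here if $c$ is taken large; what breaks is the route through diminishing increments. Separately, the nesting in Step 1 needs a gate bias, as in the appendix's parameter count. Without one, a linear logit cannot be sent to $-\infty$ on all inputs, and duplicating a head doubles its softmax weight.

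Step 2 is not something sampling delivers; it carries the content of the claim. Telescoping the target inequality with your Step 1 monotonicity already gives $\mathcal{E}(r,H)-\mathcal{E}_\infty(r)\le\frac{c}{H-1}\mathcal{E}(r,1)$ for $H\ge2$. So the $1/H$ envelope follows from the proposition; it is not an easier intermediate.

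Maurey's lemma needs the target in the closed convex hull of a bounded dictionary, with equal-weight $H$-averages lying in the model class. Constant-weight averages of heads $x\mapsto A_hW^{down}x$ are again single heads, so any gain over $\mathcal{E}(r,1)$ must come from routing. The natural sampling reading of a softmax layer is the self-normalized average $\sum_h e^{w_h^\top x}A_hz/\sum_h e^{w_h^\top x}$. This does give an $O(1/H)$ excess if the infinite-head optimum is itself a mixture $\int e^{w^\top x}A(w)z\,d\pi(w)/\int e^{w^\top x}\,d\pi(w)$ with bounded $A(w)$ and bounded weight ratios.

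Nothing ensures that representation exists. The gain over one head comes from capturing the residual $Ex$, which is orthogonal to every linear function of $z$. Where $\|z\|$ is small this requires large up-matrices, since a gated combination with $\|A_h\|\le B$ outputs at most $B\|z\|$. So there is no bounded dictionary in general, and the ``variance of the routed correction'' you identify with $c_0$ need not be finite.

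Net effect: you have reduced the claim to two hypotheses on the data, a $1/H$ envelope and diminishing increments, and the example above shows the second can fail. The paper's own proof is no stronger. It simply posits, ``under reasonable assumptions,'' that the regions a new head can improve shrink in volume like $1/H^2$. A correct version must state such hypotheses explicitly rather than claim to derive them.
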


The theoretical results explain empirical insights in Sec.~\ref{sec:observation}: (1) rank plays the primary role in preserving teacher knowledge as formalized in Proposition~\ref{prop:knowledge-preservation_}, and (2) while multiple heads outperform a single head, adding more heads yields diminishing returns as quantified in Proposition~\ref{prop:head-diminishing_}. \textbf{Detailed theoretical proofs and extended discussions are provided in Appendix~\ref{app:all}}.

\section{Experiments}
\label{sec:experiments}


We provide all implementation details (including a detailed introduction to the datasets and baselines) of the experiments, along with the parameter sizes of the teacher and student networks and InherNet (introducing two variants, InherNet$_{\ \text{Small}}$ and InherNet$_{\ \text{Large}}$), in Appendix~\ref{sec:implementation}.


\subsection{Vision: Image Classification}
\label{sec:vision}


In this task, we conduct extensive experiments to validate the advantages of the proposed InherNet over traditional KD methods. The experimental setup is as follows:

\textbf{Datasets.} We use two widely adopted image classification datasets—CIFAR-100~\citep{krizhevsky2009learning} and ImageNet~\citep{russakovsky2015imagenet}.

\textbf{Baselines.} We compare against several mainstream KD methods, including logit-based methods: KD~\citep{hinton2015distilling}, CTKD~\citep{li2022curriculum}, DKD~\citep{zhao2022decoupled}, MLKD~\citep{jin2023multi}, and Logit Std.~\citep{sun2024logit}; as well as feature-based methods: CRD~\citep{peng2019correlation}, OFD~\citep{heo2019comprehensive}, ReviewKD~\citep{chen2021distilling}, SimKD~\citep{chen2022knowledge}, and CAT-KD~\citep{guo2023class}.

\subsubsection{Results on CIFAR-100}
\label{sec:result_cifar}
Table~\ref{tab:cifar} compares the top-1 accuracy of InherNet with previous state-of-the-art KD methods.
All methods used 240 epochs, except for MLKD for 480 epochs. The choice of 240 epochs is because all methods, including InherNet (even with faster convergence due to SVD initialization, as validated by Insight 3 in Appendix~\ref{sec:observations} and Figure~\ref{fig:loss}), converge within this period, except for MLKD. With 240 epochs, MLKD has not yet converged; using ResNet56 as the teacher and ResNet20 as the student, MLKD even achieves lower accuracy than the student's. This result is consistent with previous studies~\cite{jin2023multi, sun2024logit}. In contrast to such unfair evaluations, we exclude the unfair baselines in our other analyses.
The main findings are as follows:
\begin{itemize}[leftmargin=*]
    \item{Among all baselines, the recently proposed MLKD and Logit Std. achieve the best performance. This may be due to the unfair comparison settings in previous works~\citep{jin2023multi, sun2024logit}, where these methods are trained for more epochs, as they converge more slowly and perform poorly with fewer epochs (\textit{e.g.}, with 240 epochs, MLKD only achieves 60.05\% accuracy when using ResNet56 as teacher and ResNet20 as student). In contrast, other KD methods use fewer epochs but show limited improvement.}
    \item{Under fair experimental settings (excluding MLKD and Logit Std.), a significant performance gap remains between student and teacher networks. This is not only due to the limited capacity and architecture of the student network, but also because of the underlying objective of KD—using the teacher network as a reference to guide the student network toward its performance limit.}
    \item{The proposed InherNet strikes a balance between efficiency and effectiveness. With fewer epochs, the lightweight InherNet$_{\ \text{Small}}$ achieves competitive performance compared to the state-of-the-art KD method Logit Std., while the slightly larger InherNet$_{\ \text{Large}}$ consistently outperforms other methods and significantly narrows the performance gap with the teacher network. We attribute this to the proposed extended SVD initialization, which inherits most of the teacher's forward knowledge, and the asymmetric MoE inheritance structure, which enables robust generalization during training. Moreover, \textbf{InherNet empirically shows faster convergence} (Sec.~\ref{sec:observation}) and \textbf{better parameter efficiency}, backed by theoretical convergence guarantees (Sec.~\ref{sec:convergence}) and parameter efficiency proofs (Sec.~\ref{sec:parameter}).}
\end{itemize}


\subsubsection{Results on ImageNet}

In Apppendix~\ref{sec:results}, we provide experimental results on ImageNet, showing that InherNet$_{\ \text{Large}}$ continues to demonstrate leading performance on the large-scale dataset, while InherNet$_{\ \text{Small}}$ achieves a good balance between efficiency and effectiveness. All methods are trained for 100 epochs.

\subsection{Language: GLUE Benchmark}
\label{sec:language}

In this task, we evaluate InherNet's performance on multiple text tasks. The experimental setup is as follows:

\textbf{Datasets.} To remain consistent with prior research~\citep{raffel2019exploring}, we train and evaluate InherNet's performance on the GLUE benchmark~\citep{wang2018glue}.


\textbf{Baselines.} Our experiments are based on the T5 model~\citep{raffel2019exploring} implemented in HuggingFace~\citep{wolf-etal2020transformers}. Due to the lack of widely known and reproducible T5-based KD methods, we use the traditional KD method~\citep{hinton2015distilling}, with a teacher-student setup of T5-Base/T5-Small with parameters 222M/60M.

Beyond the T5 experiments, we also apply InherNet to a larger decoder-only LM, LLaMA-2-7B~\cite{raffel2019exploring,touvron2023llama}. For such autoregressive LMs, conventional token-level KD is difficult to deploy due to the $32$K-dimensional vocabulary softmax, strict teacher-forcing requirements, and tokenizer mismatches between teacher and student, so we do not include standard KD baselines here. Instead, we compare a self-distilled LLaMA-2-7B baseline (teacher and student sharing the same weights, offering no compression) with an InherNet model constructed from the same teacher. As detailed in Appendix~\ref{app:inhernet-llama}, the resulting $4.2$B-parameter model slightly outperforms the $7$B teacher on GSM8K while maintaining competitive performance on MMLU-style evaluations, indicating that our inheritance mechanism scales to large Transformer LMs without relying on KD.

Table~\ref{tab:glue} shows the performance of InherNet and the distillation method on GLUE tasks. As shown in the table, there is still a performance gap between the distilled student network and the teacher network. However, the InherNet method we proposed narrows this gap through explicit inheritance, which distinguishes it from the traditional implicit learning approach used in KD.

\begin{table*}[t]
\centering
\caption{The performance (\%) of InherNet and the distillation method on GLUE tasks. For MRPC, QQP, SST-2, MNLI, RTE, and QNLI, we report Accuracy. For CoLA, we report the Matthews correlation coefficient. For STS-B, we report Pearson and Spearman correlation coefficients.}
\label{tab:glue}
\vspace{-0.3cm}
\resizebox{0.85\columnwidth}{!}{
\begin{tabular}{@{}c|cccccccc@{}}
\toprule[1.2pt]
Method      & MRPC  & QQP   & SST-2 & MNLI  & RTE   & QNLI  & COLA  & STS-B       \\ \midrule
T5-Base     & 89.21 & 86.54 & 90.71 & 54.93 & 77.98 & 90.13 & 57.00 & 89.04 / 88.88 \\ \midrule
T5-Small    & \underline{88.38} & 81.05 & 87.78 & 53.67 & 72.71 & 87.97 & 48.12 & 85.55 / \underline{86.50} \\
T5-Small+KD & 88.23 & \textbf{84.92} & 89.01 & 53.81 & \textbf{74.31} & \underline{88.76} & \underline{49.31} & \textbf{88.05} / \textbf{87.85} \\ \midrule
InherNet    & \textbf{88.74} & 84.69 & \textbf{89.07} & \textbf{56.34} & \underline{72.93} & \textbf{90.29} & \textbf{50.63} & \underline{86.70} / 86.36 \\ \bottomrule[1.2pt]
\end{tabular}%
}

\vspace{-0.3cm}
\end{table*}

\subsection{Vision $\Leftrightarrow$ Language: Multi-modal Evaluation}
\label{sec:vision-language}


In this task, we validate the effectiveness of InherNet in multimodal scenarios and compare it with the recent CLIP-KD~\citep{yang2024clip}. The experimental setup is as follows:

\textbf{Dataset.} We use Conceptual Captions 3M (CC3M)~\citep{sharma2018conceptual} for visual-language pretraining. Additionally, we use the ImageNet validation set for zero-shot classification, with the evaluation protocol consistent with CLIP-related works~\citep{wei2022icar, li2023scaling}.

\textbf{Baselines.} In this task, ResNet-101 serves as the teacher network, while ResNet-18~\citep{he2016deep}, EfficientNet-B0~\citep{tan2019efficientnet} and MobileNetV3~\citep{howard2019searching} are used as students.

Table~\ref{tab:cross_model} shows the cross-modal retrieval performance on the CC3M validation set and zero-shot classification performance on ImageNet validation set for InherNet and its student network distilled through CLIP-KD trained on the CC3M dataset. The proposed InherNet generally outperforms other baseline methods in various multimodal tasks, with its performance consistently significantly higher than that of the teacher network, further validating the effectiveness of InherNet in multimodal scenarios.
\begin{table*}[t]
\centering
\caption{Comparison of cross-modal retrieval on the CC3M validation set and zero-shot classification on ImageNet validation set performance (\%) between InherNet and the student networks distilled with CLIP-KD trained on CC3M dataset.}
\label{tab:cross_model}
\vspace{-0.3cm}
\resizebox{0.75\columnwidth}{!}{
\begin{tabular}{@{}c|ccc|ccc|cc@{}}
\toprule[1.2pt]
\multirow{2}{*}{Method} & \multicolumn{3}{c|}{I2T} & \multicolumn{3}{c|}{T2I} & \multicolumn{2}{c}{Classification} \\ \cmidrule(l){2-9} 
                        & R@1    & R@5    & R@10   & R@1    & R@5    & R@10   & top-1        & top-5       \\ \midrule
ResNet-101              & 30.58  & 53.95  & 63.49  & 29.31  & 54.13  & 63.31  & 15.70            & 32.75           \\ \midrule
ResNet-18               & 31.03  & 56.41  & 65.82  & 30.78  & 55.74  & 65.35  & 15.93            & 33.83           \\
EfficientNet-B0         & \textbf{33.65}  & \textbf{58.72}  & \textbf{68.28}  & \underline{32.65}  & \underline{58.21}  & \underline{67.92}  & \underline{17.30}            & \underline{35.75}           \\
MobileNetV3             & 29.07  & 53.43  & 63.59  & 28.10  & 53.11  & 63.35  & 14.59            & 31.68           \\ \midrule
InherNet                & \underline{32.17}  & \underline{57.17}  & \underline{67.82}  & \textbf{33.01}  & \textbf{59.88}  & \textbf{68.72}  & \textbf{17.39}            & \textbf{36.65}           \\ \bottomrule[1.2pt]
\end{tabular}%
}

\vspace{-0.5cm}
\end{table*}

\subsection{Insights on InherNet}
\label{sec:observation}

Based on the image classification task, we conduct analytical experiments under the teacher-student setup of ResNet56/ResNet20. We provide experimental details and results for all insights in Appendix~\ref{sec:observations}. The observed insights are as follows.


\textbf{Insight 1.} \textit{Distillation benefits small-scale InherNet, but harms the performance of large-scale InherNet.} It is aligned with our theory: as the rank $r$ grows, the InherNet $f_r$ converges in function space toward the teacher $f_W$ in Proposition~\ref{prop:knowledge-preservation_}. In the high-rank regime, the task loss alone can drive it to that generalize slightly better than the teacher. In contrast, the KD loss penalizes any deviation from the teacher's logits, acting as an overly strong regularizer. More details are shown in Appendix~\ref{app:distill-large-r}.

\textbf{Insight 2.} \textit{Rank significantly affects model performance; using multiple expert heads performs better than a single head.}





\textbf{Insight 3.} \textit{Compared to traditional distillation methods, InherNet converges significantly faster.}



\subsection{Ablation Study}
\label{sec:ablation}

\begin{wraptable}{r}{0.6\textwidth} 
    \centering
    \caption{Ablation study of different variants on InherNet.}
    \label{tab:ablation}
    \vspace{-0.3cm}
    \resizebox{0.55\columnwidth}{!}{
    \begin{tabular}{@{}c|cccc@{}}
    \toprule[1.1pt]
    Variants  & ResNet32$\times$4 &  WRN-40-2 & ResNet56 & ResNet110 \\ \midrule
    w.o. svd  & 76.68           & 75.42     & 69.35    & 74.13     \\
    w.o. gate & 78.17           & 76.08     & 73.22    & 75.64     \\
    w. sym.   & 77.92           & 75.98     & 73.28    & 75.45     \\ \midrule
    InherNet  & \textbf{78.53}  & \textbf{76.39}  & \textbf{73.67}  & \textbf{75.88}  \\ \bottomrule[1.1pt]
    \end{tabular}%
    }
    \vspace{-0.4cm} 
\end{wraptable}

We construct three variants of InherNet$_{\ \text{Large}}$ with the teacher network ResNet56 on CIFAR-100 to investigate the critical roles of different components. Specifically, ``w.o. svd'' denotes the absence of SVD initialization, "w.o. gate" removes the gating mechanism, and "w. sym." adopts a symmetric structure (with two branches and two expert heads, similar to LoRA+MoE~\citep{liu2024moemeetsllmsparameter}). The experimental results are shown in Table~\ref{tab:ablation}, from which we observe the following: (1) SVD initialization contributes the most to performance improvement. This is not only because it allows the network to inherit a substantial amount of beneficial knowledge from the teacher network, but also because it stabilizes training, accelerates convergence, and helps the model find the optimal solution more quickly and accurately, as illustrated in Insight 3 of Sec.~\ref{sec:observation}. (2) The gating mechanism enables effective dynamic routing in most cases, and it is especially crucial for lightweight inherited networks. (3) Under the same parameter size, the symmetric architecture underperforms compared to asymmetric ones. This is because the asymmetric architecture introduces a structural inductive bias~\citep{parton2020mixtures}, which enhances the diversity among experts.





\vspace{-0.2cm}
\section{Related Work}
We summarize previous research of knowledge distillation (KD)~\citep{hinton2015distilling} and low-rank decomposition~\citep{denton2014exploiting}; extensive discussion and comprehensive comparisons appear in Appendix~\ref{sec:related_work}.

\vspace{-0.2cm}
\section{Conclusion}
\label{sec:conclusion}
\vspace{-0.1cm}

This paper introduces InherNet, a novel approach beyond student that directly inherits both the structure and knowledge of teacher networks. By applying low-rank approximation and leveraging singular value decomposition, InherNet efficiently compresses models while maintaining the expressiveness and performance of the teacher network. Our method demonstrates faster convergence compared to traditional student networks and offers a more stable and effective solution for model compression.

\section*{Acknowledgments}
This research was partially supported by grants from the ``Pioneer'' and ``Leading Goose'' R\&D Program of Zhejiang under Grant No. 2025C02022, the National Natural Science Foundation of China (No.62307032), and Shanghai Rising-Star Program (23QA1409000). Additionally, this work was partially supported by the UK Engineering and Physical Sciences Research Council (EPSRC) [EP/W524694/1].

\newpage
\section{Ethics Statement}
This work is guided by the ethical principles outlined in the ICLR Code of Ethics, prioritizing responsible research practices, scientific rigor, and the broader well-being of society. We recognize the global community of stakeholders in machine learning research and aim to ensure our contributions are beneficial to society while minimizing potential risks. Our research adheres to high standards of integrity, transparency, and reproducibility, with methods and results presented honestly and accurately. We have carefully considered the wider implications of our work, addressing potential concerns related to privacy, safety, and fairness, and have consulted with domain experts to mitigate any unintended consequences. All data used in this study has been handled in line with ethical approvals, safeguarding privacy and confidentiality. We are committed to promoting inclusivity, avoiding bias, and ensuring that our findings are both accessible and socially responsible.

\bibliography{iclr2026_conference}
\bibliographystyle{iclr2026_conference}
\newpage
\section{Appendix}
\appendix
\label{sec: appendix}

We present the following additional information to include and clarify details that could not be fully covered in the main text.
\begin{tcolorbox}[colback=lightpink, colframe=darkred, title=\textbf{Content}]
\begin{itemize}[leftmargin=*]
\item \textbf{\ref{sec:asymmetric}~Fixed Asymmetric Design}
\begin{itemize}
    \item \ref{sec:evidence}~Empirical Evidence
    \item \ref{sec:support}~Theoretical Support
\end{itemize}
\item \textbf{\ref{app:all}~Theoretical Analysis Supplement} 
\begin{itemize}
    \item \ref{app:algo}~Algorithm Overview (Detailed Procedure)
    \item \ref{app:convergence}~Convergence Guarantees
    \item \ref{app:parameter}~Parameter Efficiency
\end{itemize}
\item \textbf{\ref{sec:exp}~Experimental Details}
\begin{itemize}
    \item \ref{sec:implementation}~Implementation Details 
    \item \ref{sec:results}~Results on ImageNet 
    \item \ref{app:inhernet-llama}~Additional Experiments on Large Transformer Language Models 
    \item \ref{sec:observations}~Experimental Details of Insights on InherNet  
\end{itemize}
\item {\textbf{\ref{sec:related_work}~Related Work}
    \begin{itemize}
        \item \ref{sec:kd}~Knowledge Distillation
        \item \ref{sec:lora}~Low-Rank Decomposition
    \end{itemize} }
\item {
\textbf{\ref{sec:LLM}~The Use of Large Language Models}
}

\end{itemize}
\end{tcolorbox}

\section{Fixed Asymmetric Design}
\label{sec:asymmetric}

\subsection{Empirical Evidence}
\label{sec:evidence}

We attempt to invert InherNet into InherNet$^{\circlearrowleft}$ (\textit{i.e.}, multiple single-dimensionality reduction matrices followed by a dimensionality expansion matrix). We compare the performance of InherNet$^{\circlearrowleft}$ and InherNet across various tasks, as shown in Tables~\ref{tab:cifar_inverse}, \ref{tab:glue_inverse}, and~\ref{tab:cross_model_inverse}. From the results, we observe that InherNet$^{\circlearrowleft}$ consistently underperforms compared to InherNet across all tasks. The performance gap is especially pronounced in text-related tasks, which may be attributed to the fact that, in such tasks, the linear layers adapted for inheritance more effectively preserve and replicate the knowledge from the teacher network. In contrast, convolutional neural networks, due to their higher dimensionality and the necessity of reshaping operations, may suffer greater information loss. We plan to explore further optimization strategies for inheriting convolutional neural networks as a direction for future work.


\begin{table*}[th!]
\centering
\caption{Comparison of top-1 Accuracy (\%) on the CIFAR-100 validation set between InherNet$^{\circlearrowleft}$ and InherNet.}
\vspace{-0.3cm}
\resizebox{\columnwidth}{!}{
\begin{tabular}{@{}c|c|ccccccc@{}}
\toprule[1.2pt]
\multirow{4}{*}{Method}     & \multirow{2}{*}{Teacher} & ResNet32$\times 4$ & VGG13 & WRN-40-2 & WRN-40-2 & ResNet56 & ResNet110 & ResNet110 \\
                          &                          & 79.42              & 74.64 & 75.61    & 75.61    & 72.34    & 74.31     & 74.31     \\ \cmidrule(l){2-9} 
                          & \multirow{2}{*}{Student} & ResNet8$\times 4$  & VGG8  & WRN-40-1 & WRN-16-2 & ResNet20 & ResNet32  & ResNet20  \\
                          &                          & 72.50              & 70.36 & 71.98    & 73.26    & 69.06    & 71.14     & 69.06     \\ \midrule
        \multirow{2}{*}{InherNet$^{\circlearrowleft}$} & Small                    & 75.19              & \textbf{75.84} & 74.17    & \textbf{76.53}    & 71.39    & 71.68     & 71.68     \\ \cmidrule(l){2-9} 
                          & Large                    & \textbf{78.79}              & 71.26 & 75.24    & 76.17    & 73.38    & 74.95     & 74.95     \\ \midrule
\multirow{2}{*}{InherNet} & Small                    & \textbf{77.57}              & 75.68 & \textbf{76.04}    & 76.04    & \textbf{72.67}    & \textbf{74.13}     & \textbf{74.13}     \\ \cmidrule(l){2-9} 
                          & Large                    & 78.53              & \textbf{75.16} & \textbf{76.39}    & \textbf{76.39}    & \textbf{73.67}    & \textbf{75.88}     & \textbf{75.88}     \\ \bottomrule[1.2pt]
\end{tabular}%
}
\label{tab:cifar_inverse}
\vspace{-0.2cm}
\end{table*}

\begin{table*}[t]
\centering

\caption{The performance (\%) of InherNet$^{\circlearrowleft}$ and InherNet on GLUE tasks.}
\label{tab:glue_inverse}
\vspace{-0.3cm}
\resizebox{0.8\columnwidth}{!}{
\begin{tabular}{@{}c|cccccccc@{}}
\toprule[1.2pt]
Method      & MRPC  & QQP   & SST-2 & MNLI  & RTE   & QNLI  & COLA  & STS-B       \\ \midrule
T5-Base     & 89.21 & 86.54 & 90.71 & 54.93 & 77.98 & 90.13 & 57.00 & 89.04 / 88.88 \\ \midrule
InherNet$^{\circlearrowleft}$    & 87.36 & 83.50 & 87.69 & 56.27 & 72.86 & 89.74 & 50.07 & 86.55 / 85.73 \\ \midrule
InherNet    & \textbf{88.74} & \textbf{84.69} & \textbf{89.07} & \textbf{56.34} & \textbf{72.93} & \textbf{90.29} & \textbf{50.63} & \textbf{86.70} / \textbf{86.36} \\ \bottomrule[1.2pt]
\end{tabular}%
}
\vspace{-0.2cm}
\end{table*}

\begin{table*}[t]
\centering
\caption{Comparison of cross-modal retrieval on the CC3M validation set and zero-shot classification on ImageNet validation set performance (\%) between InherNet$^{\circlearrowleft}$ and InherNet.}
\vspace{-0.3cm}
\label{tab:cross_model_inverse}
\resizebox{0.75\columnwidth}{!}{
\begin{tabular}{@{}c|ccc|ccc|cc@{}}
\toprule[1.2pt]
\multirow{2}{*}{Method} & \multicolumn{3}{c|}{I2T} & \multicolumn{3}{c|}{T2I} & \multicolumn{2}{c}{Classification} \\ \cmidrule(l){2-9} 
                        & R@1    & R@5    & R@10   & R@1    & R@5    & R@10   & top-1        & top-5       \\ \midrule
ResNet-101              & 30.58  & 53.95  & 63.49  & 29.31  & 54.13  & 63.31  & 15.70            & 32.75           \\ \midrule
InherNet$^{\circlearrowleft}$                & 28.76  & \textbf{57.46}  & 64.17  & 30.39  & 59.25  & 65.81  & 17.24            & 36.02           \\ \midrule
InherNet                & \textbf{32.17}  & 57.17  & \textbf{67.82}  & \textbf{33.01}  & \textbf{59.88}  & \textbf{68.72}  & \textbf{17.39}            & \textbf{36.65}           \\ \bottomrule[1.2pt]
\end{tabular}%
}
\vspace{-0.5cm}
\end{table*}

\subsection{Theoretical Support}
\label{sec:support}


A pretrained teacher network's weight $W \in \mathbb{R}^{m \times n}$ can be optimally approximated as $W \approx U_r \Sigma_r V_r^\top$, derived from the SVD, where $r \ll \min(m,n)$. A critical design choice arises when integrating this low-rank factorization with MoE principles to handle heterogeneous data distributions. We analyze two possible asymmetric architectures:
\begin{itemize}[leftmargin=*, topsep=2pt, itemsep=2pt]
    \item \textbf{One-Down-Many-Ups}: A single, shared dimension-reducing projection $W^{\text{down}}$ feeds into multiple specialized dimension-increasing expert heads $\{W^{\text{up}}_h\}_{h=1}^H$, as shown in Figure~\ref{fig:method}.
    \item \textbf{Many-Downs-One-Up}: Multiple specialized projections $\{W^{\text{down}}_h\}_{h=1}^H$ are aggregated and passed through a single shared reconstruction layer $W^{\text{up}}$.
\end{itemize}
We found that the first configuration is theoretically superior, and we ground our analysis in the Information Bottleneck (IB) principle~\citep{koch1987shifts,alemi2016deep,litheory}.

The IB principle frames learning as a problem of compression. Given an input $X$ and a target output $Y$, the goal is to learn a compressed representation, or bottleneck, $Z$, that is maximally informative about $Y$ while being minimally informative about $X$. This trade-off is formalized by minimizing the Lagrangian:
\begin{equation}
    \mathcal{L}_{\text{IB}}(p(z|x)) = I(X; Z) - \beta I(Z; Y)
    \label{eq:ib_lagrangian}
\end{equation}
where $I(\cdot; \cdot)$ denotes mutual information and $\beta$ is a Lagrange multiplier controlling the trade-off between compression ($I(X;Z)$) and prediction ($I(Z;Y)$). The optimal representation $Z$ is thus a minimal sufficient statistic of $X$ for predicting $Y$.

\subsubsection{Information-Theoretic Analysis of Asymmetric Architectures}

We now apply the IB framework to analyze the two competing architectures. Let the input features be $X$ and the final output be $Y$.

\paragraph{The One-Down-Many-Ups Configuration as a Unified Bottleneck}

This architecture, as defined in InherNet, is given by:
\begin{equation}
    Y = \sum_{h=1}^{H} G_h(X) \cdot W^{\text{up}}_h\left(W^{\text{down}}(X)\right)
    \label{eq:one_down}
\end{equation}
Let us define the bottleneck variable as the output of the shared projection: $Z \triangleq W^{\text{down}}(X)$. The IB objective for this architecture is to learn the parameters of $W^{\text{down}}$ (which define the mapping $p(z|x)$) that minimize $\mathcal{L}_{\text{IB}}$ from Eq.~\ref{eq:ib_lagrangian}.

\paragraph{Theoretical Analysis.}
The one-down-many-ups architecture imposes a single shared bottleneck: all information from $X$ to $Y$ must pass through the same low-dimensional $Z$. This means $Z$ must serve as the representation for the \emph{entire mixture} of $H$ experts. Consequently, to maximize the predictive information $I(Z; Y)$, $Z$ is forced to encode only those features of $X$ that are salient for predicting $Y$ across all expert domains, while filtering out any spurious or task-specific details~\citep{ye2024spurious,zhou2025collaborative,zhou2025disentangled,lv2025grasp,zhou2025multimodal}. In effect, the model learns to focus on the joint task distribution $p(y|x) = \sum_{h=1}^H p(h|x)\,p(y|x,h)$, where $p(h|x)$ is given by the gating function $G_h(X)$. Since the architecture essentially forms a Markov chain $X \to Z \to Y$~\citep{norris1998markov}, the Data Processing Inequality guarantees that $I(Z; Y) \le I(X; Y)$. The IB objective drives $Z$ to approach this upper bound on predictive power while minimizing $I(X; Z)$, yielding a minimal sufficient representation of $X$.

This unified bottleneck encourages $Z$ to become a robust, canonical representation of shared knowledge that all experts can rely on. Each expert head $W^{\text{up}}_h$ then performs a simpler, decoupled task: mapping the common $r$-dimensional representation $Z$ to a specialized output subspace in $\mathbb{R}^n$. Notably, although $Z$ has dimension $r$, the one-down-many-ups design is not limited to a single rank-$r$ mapping. Because each $W^{\text{up}}_h$ can independently span an output subspace of dimension up to $r$, the combined model can cover the union of these subspaces. Through gating, different inputs activate different experts, allowing the overall output $Y$ to lie in a rich space of dimension up to $rH$, far greater than any single low-rank projector could achieve. The architecture dedicates most of its parameters to the expert-specific “up” transformations, thereby boosting expressiveness where needed while keeping the shared encoder compact.

Another major advantage of the unified bottleneck is in gradient flow during training. The error gradients from all expert outputs are propagated back into the single projection $W^{\text{down}}$. This shared pathway means that every expert's feedback refines the same encoder $W^{\text{down}}$, aligning the experts on learning a common representation. Such unified credit assignment avoids conflicting updates—improvements for one expert cannot come at the expense of degrading the shared features for another~\citep{samejima2003inter,mu2025comprehensive}. This synergy makes optimization more stable and efficient. Besides, our use of SVD-based initialization provides an informative starting subspace for $Z$, aligned with the directions of maximal variance and information in the teacher. This initialization further accelerates convergence and helps $Z$ capture the most relevant features of $X$ in Sec.~\ref{sec:convergence}.

\paragraph{The Many-Downs-One-Up Configuration as Competing Bottlenecks}

The inverse architecture is formulated as:
\begin{equation}
    Y = W^{\text{up}}\left(\sum_{h=1}^{H} G_h(X) \cdot W^{\text{down}}_h(X)\right)
    \label{eq:many_downs}
\end{equation}
Here, we have $H$ distinct bottleneck variables, $Z_h \triangleq W^{\text{down}}_h(X)$, which are aggregated to form an intermediate representation, $Z_{\text{agg}} \triangleq \sum_{h=1}^{H} G_h(X) Z_h$, before the final reconstruction.

\paragraph{Theoretical Analysis.}
The many-downs-one-up configuration is fundamentally limited from an information-theoretic perspective. According to the Data Processing Inequality, for any Markov chain $A \to B \to C$~\citep{norris1998markov}, we have $I(A; C) \le I(A; B)$. In our case, the processing chain of interest is:
\begin{equation}
    (Z_1, \dots, Z_H) \to Z_{\text{agg}} \to Y
\end{equation}
Therefore, $I((Z_1, \dots, Z_H); Y) \ge I(Z_{\text{agg}}; Y)$. The aggregation $Z_{\text{agg}} = \sum_{h=1}^H G_h(X)\, Z_h$ is a \emph{lossy} operation that conflates information from the experts. It mixes specialized features from different expert channels into a single vector before reconstruction, making it impossible for the shared $W^{\text{up}}$ layer to discern the contribution of each expert. Moreover, this architecture imposes a strict capacity bottleneck: all expert outputs must funnel through the same rank-$r$ reconstruction matrix $W^{\text{up}}$. As $W^{\text{up}}$ can only produce outputs in an $r$-dimensional subspace of $\mathbb{R}^n$, the final prediction $Y$ is confined to a fixed low-dimensional subspace regardless of how diverse the individual $Z_h$ are. In contrast to the one-down-many-ups design, here the presence of $H$ distinct projections $W^{\text{down}}_h$ does not increase the overall representational rank—any benefit of having multiple experts is squandered by the single shared up-projection.

Training this architecture also poses a severe credit assignment problem~\citep{fedus2022switch,zhang2025more}. The single reconstruction layer $W^{\text{up}}$ must learn to invert a complex mixture $Z_{\text{agg}}$ back to $Y$ without access to the individual expert outputs $Z_h$. During backpropagation, the gradient of the loss with respect to $Z_{\text{agg}}$ is shared by all experts. By the chain rule one can show $\frac{\partial \mathcal{L}}{\partial Z_h} = G_h(X) \frac{\partial \mathcal{L}}{\partial Z_{\text{agg}}}$ for each expert $h$. Thus, each $W^{\text{down}}_h$ only receives a fraction $G_h(X)$ of the total feedback signal, weighted by its gating activation. When multiple experts are active, their parameter updates become entangled and often conflicting, since the model cannot clearly know which expert was responsible for which aspect of the prediction error. This ambiguity in credit assignment leads to unstable optimization and hampers the learning of effective specialized representations. In practice, the easiest path for the model is to collapse the experts into an effectively single expert to eliminate the ambiguity. For example, it drives all $Z_h$ to become nearly identical or favor one expert exclusively, so that $W^{\text{up}}$ deals with a single dominant bottleneck. Such outcomes defeat the purpose of having $H$ experts in the first place, confirming the fundamental difficulty of the many-downs-one-up design.

\subsubsection{Conclusion of Architectural Design}

The ``one-down-many-ups'' architectural design, as employed in Figure~\ref{fig:method}, is not just empirically successful but also theoretically sound when analyzed through the lens of the Information Bottleneck principle. It creates a unified optimization objective that promotes the learning of a minimal, sufficient, and shared representation. In contrast, the ``many-downs-one-up'' structure introduces information loss through premature aggregation and creates a challenging credit assignment problem, hindering effective specialization. Therefore, we conclude that enforcing a single, shared information bottleneck before expert-level specialization is a superior design principle for building InherNet.

\section{Theoretical Analysis Supplement}
\label{app:all}

\subsection{Algorithm Overview (Detailed Procedure)}
\label{app:algo}


Algorithm~\ref{alg:inhernet} details the complete procedure for implementing InherNet, our SVD-Driven Neural Network Inheritance method. The algorithm begins by performing a truncated SVD on a pretrained teacher weight matrix, extracting the most significant singular components to form a low-rank approximation. This factorization is then used to initialize two sets of projection layers: the backbone branch, responsible for lowering the dimensionality, and multiple expert heads, which are designed in an asymmetric and MoE style to reconstruct and enrich the output. An adaptive gating mechanism dynamically aggregates the outputs from the expert heads, ensuring the inheritance of the teacher's structural and semantic information. This procedure enables efficient knowledge transfer, faster convergence, and improved parameter efficiency in training.

\begin{algorithm}[!htbp]
    \caption{InherNet: SVD-Driven Neural Network Inheritance}
    \label{alg:inhernet}
    \begin{algorithmic}[1]
        \REQUIRE Pretrained teacher weight \(W \in \mathbb{R}^{m \times n}\), rank \(r\), number of heads \(H\), input \(X\)
        \ENSURE Output \(Y\)

        \STATE Perform truncated SVD: \(W \approx U_r\Sigma_r V_r^\top\).
        \STATE Set backbone branch: \(W^{down}(\cdot)\leftarrow U_r \Sigma_r^{1/2}\).
        \FOR{\(h=1,2,\dots,H\)}
            \STATE Set \textit{h}-th head: \(W^{up}_{h}(\cdot)\leftarrow \frac{1}{H} \Sigma_r^{1/2} V_r^\top\).
        \ENDFOR

        \STATE Compute backbone branch output: \(X_r = W^{down}(X)\).
        \FOR{\(h=1,2,\dots,H\)}
            \STATE Compute expert head outputs: \(Z^{(h)} = W^{up}_h(X_r)\).
        \ENDFOR

        \STATE Compute gating weights: \(G(X)=\text{softmax}\left(W^g(X)\right)\).

        \STATE Aggregate outputs dynamically: \(Y=\sum_{h=1}^{H}G_h(X)\cdot Z^{(h)}\)

        \RETURN \(Y\)
    \end{algorithmic}
\end{algorithm}

\subsection{Convergence Guarantees}
\label{app:convergence}
We now provide a detailed convergence analysis for InherNet, highlighting how its low-rank factorization, multi-head gating, and SVD-based initialization affect optimization under SGD. Despite the structural compression, we will show that InherNet achieves convergence rates on par with standard neural networks under mild conditions. We begin by expanding on the assumptions used in Sec.~\ref{sec:convergence}, then derive how InherNet's architecture influences gradient descent, and finally present convergence guarantees alongside the impact of the adaptive gating mechanism.

\subsubsection{Gradient Dynamics under InherNet Parameterization}

Recall that an InherNet module is a low-rank, multi-head network with an input-dependent gating mechanism. Concretely, the module's output can be written as a weighted sum of $H$ "heads":

\begin{equation}
f_{\theta}(x) = \sum_{h=1}^H G_h(x;\theta_g) \cdot W^{\text{up}}_h(W^{\text{down}}(x)),
\end{equation}

where $W^{\text{down}}$ is a rank-$r$ projection applied to input $x$, each $W^{\text{up}}_h$ is the $h$-th head's weight matrix (mapping the $r$-dimensional projected input to the output space of dimension $d'$), and $G_h(x;\theta_g)$ is the gating weight for head $h$ (with gating network parameters $\theta_g$). By construction, $\sum_{h=1}^H G_h(x;\theta_g) = 1$ for any $x$. Let $\theta_h$ denote the parameters of the $h$-th head and $\theta_g$ the parameters of the gating mechanism, so that $\theta = \{\theta_1,\dots,\theta_H,\theta_g\}$ represents all model parameters.

\begin{lemma}[Gradient Decomposition]
\label{lemma:gradient-decomposition_}
Under the InherNet parameterization, the gradient of the loss splits naturally into contributions from each head and from the gating network. In particular, for a single data point $(x,y)$ with loss $\ell(f_\theta(x),y)$, the total gradient satisfies:
\begin{equation}
\begin{aligned}
\nabla_\theta \ell\!\bigl(f_\theta(x),y\bigr)
&= \sum_{h=1}^H G_h\!\bigl(x;\theta_g\bigr)\,\nabla_{\theta_h}\ell\!\bigl(f_\theta(x),y\bigr) \\
&\quad + \sum_{h=1}^H \Delta_h\,\nabla_{\theta_g} G_h\!\bigl(x;\theta_g\bigr),
\end{aligned}
\end{equation}
where $\nabla_{\theta_h}\ell$ is the gradient with respect to the parameters of head $h$, and $\Delta_h = \frac{\partial \ell(f_\theta(x),y)}{\partial f_h(x)}$ is the scalar "error signal" for head $h$ (the partial derivative of the loss with respect to the output of head $h$, where $f_h(x) = W^{\text{up}}_h(W^{\text{down}}(x))$).
\end{lemma}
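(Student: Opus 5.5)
The plan is a direct application of the multivariate chain rule to the composite map $\theta \mapsto f_\theta(x) \mapsto \ell(f_\theta(x),y)$. The main work is fixing the notation so that each symbol in the statement has a precise meaning. Write $g \triangleq \nabla_f \ell(f,y)\big|_{f=f_\theta(x)} \in \mathbb{R}^{d'}$ for the loss gradient at the module output. The output is the bilinear combination $f_\theta(x)=\sum_{h} G_h(x;\theta_g)\, f_h(x)$ with $f_h(x)=W^{\text{up}}_h(W^{\text{down}}(x))$. I will partition the parameters into three disjoint blocks: the head-specific parameters $W^{\text{up}}_h$, the shared parameters $W^{\text{down}}$, and the gating parameters $\theta_g$. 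I will then compute the gradient block by block.

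\textbf{Step 1 (head and shared parameters).} For the head-specific block, only the $h$-th summand depends on $W^{\text{up}}_h$, so the chain rule gives $\nabla_{W^{\text{up}}_h}\ell = G_h(x;\theta_g)\, J_{W^{\text{up}}_h}f_h(x)^\top g$. For the shared block, every summand depends on $W^{\text{down}}$, so $\nabla_{W^{\text{down}}}\ell=\sum_h G_h(x;\theta_g)\, J_{W^{\text{down}}}f_h(x)^\top g$. I will therefore read $\nabla_{\theta_h}\ell$ in the statement as the ``head-routed'' gradient $J_{\theta_h}f_h(x)^\top g$, with $\theta_h$ understood to include the path through the shared $W^{\text{down}}$. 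Stacking the two blocks then yields exactly the first sum $\sum_h G_h\,\nabla_{\theta_h}\ell$.

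\textbf{Step 2 (gating parameters).} The gating parameters enter only through the scalars $G_h$, so $\nabla_{\theta_g}\ell=\sum_h \frac{\partial \ell}{\partial G_h}\,\nabla_{\theta_g}G_h(x;\theta_g)$. Since $f$ is linear in each $G_h$ with coefficient $f_h(x)$, we have $\frac{\partial\ell}{\partial G_h}=\langle g, f_h(x)\rangle$. This scalar is what I identify with $\Delta_h$, the error signal carried by head $h$. If desired, I can also expand $\nabla_{\theta_g}G_h$ using the softmax Jacobian, $\partial G_h/\partial z_k = G_h(\delta_{hk}-G_k)$ with $z=W^g x$. That expansion gives the explicit form $\nabla_{W^g}\ell = \big(\sum_h \Delta_h G_h(e_h - G)\big)x^\top$, which shows the gating gradient is centered.

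\textbf{Step 3 (assembly) and main obstacle.} Because the three blocks are disjoint coordinates of $\theta$, the full gradient is the direct sum of the block gradients from Steps 1 and 2, which is the claimed decomposition. The main obstacle is not analytic but notational. The statement overloads $\nabla_{\theta_h}\ell$: taken literally it already contains a factor $G_h$, which would double-count the gate. It also calls $\Delta_h$ a scalar while defining it as a derivative with respect to a vector output. I will resolve both points explicitly by adopting the interpretations above: the head-routed Jacobian-transpose product for $\nabla_{\theta_h}\ell$, and the inner product $\langle g, f_h\rangle$ for $\Delta_h$. I will remark that any other consistent reading differs only by where the factor $G_h$ is placed. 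The remaining calculus is routine.
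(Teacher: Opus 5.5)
Your proposal is correct and follows the same route as the paper. The paper's proof is a one-line appeal to the chain rule on $f_\theta(x)=\sum_h G_h(x;\theta_g)f_h(x)$, splitting the gradient into a head part weighted by $G_h$ and a gating part weighted by $\Delta_h$. Your version is more careful: both issues you flag are real, since literally $\nabla_{\theta_h}\ell$ already carries a factor $G_h$, and the coefficient of $\nabla_{\theta_g}G_h$ must be $\partial\ell/\partial G_h=\langle g,f_h(x)\rangle$ rather than $\partial\ell/\partial f_h(x)=G_h\,g$. Your explicit interpretations, together with treating the shared $W^{\text{down}}$ as part of every $\theta_h$, are exactly what is needed to make the identity literally true, a point the paper's proof passes over.
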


\begin{proof}
This follows directly from the chain rule applied to the composite model $f_\theta(x) = \sum_{h=1}^H G_h(x;\theta_g)f_h(x)$, where $f_h(x) = W^{\text{up}}_h(W^{\text{down}}(x))$. The derivative $\nabla_{\theta}\ell$ breaks into two parts: one through each head's parameters $\theta_h$ (weighted by $G_h$), and one through the gating parameters $\theta_g$ (accumulating the head output sensitivities $\Delta_h$). This two-term decomposition holds for any InherNet module (whether the low-rank module is linear or convolutional), since all such modules share the same gated low-rank structure.
\end{proof}

This lemma illustrates how the adaptive gating and multi-head structure direct the gradient updates. In particular, even under a low-rank constraint, the presence of the gating network preserves a rich set of descent directions during training, thereby maintaining the model's expressivity in the parameter space. InherNet can still update each head's parameters and the gating weights independently, ensuring that compression does not eliminate useful gradient directions. This adaptive gradient routing allows the model to specialize the expert heads during training by directing larger gradient flows to the most relevant heads for each input pattern.


\subsubsection{Effect of Initialization on Convergence}

We next examine the role of \textbf{SVD-based initialization} on the optimization landscape. Let $W \in \mathbb{R}^{m\times n}$ denote a pre-trained weight matrix that an InherNet module aims to compress. InherNet uses a truncated SVD of $W$ to initialize its low-rank factors. Specifically, suppose $W \approx U_r\,\Sigma_r\,V_r^\top$ is the rank-$r$ approximation of $W$, where $U_r \in \mathbb{R}^{m\times r}$ and $V_r \in \mathbb{R}^{n\times r}$ have orthonormal columns (containing the top $r$ left and right singular vectors), and $\Sigma_r \in \mathbb{R}^{r\times r}$ is the diagonal matrix of the top-$r$ singular values. Then the InherNet module is initialized as: \(W^{\text{down}} = U_r\,\Sigma_r^{1/2}\) and \(W^{\text{up}}_h = \frac{1}{H}\,\Sigma_r^{1/2} V_r^\top\) for \(h=1,\dots,H\), so that $W^{\text{down}}(W^{\text{up}}_1 + \cdots + W^{\text{up}}_H) \approx W$ at initialization. Each head thus starts with an equal share of the teacher's weight ($1/H$ of $W$), and the factor matrices $U_r, V_r$ are \textbf{orthonormal} in their columns.


\begin{proposition}[Orthogonality-Induced Stability]
\textit{Under Assumption 1, an orthonormal SVD-based initialization leads to a better-conditioned optimization landscape.} In particular, if the factor matrices $U_r$ and $V_r$ are initialized with orthonormal columns as above, then the effective gradient Lipschitz constant of $\mathcal{L}(\theta)$ is reduced from $L$ to $L' \approx L/\kappa$, where $\kappa$ is the condition number of $W$ (the ratio of its largest singular value to smallest singular value).
\end{proposition}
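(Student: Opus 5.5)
The claim is heuristic, so I will first make it precise and then prove that precise version. The statement only makes sense if ``effective gradient Lipschitz constant'' is measured in a rescaled or preconditioned coordinate system attached to the factorization. My plan is to fix such a system and then compute.

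\textbf{Step 1: choose coordinates.} I write the module's weight as
\[ \widetilde W = W^{\text{down}}\Bigl(\textstyle\sum_h W^{\text{up}}_h\Bigr), \]
and parameterize the trainable factors around the SVD initialization as
\[ W^{\text{down}} = U_r\Sigma_r^{1/2}(I+A), \qquad W^{\text{up}}_h = \tfrac1H (I+B_h)\Sigma_r^{1/2}V_r^\top. \]
The variables $\theta=(A,B_1,\dots,B_H)$ are then the coordinates in which optimization is measured.

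\textbf{Step 2: bound the Hessian by the chain rule.} The Hessian of $\mathcal{L}$ with respect to $\theta$ is $J^\top \nabla^2_{\widetilde W}\mathcal{L}\,J$, plus a second-order term. The Jacobian is the linear map
\[ J:(A,B)\mapsto U_r\Sigma_r^{1/2}\bigl(A+\tfrac1H\textstyle\sum_h B_h\bigr)\Sigma_r^{1/2}V_r^\top . \]
Because $U_r$ and $V_r$ have orthonormal columns, they are isometries. So the curvature seen by $\theta$ along a unit direction $E$ is $\|\Sigma_r^{1/2}E\Sigma_r^{1/2}\|_F$ times the curvature of $\mathcal{L}$ in $\widetilde W$ space, measured relative to the scale of the weight. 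This is exactly the step where orthonormality is used: without it, $\|U_r\|$ and $\|V_r\|$ would enter as extra amplification factors.

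\textbf{Step 3: normalize the scale and extract $\kappa$.} Since $\mathcal{L}$ is $L$-smooth in $\widetilde W$ (Assumption~\ref{assumption:lipschitz}), I compare against the naive parameterization that treats all directions with the worst-case scale $\sigma_1$. The key normalization is to measure $\theta$ relative to the weight scale, i.e.\ to use the step-size-invariant ratio of largest to smallest curvature along the factor directions. The ratio of directional curvatures $\sigma_i\sigma_j/\sigma_1^2$ then rescales the worst direction, which is what is meant by a reduction of $L$ by the factor $\sigma_1/\sigma_{\min}=\kappa$. I would state the result as an inequality: the preconditioned curvature bound satisfies $L' \le L\,\sigma_r/\sigma_1$. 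When $r$ includes the full spectrum this equals $L/\kappa$, which gives the ``$\approx$''.

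\textbf{Step 4: control the second-order term.} I bound the remaining term $\langle\nabla_{\widetilde W}\mathcal{L}, \partial^2\widetilde W\rangle$ using Assumption~\ref{assumption:bounded}, through $B_x$ and $B_f$, and show it is small near initialization, i.e.\ while $\|A\|,\|B_h\|$ stay small.

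\textbf{Main obstacle.} The difficulty is that the literal statement is not invariant: a true Lipschitz constant cannot decrease by reparameterization alone without a matching change of metric. The honest content is therefore the conditioning claim, and I expect most of the effort to go into justifying the normalization in Step 3. I would first try to phrase the conclusion as a bound on the local condition number of the Hessian at initialization. I would then translate it into an ``effective $L$'' for SGD with step size scaled by $\sigma_1$, and note explicitly that the result is local to initialization.
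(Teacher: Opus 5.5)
\paragraph{Where the proof breaks.} The gap is in Step 3, the only place $\kappa$ could enter, and your own Step 2 computation does not support it.

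For fixed $B$ the map $A\mapsto\widetilde W$ is linear, so the $A$-block of the Hessian is exactly $J_A^\top\nabla^2_{\widetilde W}\mathcal{L}\,J_A$. Along a unit direction $E$ it is at most $L\|\Sigma_r^{1/2}E\Sigma_r^{1/2}\|_F^2=L\sum_{i,j}\sigma_i\sigma_jE_{ij}^2$; note the norm enters squared. This multiplier ranges over $[\sigma_r^2,\sigma_1^2]$, with the maximum at $E=e_1e_1^\top$.

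A Lipschitz constant is a worst-case quantity. After dividing by your weight scale $\sigma_1^2$ you therefore get back exactly $L$. The value $\sigma_r/\sigma_1$ belongs to an intermediate direction such as $e_1e_r^\top$, so $L'\le L\sigma_r/\sigma_1$ does not follow.

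The scale-free version you propose goes the other way. Write $\kappa_r=\sigma_1/\sigma_r$ and take the perfectly conditioned case $\nabla^2_{\widetilde W}\mathcal{L}=LI$:
\begin{itemize}
\item In your coordinates, the $A$-block is $E\mapsto L\,\Sigma_rE\Sigma_r$, with eigenvalues $L\sigma_i\sigma_j$ and condition number $\kappa_r^2$.
\item In the raw factors, with $\widetilde W=W^{\mathrm{down}}\sum_hW^{\mathrm{up}}_h$ as in your Step 1, the $W^{\mathrm{down}}$-block is $dD\mapsto L\,dD\,\Sigma_r$, with top eigenvalue $L\sigma_1$ and condition number $\kappa_r$.
\end{itemize}
Any improvement would have to come from a favorable alignment of $\nabla^2_{\widetilde W}\mathcal{L}$ with the singular vectors of $W$, and Assumption 1 says nothing about that. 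The only way to land on $L/\kappa$ is to choose the metric scale that produces it (divide by $\sigma_1^3/\sigma_r$, say). That is exactly the non-invariance you flagged: Step 3 relabels the obstacle rather than resolving it.

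\paragraph{Two further steps.} Both fail as written.
\begin{itemize}
\item Your $r\times r$ perturbations $A,B_h$ only move $W^{\mathrm{down}}$ inside $\mathrm{span}(U_r)$ and $W^{\mathrm{up}}_h$ inside the row space of $V_r^\top$. A bound in these coordinates is therefore not a smoothness constant for SGD on the actual $m\times r$ and $r\times n$ factors.
\item In Step 4, $\widetilde W$ is bilinear in $(A,\frac1H\sum_hB_h)$, so $D^2\widetilde W$ is constant. The term $\langle\nabla_{\widetilde W}\mathcal{L},D^2\widetilde W[\cdot,\cdot]\rangle$ is then controlled by $\|\nabla_{\widetilde W}\mathcal{L}\|$ at the current point, not by $\|A\|,\|B_h\|$ being small. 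Also, Assumption 3 bounds $x$ and $f_\theta(x)$ but not $\partial\ell/\partial f$.
\end{itemize}

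\paragraph{Comparison with the paper.} The paper's proof does not supply the missing ingredient either. It argues qualitatively that orthonormal columns place the parameters on a product of Stiefel manifolds and that the Hessian in $(W^{\mathrm{down}},W^{\mathrm{up}}_1,\dots,W^{\mathrm{up}}_H)$ becomes nearly block-diagonal. It then asserts that the top eigenvalue ``drops to roughly $L/\kappa$,'' citing results on orthogonal initialization of deep linear networks. No step there produces a factor $1/\kappa$, and the trainable factors $U_r\Sigma_r^{1/2}$ and $\frac1H\Sigma_r^{1/2}V_r^\top$ are not themselves orthonormal.

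The paper's Assumption 1 is smoothness in the InherNet parameters $\theta$. Reading it as smoothness in $\widetilde W$ is a reinterpretation you should make explicit.

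Your explicit chain rule through an isometric Jacobian is the more informative route. It pinpoints exactly where orthonormality of $U_r,V_r$ is used: no extra amplification from $\|U_r\|,\|V_r\|$. Carried through honestly, however, it proves a different statement: the derivable curvature bound at initialization grows with $\sigma_1$, and its spread is governed by $\kappa_r$, with no $1/\kappa$ reduction. You should either state and prove that explicit local bound, or present $L'\approx L/\kappa$ as a heuristic. It cannot be obtained by a choice of normalization.
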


\begin{proof}
When the low-rank factors are initialized orthogonally, the curvature of the loss with respect to the factorized parameters is significantly more balanced. In fact, the Hessian of $\mathcal{L}$ with respect to the factorized parameters $(W^{\text{down}}, W^{\text{up}}_1,\dots,W^{\text{up}}_H)$ tends to have a block-diagonal structure with reduced inter-parameter coupling. This arises because the initial parameter space can be viewed as lying on a product of Stiefel manifolds (due to the orthonormal columns). As a result, the largest eigenvalue of the Hessian—the effective Lipschitz constant governing the gradient's rate of change—drops to roughly $L/\kappa$. In other words, the initial orthogonal factorization ``pre-conditions" the problem by attenuating the problematic directions of high curvature in the weight space. This improved conditioning aligns with prior theoretical results: orthogonal initialization has been shown to accelerate convergence in deep linear networks and to stabilize training in over-parameterized models.
\end{proof}

\begin{remark}
This result explains why the truncated SVD initialization confers optimization stability, especially in the early stages of training when the factorized weights remain near-orthogonal. By effectively reducing the Lipschitz constant, the initial gradients face a gentler landscape, which can lead to faster progress per iteration. Our use of an orthogonal low-rank initialization is thus not only a parameter-efficient choice but also an optimization-friendly one, improving the curvature (conditioning) of the problem during the critical beginning of training.
\end{remark}

\subsubsection{Convergence Rate Analysis}

With the above insights, we now analyze the convergence rate of SGD on InherNet. We consider optimizing the expected loss $\mathcal{L}(\theta)$ using stochastic gradient descent. Let $\theta^{(t)}$ denote the model parameters at iteration $t$, and $(x^{(t)}, y^{(t)})$ be the random sample drawn at iteration $t$. The SGD update is:

\begin{equation}
\theta^{(t+1)} \;=\; \theta^{(t)} \;-\; \eta_t\, \nabla_{\!\theta}\ell\!\big(f_{\theta^{(t)}}(x^{(t)}),\,y^{(t)}\big)\,,
\end{equation}

where $\eta_t > 0$ is the learning rate at iteration $t$. We assume a diminishing step size schedule $\{\eta_t\}$ that satisfies the standard conditions for convergence: $\sum_{t=1}^{\infty} \eta_t = \infty$ (to ensure we continue making progress) and $\sum_{t=1}^{\infty} \eta_t^2 < \infty$ (to control the noise over many iterations).

\begin{theorem}[Convergence of SGD for InherNet]
\textit{Under Assumptions 1–3, suppose we run SGD on $\mathcal{L}(\theta)$ with a diminishing learning rate satisfying the above conditions. Then for any $T \ge 1$, the iterates $\{\theta^{(t)}\}_{t=1}^T$ satisfy the guarantee:} 
\begin{equation}
\min_{1 \le t \le T}\;\mathbb{E}\Big[\big\|\nabla_{\!\theta}\mathcal{L}(\theta^{(t)})\big\|^2\Big] \;\le\; \frac{2\,\big[\mathcal{L}(\theta^{(1)}) - \mathcal{L}^*\big]}{\sum_{t=1}^{T}\eta_t} \;+\; \frac{L\,\sigma^2\,\sum_{t=1}^{T}\eta_t^2}{\sum_{t=1}^{T}\eta_t}\,
\end{equation}
Here $\mathcal{L}^* = \inf_{\theta} \mathcal{L}(\theta)$ is the optimal (minimum) loss value. In particular, with a typical choice of $\eta_t$ (e.g. $\eta_t \propto 1/\sqrt{t}$), this bound implies that 
\begin{equation}
\min_{1 \le t \le T} \mathbb{E}[\|\nabla \mathcal{L}(\theta^{(t)})\|^2] = \mathcal{O}(1/\sqrt{T}),
\end{equation}
matching the standard convergence rate for nonconvex SGD.
\end{theorem}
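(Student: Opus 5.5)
The plan is the standard descent-lemma argument for nonconvex SGD, applied to the full InherNet parameter vector $\theta=\{\theta_1,\dots,\theta_H,\theta_g\}$ (with $W^{\text{down}}$ included among the parameters). The architecture enters only through the assumptions. Assumption~\ref{assumption:lipschitz} provides $L$-smoothness of $\mathcal{L}$. Assumption~\ref{assumption:variance} provides unbiasedness and the variance bound $\sigma^2$ for the stochastic gradient assembled as in Lemma~\ref{lemma:gradient-decomposition_}. Assumption~\ref{assumption:bounded} is used only to guarantee that these quantities are finite. I will write $g_t=\nabla_\theta \ell(f_{\theta^{(t)}}(x^{(t)}),y^{(t)})$ and let $\mathcal{F}_t$ denote the history up to iteration $t$.

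\textbf{Step 1: one-step descent.} By $L$-smoothness,
\begin{equation*}
\mathcal{L}(\theta^{(t+1)})\le \mathcal{L}(\theta^{(t)})-\eta_t\langle\nabla\mathcal{L}(\theta^{(t)}),g_t\rangle+\tfrac{L\eta_t^2}{2}\|g_t\|^2 .
\end{equation*}
Taking $\mathbb{E}[\cdot\mid\mathcal{F}_t]$ and using unbiasedness turns the inner product into $\|\nabla\mathcal{L}(\theta^{(t)})\|^2$. The bias–variance identity then gives
\begin{equation*}
\mathbb{E}\|g_t\|^2\le\|\nabla\mathcal{L}(\theta^{(t)})\|^2+\sigma^2 .
\end{equation*}
Combining these,
\begin{equation*}
\mathbb{E}\,\mathcal{L}(\theta^{(t+1)})\le\mathbb{E}\,\mathcal{L}(\theta^{(t)})-\eta_t\Bigl(1-\tfrac{L\eta_t}{2}\Bigr)\mathbb{E}\|\nabla\mathcal{L}(\theta^{(t)})\|^2+\tfrac{L\sigma^2\eta_t^2}{2}.
\end{equation*}

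\textbf{Step 2: telescoping.} I impose $\eta_t\le 1/L$, for instance by taking $\eta\le 1/L$ in $\eta_t=\eta/\sqrt t$, so that $1-L\eta_t/2\ge 1/2$. Summing from $t=1$ to $T$ and using $\mathcal{L}(\theta^{(T+1)})\ge\mathcal{L}^*$ yields
\begin{equation*}
\tfrac12\sum_{t=1}^T\eta_t\,\mathbb{E}\|\nabla\mathcal{L}(\theta^{(t)})\|^2\le \mathcal{L}(\theta^{(1)})-\mathcal{L}^*+\tfrac{L\sigma^2}{2}\sum_{t=1}^T\eta_t^2 .
\end{equation*}
I then bound the left side below by $\bigl(\min_t\mathbb{E}\|\nabla\mathcal{L}(\theta^{(t)})\|^2\bigr)\sum_t\eta_t$ and divide by $\sum_t\eta_t$. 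This gives exactly the displayed bound, with constants $2[\mathcal{L}(\theta^{(1)})-\mathcal{L}^*]$ and $L\sigma^2$.

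\textbf{Step 3: the rate.} With $\eta_t=\eta/\sqrt t$ we have $\sum_{t\le T}\eta_t\ge\eta\sqrt T$ and $\sum_{t\le T}\eta_t^2\le\eta^2(1+\ln T)$. The bound therefore becomes $\mathcal{O}\bigl((1+\ln T)/\sqrt T\bigr)$. Removing the logarithm to reach the stated $\mathcal{O}(1/\sqrt T)$ is the delicate point.

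\textbf{Main obstacle.} The stated hypotheses are slightly inconsistent. The schedule $\eta_t\propto 1/\sqrt t$ violates the assumed condition $\sum\eta_t^2<\infty$, and with this schedule the bound carries a $\log T$ factor. I would handle this in one of two ways:
\begin{itemize}
\item use a constant step $\eta_t=\eta/\sqrt T$ over a fixed horizon $T$, which gives a clean $\mathcal{O}(1/\sqrt T)$ rate and bounds the average (hence also the minimum) of the squared gradient norms; or
\item state the result as $\tilde{\mathcal{O}}(1/\sqrt T)$ for the schedule $\eta/\sqrt t$.
\end{itemize}
I would also state explicitly that the smoothness constant $L$ is assumed to hold globally for the factorized parameterization. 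The bilinear structure $W^{\text{up}}_hW^{\text{down}}$ and the softmax gating need not be globally smooth, so Assumption~\ref{assumption:lipschitz} is doing real work here. The orthonormal-initialization heuristic would enter only as an informal replacement of $L$ by $L'$ in the second term, not as part of the rigorous argument.
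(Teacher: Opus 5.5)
Your proposal is correct and follows essentially the same route as the paper's proof. Both use the $L$-smoothness descent inequality, take expectations using unbiasedness and the variance bound, telescope against $\mathcal{L}^*$, and lower-bound the weighted sum by $\bigl(\min_t \mathbb{E}\|\nabla\mathcal{L}(\theta^{(t)})\|^2\bigr)\sum_t\eta_t$. Your additions are also sound: the explicit cap $\eta_t\le 1/L$ is what the paper leaves implicit when it passes from $\Gamma_T=\sum_t\eta_t(1-L\eta_t/2)$ to $\tfrac12\sum_t\eta_t$ to get the stated constants, and you are right that $\eta_t\propto 1/\sqrt{t}$ violates $\sum_t\eta_t^2<\infty$ and gives only $\mathcal{O}(\log T/\sqrt{T})$ from this bound.
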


\begin{proof}
The proof relies on the smoothness and bounded variance assumptions. By $L$-smoothness of $\mathcal{L}$ (Assumption 1), a one-step descent lemma holds: for any iteration $t$,
\begin{equation}
\begin{aligned}
\mathcal{L}(\theta^{(t+1)}) 
&\;\le\; \mathcal{L}(\theta^{(t)}) 
\;-\; \eta_t\Bigl(1 - \frac{L\eta_t}{2}\Bigr) \bigl\|\nabla_{\!\theta}\mathcal{L}(\theta^{(t)})\bigr\|^2 \\
&\quad + \frac{L\,\eta_t^2}{2}\,\bigl\|\nabla_{\!\theta}\mathcal{L}(\theta^{(t)}) - g^{(t)}\bigr\|^2\,.
\end{aligned}
\end{equation}

where $g^{(t)} := \nabla_{\!\theta}\ell(f_{\theta^{(t)}}(x^{(t)}),\,y^{(t)})$ is the stochastic gradient at step $t$. This inequality follows from a standard smoothness bound (the RHS is obtained by applying the gradient update and Young's inequality to the Taylor expansion of $\mathcal{L}$ around $\theta^{(t)}$). Now taking expectation on both sides with respect to the randomness at step $t$ and using Assumption 2, which ensures $\mathbb{E}\!\big[\|g^{(t)} - \nabla \mathcal{L}(\theta^{(t)})\|^2\big] \le \sigma^2$ (and $\mathbb{E}[g^{(t)}] = \nabla \mathcal{L}(\theta^{(t)})$), we get:
\begin{equation}
\begin{aligned}
\mathbb{E}\!\big[\mathcal{L}(\theta^{(t+1)})\big]
&\;\le\; \mathbb{E}\!\big[\mathcal{L}(\theta^{(t)})\big]
\;-\; \eta_t\Bigl(1 - \frac{L\eta_t}{2}\Bigr)\,\mathbb{E}\!\Big[\bigl\|\nabla_{\!\theta}\mathcal{L}(\theta^{(t)})\bigr\|^2\Big] \\
&\quad + \frac{L\,\eta_t^2}{2}\,\sigma^2\,.
\end{aligned}
\end{equation}

Rearranging and summing this inequality over $t=1$ to $T$ yields:
\begin{equation}
\begin{aligned}
\sum_{t=1}^T \Bigl(1 - \frac{L\eta_t}{2}\Bigr)\,\mathbb{E}\!\Bigl[\bigl\|\nabla_{\!\theta}\mathcal{L}(\theta^{(t)})\bigr\|^2\Bigr]
&\;\le\;
\mathbb{E}\!\Bigl[\mathcal{L}(\theta^{(1)}) - \mathcal{L}(\theta^{(T+1)})\Bigr] \\
&\quad + \frac{L\,\sigma^2}{2}\,\sum_{t=1}^T \eta_t^2\,.
\end{aligned}
\end{equation}

Since $\mathcal{L}^* \le \mathcal{L}(\theta^{(T+1)})$ (the loss cannot go below the global infimum), we have $\mathbb{E}[\mathcal{L}(\theta^{(1)}) - \mathcal{L}(\theta^{(T+1)})] \le \mathcal{L}(\theta^{(1)}) - \mathcal{L}^*$. Also note the identity $\sum_{t=1}^T \eta_t(1 - \frac{L\eta_t}{2}) = \sum_{t=1}^T \eta_t - \frac{L}{2}\sum_{t=1}^T \eta_t^2$. Now, let $\Gamma_T := \sum_{t=1}^T \eta_t\big(1 - \frac{L\eta_t}{2}\big)$. The left-hand side above is at least $\Gamma_T \min_{1\le t\le T}\mathbb{E}[\|\nabla \mathcal{L}(\theta^{(t)})\|^2]$, since each term in the sum contains the non-negative factor $\big(1 - \frac{L\eta_t}{2}\big)$. Therefore,
\begin{align}
\Gamma_T \,\min_{1 \le t \le T}\mathbb{E}\!\Big[\big\|\nabla \mathcal{L}(\theta^{(t)})\big\|^2\Big] \;\le\; &\mathcal{L}(\theta^{(1)}) - \mathcal{L}^* \;+\; \frac{L\,\sigma^2}{2}\sum_{t=1}^T \eta_t^2\,.
\end{align}

Finally, using $\Gamma_T \ge \sum_{t=1}^T \eta_t - \frac{L}{2}\sum_{t=1}^T \eta_t^2$ and dividing both sides by $\Gamma_T$, we obtain the stated result.
\end{proof}

This theorem establishes that, despite its structured low-rank parameterization, an InherNet network trained with SGD enjoys the same asymptotic convergence guarantees (convergence to stationary points of $\mathcal{L}$) as a standard, full-rank neural network under the given assumptions. In other words, introducing the InherNet architecture does not degrade the theoretical convergence rate of SGD. For example, with a learning-rate schedule $\eta_t = \eta/\sqrt{t}$, the bound above implies 
\begin{equation}
\frac{1}{T}\sum_{t=1}^T \mathbb{E}[\|\nabla \mathcal{L}(\theta^{(t)})\|^2] = \mathcal{O}(1/\sqrt{T}),
\end{equation}
matching the classic non-convex SGD rate. This rate is achieved with potentially better constants: due to the improved conditioning from Sec.~\ref{sec:convergence}, the effective Lipschitz constant is $L' \leq L/\kappa + \mathcal{O}(\varepsilon) \approx L/\kappa$, where $\kappa$ is the condition number of $W$ and $\varepsilon$ represents higher-order terms that diminish as training progresses. Indeed, in the early phase of training, one can expect a speedup by a factor of $\kappa$ in convergence (since the gradient norm can decrease faster with smaller $L'$), consistent with our empirical findings (see Sec.~\ref{sec:experiments}).

\begin{remark}[Convergence in Different Optimization Regimes]
Our analysis primarily addresses non-convex optimization scenarios, as these are most relevant for training neural networks. Nonetheless, the convergence guarantees provided by InherNet naturally extend to other standard optimization scenarios:
\begin{itemize}[leftmargin=*, topsep=2pt, itemsep=2pt]
    \item For general convex objectives, InherNet achieves the standard SGD rate of $\mathcal{O}(1/T)$, but with improved constants due to orthonormal initialization.
    \item In $\mu$-strongly convex settings, InherNet achieves an exponential convergence rate of $\mathcal{O}(e^{-\eta\mu T})$ for a learning rate $\eta$, with significant improvements arising again from orthonormal parameter initialization.
\end{itemize}
\end{remark}

\subsubsection{Adaptive Gating's Impact on Convergence}

Finally, we consider the role of the gating network in SGD's convergence behavior. Thus far, our analysis has treated the gating weights $G_h(x;\theta_g)$ as given. In practice, these gating weights are \textbf{adaptive}: during training, the gating network learns to emphasize the most useful heads for each input. We argue that this adaptivity can further improve convergence by reducing the variance of the stochastic gradients.

\begin{proposition}[Variance Reduction via Adaptive Gating]
Under Assumption 2, the adaptive gating mechanism can reduce the variance of gradient updates compared to a static uniform gating. In particular, if $G_h(x;\theta_g)$ tends to assign higher weight to heads that contribute more to reducing the loss (i.e. the gating weights are positively correlated with each head's ``importance" or signal-to-noise ratio), then the variance of the stochastic gradient under this adaptive weighting is strictly lower than under uniform weighting $G_h = \frac{1}{H}$. In the best case, adaptive gating can improve the convergence stability (variance reduction) by up to a factor of $H$ relative to uniform gating.
\end{proposition}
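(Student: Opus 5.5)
We will make the statement precise by comparing the gating-induced part of the stochastic gradient under two weightings: adaptive weights $G_h(x;\theta_g)$ and uniform weights $1/H$. By Lemma~\ref{lemma:gradient-decomposition_}, the head-parameter component of the stochastic gradient is $g_G(x)=\sum_{h=1}^H G_h(x)\,\xi_h(x)$, where $\xi_h(x):=\nabla_{\theta_h}\ell$. We treat the per-head terms as carrying independent noise. Concretely, we write $\xi_h=\mu_h+\epsilon_h$, where $\mathbb{E}[\epsilon_h]=0$ and $\mathrm{Var}(\epsilon_h)=s_h^2$. This is a modelling assumption, consistent with Assumption~\ref{assumption:variance}, which gives $\sum_h s_h^2\lesssim\sigma^2$. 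We also treat the gate as locally fixed (conditioned on $x$), so the gating term $\sum_h\Delta_h\nabla_{\theta_g}G_h$ can be analysed separately or absorbed.

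Under these assumptions, the conditional variance is
\[
V(G)=\sum_{h=1}^H G_h^2 s_h^2 ,
\]
and $V(\text{unif})=\frac{1}{H^2}\sum_h s_h^2$. We will formalize ``importance'' as the signal-to-noise ratio $\mathrm{SNR}_h=\|\mu_h\|/s_h$. The hypothesis is that $G_h$ is positively correlated with it, i.e. that $G$ is comonotone with $1/s_h^2$. We then minimise $V(G)$ over the simplex. By Lagrange multipliers the minimiser is $G_h^\star\propto 1/s_h^2$, with
\[
V(G^\star)=\Bigl(\sum_h s_h^{-2}\Bigr)^{-1}.
\]
Cauchy--Schwarz (equivalently, the harmonic--arithmetic mean inequality) gives $V(G^\star)\le V(\text{unif})$, with equality iff all $s_h$ are equal. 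This yields strict reduction whenever the variances are heterogeneous.

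For gates that are merely comonotone, rather than exactly optimal, we will use a rearrangement/Chebyshev sum inequality argument. The idea is to interpolate $G_t=(1-t)\,\text{unif}+tG^\star$: $V(G_t)$ is convex in $t$ and has negative derivative at $t=0$ exactly when $\mathrm{Cov}(G,s^2)<0$. This shows that any small move from uniform toward SNR-aligned weights strictly lowers the variance. The $H$-fold claim comes from the extreme case where one head has noise $s_1^2$ and the others have noise comparable or larger. Under the bound $\sum_h s_h^2\le\sigma^2$ and normalised per-head signal, the ratio $V(\text{unif})/V(G^\star)$ is at most $H$; we will exhibit equal-noise heads with gating concentrated on one informative head to show the factor is approached. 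Finally, we plug the reduced $\sigma^2$ into the bound of the convergence theorem above to obtain the stability improvement.

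The main obstacle is that $G$ depends on $x$ and $\theta_g$. This introduces correlation between the gate and the noise $\epsilon_h$, together with the extra gating-gradient term, so the clean decomposition $V(G)=\sum_h G_h^2 s_h^2$ is not exact. We will first try conditioning on $x$ and on the current $\theta_g$, treating the gate as a predictable (measurable) weight for that step. We will then bound the cross term $\sum_h\Delta_h\nabla G_h$ by a constant times $\|\nabla G\|$, using Assumption~\ref{assumption:bounded}, and argue that it is lower order. If that cross term cannot be controlled, we will state the result only for the head-parameter component.
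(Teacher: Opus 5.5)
Your optimal-gate computation is correct within your model: $G^\star_h\propto s_h^{-2}$ gives $V(G^\star)=(\sum_h s_h^{-2})^{-1}\le\frac{1}{H^2}\sum_h s_h^2$, with equality iff all $s_h$ coincide. The step that fails is the extension to gates that are merely aligned with low noise. Write $G_h=\frac1H+D_h$ with $\sum_h D_h=0$. Your own formula then gives
\[
V(G)-V(\text{unif})=\frac{2}{H}\sum_{h=1}^H D_h s_h^2+\sum_{h=1}^H D_h^2 s_h^2 .
\]
Chebyshev's sum inequality makes the first (linear) term nonpositive for gates anti-comonotone with $s_h^2$. The second term, however, is a positive concentration penalty, and it can dominate. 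Take $H=2$, $s^2=(1,2)$ and $G=(1,0)$: all weight sits on the less noisy head, yet $V(G)=1>\frac34=V(\text{unif})$.

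Your interpolation does not close this. Along $(1-t)\,\text{unif}+tG^\star$ the initial slope is $2\bigl(V(G^\star)-V(\text{unif})\bigr)$, which says nothing about your gate $G$. Along the segment toward $G$ itself, the slope is indeed $2\,\mathrm{Cov}(G,s^2)$ (covariance over a uniformly drawn head). But a convex function that starts downhill can finish above its starting value at $t=1$, exactly as in the example. So in your model the ``strictly lower for positively correlated gating'' claim is false. What you can prove is strict reduction under the displayed condition, for instance for gates $(1-t)\,\text{unif}+tG^\star$ with $0<t<2$ that remain in the simplex.

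The factor-$H$ step is also wrong, in both directions. First, the gain $V(\text{unif})/V(G^\star)=\frac{1}{H^2}\bigl(\sum_h s_h^2\bigr)\bigl(\sum_h s_h^{-2}\bigr)$ is not bounded by $H$. Take $s_1^2=\varepsilon\to0$ and the other $s_h^2=1$; neither $\sum_h s_h^2\le\sigma^2$ nor any normalisation of the means $\mu_h$ (which do not enter $V$) prevents this. Second, your proposed witness points the wrong way. With equal noise $s^2$ on every head, $G^\star$ is uniform, and concentrating the gate on one head gives $V=s^2$ versus $V(\text{unif})=s^2/H$, an $H$-fold \emph{increase}. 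In a diagonal model the factor $H$ is what uniform averaging buys, not what concentration buys.

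This is not an artefact of your independence assumption. Since the $\theta_h$ are disjoint parameter blocks, $\mathbb{E}\|g-\mathbb{E}g\|^2=\sum_h G_h^2 s_h^2$ holds exactly for the head-parameter component at fixed $G$, whatever the cross-head correlations. The paper's proof reaches the opposite intuition only because it replaces the variance by the convexity bound $\mathrm{Var}(\sum_h G_h g_h)\le\sum_h G_h\,\mathrm{Var}(g_h)$, which is linear in $G$. For that bound, Chebyshev does work and concentrating on the least noisy head is optimal. But it compares upper bounds rather than variances, and for block-separated head gradients the bound is loose except at vertices of the simplex.

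Lastly, conditioning on $x$ to freeze the gate leaves only the randomness of $y$ given $x$, not the sampling variance of Assumption~\ref{assumption:variance}, so that device needs a different justification. As it stands, your argument supports the optimal-gate statement and the exact reduction criterion above, but not the comonotone or factor-$H$ parts of the proposition.
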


\begin{proof}[Proof]
Consider the per-sample stochastic gradient $g = \nabla_{\!\theta}\ell(f_\theta(x),y)$. This can be viewed as the weighted sum of head-specific gradient components: $g = \sum_{h=1}^H G_h\, g_h$, where $g_h := \nabla_{\!\theta}\ell_h$ denotes the gradient contribution from head $h$ (and we drop $(x;\theta_g)$ for brevity in $G_h$). Because $\sum_h G_h = 1$, the variance of $g$ (conditioned on a given model state $\theta$) satisfies 
\begin{equation}
\mathrm{Var}[g] \;=\; \mathrm{Var}\Big[\sum_{h=1}^H G_h\,g_h\Big]\,.
\end{equation}

By Jensen's inequality, for any random variables $Z_h$ we have $\mathrm{Var}(\sum_h w_h Z_h) \le \sum_h w_h\,\mathrm{Var}(Z_h)$ when $\sum_h w_h = 1$. In our case, the gating weights $G_h$ serve as the convex weights. Intuitively, Jensen's inequality implies that the variance of a weighted average is minimized when we put as much weight as possible on the lowest-variance components. The adaptive gating chooses $G_h(x)$ in a data-dependent way that tends to up-weight heads with more reliable, higher signal (or lower variance) gradients, while down-weighting the less reliable ones. In contrast, a uniform gating ($G_h = 1/H$ for all $h$ and all $x$) cannot adapt to per-head reliability and thus often incurs a higher overall variance by averaging in unnecessary noise from less important heads. In extreme cases, if one head provides a significantly lower-variance (or more informative) gradient than others, focusing the weight on that head (adaptive gating) versus evenly spreading weight can reduce the gradient variance by a factor approaching $H$. Thus, the adaptive gating strategy yields strictly lower variance in the stochastic gradient, which in turn leads to more stable SGD updates and potentially faster convergence in practice.
\end{proof}

\textbf{Summary:} To conclude, our theoretical analysis demonstrates that InherNet retains strong convergence guarantees despite its compressed, structured design. In addition, it enjoys two notable benefits compared to a standard network:

\begin{itemize}[leftmargin=*, topsep=2pt, itemsep=2pt]
    \item \textbf{Enhanced optimization stability} due to the orthogonal low-rank initialization, which improves the problem's conditioning (Sec.~\ref{sec:convergence}).  
    \item \textbf{Reduced gradient variance} during training thanks to the \textbf{adaptive gating} mechanism, which focuses learning on the most relevant components (Sec.~\ref{sec:convergence}).
\end{itemize}

Together, these properties ensure that InherNet can be trained as efficiently and robustly as an uncompressed model~\citep{li2025flowmm,jiang2025acckv,jiang2025purekv}, while leveraging its architecture to accelerate and stabilize the optimization process.

\subsection{Parameter Efficiency}
\label{app:parameter}

Here we provide an expanded theoretical analysis of the parameter efficiency of InherNet, offering complete proofs and additional insights that complement the main text.

\subsubsection{Parameter Efficiency Metrics and Definitions}

The definitions presented in the main text are elaborated here for completeness.

\begin{definition}[Parameter Efficiency]
\label{def:pe}
For a neural network $f_\theta$ parameterized by $\theta \in \mathbb{R}^p$ that approximates a target function $f^*$, the \emph{parameter efficiency (PE)} of $f_\theta$ is defined as 
\[
 \text{PE}(f_\theta) \;=\; \frac{\text{Representational Capacity}(f_\theta)}{|\theta|}\,,
\] 
where $\text{Representational Capacity}$ measures the network's ability to approximate functions in a given function class, and $|\theta|$ denotes the total number of parameters. In our experiments, we use evaluation performance as a proxy for representational capacity and record the parameter count during training.
\end{definition}

\begin{definition}[Approximation Error]
\label{def:approx-error}
Given a target function $f^*$ from a function class $\mathcal{F}$ and a neural network $f_\theta$, the \emph{approximation error} is defined as 
\[
\mathcal{E}(f_\theta, f^*) \;=\; \mathbb{E}_{x \sim \mathcal{D}}\big[\|\,f_\theta(x) - f^*(x)\|^2\big]\,,
\] 
where $\mathcal{D}$ is the data distribution.
\end{definition}

\begin{definition}[Expressivity-to-Parameter Ratio (EPR)]
\label{def:epr}
The \emph{EPR} of a network $f_\theta$ with respect to a function class $\mathcal{F}$ is defined as 
\[
\text{EPR}(f_\theta, \mathcal{F}) \;=\; \frac{1}{\,|\theta|\,\inf_{f^* \in \mathcal{F}} \mathcal{E}(f_\theta, f^*)}\,,
\] 
i.e. the inverse of the minimum achievable approximation error (over functions in $\mathcal{F}$) \emph{per parameter}. A higher EPR indicates more expressivity (lower error) per parameter.
\end{definition}

\subsubsection{Compression Bounds for InherNet}

\begin{theorem}[Parameter Reduction Bounds]
\label{thm:reduction}
Consider a layer $W \in \mathbb{R}^{m \times n}$. The InherNet parameterization with rank $r$ and $H$ heads yields a compression ratio:
\begin{equation}
\begin{aligned}
\rho
&= \frac{m\,n}{H\,r\,(m+n)\;+\;H\,(r+1)}
= \frac{m\,n}{H\,r\,(m+n)}
\Bigl(1 + O\bigl(\tfrac1{m+n}\bigr)\Bigr) \\
&\approx \frac{m\,n}{H\,r\,(m+n)}.
\nonumber
\end{aligned}
\end{equation}
This corresponds to an $\rho$-fold increase in Parameter Efficiency (Definition~\ref{def:pe}), assuming the layer's representational capacity is largely preserved.
\end{theorem}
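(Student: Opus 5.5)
The proof has two parts: a parameter count, and an asymptotic simplification of the resulting ratio. The ``$\rho$-fold increase in Parameter Efficiency'' will then follow directly from Definition~\ref{def:pe} under the stated hypothesis that representational capacity is preserved.

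\textbf{Step 1 (parameter count).} I will enumerate the trainable tensors in one InherNet module built from $W\in\mathbb{R}^{m\times n}$, following Algorithm~\ref{alg:inhernet}. There are three groups:
\begin{itemize}
\item the shared down-projection $W^{\text{down}}$, initialized from $U_r\Sigma_r^{1/2}$, with $mr$ entries;
\item the $H$ up-projections $W^{\text{up}}_h$, each of size $r\times n$, with $Hrn$ entries in total;
\item the gating map $W^g$, together with per-head bias and scalar terms.
\end{itemize}
Counted literally, the shared down-projection contributes only $mr$, so the exact total is smaller than $Hr(m+n)$. I will therefore present $Hr(m+n)+H(r+1)$ as the bookkeeping convention used in the statement. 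Under this convention, each head is charged a full rank-$r$ pair $(m\times r,\, r\times n)$, which covers the symmetric variant and the worst case in which the down-projection is not shared. The remaining $H(r+1)$ term collects per-head bias and gating parameters. Since this count is an upper bound on the true number of parameters, the resulting $\rho$ is a conservative (lower) estimate of the actual compression. This step is where I expect the main friction: the stated count has to be reconciled with the architecture, and the gating cost $mH$ must be either absorbed or argued lower order. I will first try to justify it by noting that it is dominated by $Hr(m+n)$ whenever $r\ge1$.

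\textbf{Step 2 (asymptotics).} Write the denominator as
\[
Hr(m+n)\Bigl(1+x\Bigr),\qquad x=\frac{r+1}{r(m+n)} \le \frac{2}{m+n}.
\]
Then
\[
\rho=\frac{mn}{Hr(m+n)}\cdot\frac{1}{1+x}.
\]
Since $\frac{1}{1+x}=1-x+O(x^2)$ and $x=O\bigl(\tfrac1{m+n}\bigr)$, the factor $\frac{1}{1+x}$ is $1+O\bigl(\tfrac1{m+n}\bigr)$, uniformly in $r$ and $H$. This gives the displayed equality, and dropping the correction term gives the $\approx$ form.

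\textbf{Step 3 (PE).} By Definition~\ref{def:pe}, $\mathrm{PE}$ equals capacity divided by $|\theta|$. If the rank-$Hr$ module keeps the teacher layer's capacity, then the ratio of PEs equals the inverse ratio of parameter counts, which is exactly $\rho$. Lemma~\ref{lem:spectral_} and Theorem~\ref{thm:young} can be cited to make this capacity-preservation hypothesis quantitative when the tail spectral energy is small.
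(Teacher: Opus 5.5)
\textbf{Gap in Step~1: the gating cost.} Your Steps~2 and~3 are sound; Step~2 even makes explicit the bound $x=\frac{r+1}{r(m+n)}\le\frac{2}{m+n}$ that the paper only asserts. But your plan for closing Step~1 does not work.

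The paper gets $Hr(m+n)+H(r+1)$ as an \emph{exact} count under its own accounting:
\begin{itemize}
\item each head is charged its own $m\times r$ down factor and $r\times n$ up factor, giving $Hr(m+n)$;
\item the gate reads the $r$-dimensional code $W^{\text{down}}(x)$, i.e.\ $W^g\in\mathbb{R}^{H\times r}$ plus a bias $b^g\in\mathbb{R}^H$, giving exactly $H(r+1)$.
\end{itemize}
Under that convention no $mH$ term ever appears.

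You instead keep the main-text gate $W^g\in\mathbb{R}^{m\times H}$ and propose to dismiss its cost because $mH\le Hr(m+n)$. That is the wrong criterion, for two reasons:
\begin{itemize}
\item it does not show that the stated count is an upper bound on the true one;
\item if you simply add $mH$ to the denominator, the relative correction becomes $\frac{mH}{Hr(m+n)}=\frac{m}{r(m+n)}$. This is of order $1/r$, not $O\bigl(\frac{1}{m+n}\bigr)$, when $r\ll\min(m,n)$, so the uniform $1+O\bigl(\frac{1}{m+n}\bigr)$ factor of Step~2 would no longer follow.
\end{itemize}

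\textbf{How to fix it.} If you want to keep the shared-down architecture with an input-dependent gate and your ``conservative upper bound'' framing, the gating cost $mH$ must be absorbed by the slack $(H-1)rm$ created by sharing $W^{\text{down}}$. With the true count being one shared down factor, $H$ up factors, and an input gate with bias, you need
\[
mr+Hrn+mH+H\;\le\;Hr(m+n)+H(r+1)\iff mH\le (H-1)rm+Hr .
\]
This holds whenever $r\ge 2$ and $H\ge 2$, since then $(H-1)r\ge H$. It fails for $r=1$ as soon as $m>H$. For $H=1$ you must drop the gate altogether, since a softmax over a single logit is identically $1$.

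So you have two options:
\begin{itemize}
\item state these conditions and conclude that the true compression ratio is at least the displayed $\rho$; or
\item adopt the paper's convention that the gate acts on the $r$-dimensional bottleneck. Then the displayed formula is an exact count, and the rest of your argument goes through unchanged.
\end{itemize}
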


\begin{proof}
A layer \(W\in\mathbb{R}^{m\times n}\) has \(m\,n\) free parameters.  Under the InherNet expert‐head parameterization with rank \(r\) and \(H\) heads:

\begin{itemize}[leftmargin=*]
  \item Each head \(h\) contains a “down” matrix \(W^{down}_h\) and an “up” matrix   \(W^{up}_h \), contributing in total $H\,[\,r\,m\;+\;n\,r\,]
      \;=\;
      H\,r\,(m+n)$ trainable parameters.
  \item The gating network maps the \(r\)-dimensional codes into \(H\) weights via
    \[
      W^g\in\mathbb{R}^{H\times r}\quad(\text{parameters}=H\,r)
      \quad\text{and bias}\quad b^g\in\mathbb{R}^H\,,
    \]
    adding another $H\,r \;+\; H \;=\; H\,(r+1)$ parameters.
\end{itemize}

Hence the total adapter‐specific parameters are
\[
  H\,r\,(m+n)\;+\;H\,(r+1),
\]
and the compression ratio relative to the original \(m\,n\) is

\begin{equation}
\begin{aligned}
\rho
&= \frac{m\,n}{H\,r\,(m+n)\;+\;H\,(r+1)}
= \frac{m\,n}{H\,r\,(m+n)}
\Bigl(1 + O\bigl(\tfrac1{m+n}\bigr)\Bigr) \\
&\approx \frac{m\,n}{H\,r\,(m+n)}.
\nonumber
\end{aligned}
\end{equation}

By Definition~\ref{def:pe}, assuming this rank-\(H\,r\) subspace preserves the layer's expressivity, InherNet thus achieves an \(\rho\)-fold gain in parameter efficiency.
\end{proof}

\begin{lemma}[Spectral Energy Preservation]
\label{lem:spectral}
Let $W \in \mathbb{R}^{m \times n}$ have singular values $\sigma_1 \ge \sigma_2 \ge \cdots \ge \sigma_{\min(m,n)}$. If the rank $r$ is selected such that:
\begin{equation}
\frac{\sum_{i=1}^{r} \sigma_i^2}{\sum_{i=1}^{\min(m,n)} \sigma_i^2} \ge 1 - \epsilon
\end{equation}
then by the Eckart-Young-Mirsky theorem in Sec.~\ref{sec:method}, the Frobenius-norm error of the optimal rank-$r$ approximation $W_r$ is bounded by:
\begin{equation}
\|W - W_r\|_F^2 \leq \epsilon\|W\|_F^2
\end{equation}
Consequently, the Approximation Error (Definition~\ref{def:approx-error}) between an InherNet-compressed layer and the original layer is bounded by $\epsilon$ under the squared Euclidean norm.
\end{lemma}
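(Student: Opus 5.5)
The first part follows directly from Theorem~\ref{thm:young}; the "consequently" part needs an operator-norm step and a normalization convention, which I make explicit.

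\textbf{Step 1: the Frobenius bound.} By Theorem~\ref{thm:young},
\[
\|W-W_r\|_F^2=\sum_{i=r+1}^{\min(m,n)}\sigma_i^2 .
\]
Since $\|W\|_F^2=\sum_{i=1}^{\min(m,n)}\sigma_i^2$, the rank condition rearranges to $\sum_{i>r}\sigma_i^2\le \epsilon\sum_i\sigma_i^2$. Together these give $\|W-W_r\|_F^2\le\epsilon\|W\|_F^2$.

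\textbf{Step 2: the InherNet layer at initialization.} Take the compressed layer $\tilde f$ at initialization, with the SVD-based initialization of Sec.~\ref{sec:method}. Because the gates sum to one and every head equals $\frac1H\Sigma_r^{1/2}V_r^\top$, we have
\[
\tilde f(x)=\sum_h G_h(x)\,W^{up}_hW^{down}x .
\]
The gate-weighted sum of identical heads collapses to a single scaled head. With the appropriate scaling convention for the heads (which I must fix explicitly in the write-up), the effective linear map equals $W_r$. So the compressed layer computes exactly the optimal rank-$r$ approximation of the teacher layer.

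\textbf{Step 3: transferring the bound to the approximation error.} For each input,
\[
\|\tilde f(x)-Wx\|^2=\|(W_r-W)x\|^2\le\|W-W_r\|_2^2\|x\|^2\le\|W-W_r\|_F^2\|x\|^2 .
\]
Taking expectations under $\mathcal{D}$ and applying Step 1 gives
\[
\mathcal{E}\le\epsilon\,\|W\|_F^2\,\mathbb{E}\|x\|^2\le\epsilon\,\|W\|_F^2B_x^2,
\]
where the last inequality uses Assumption~\ref{assumption:bounded}.

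\textbf{Main obstacle: the literal claim "bounded by $\epsilon$".} This holds only under a normalization, namely $\|W\|_F\le1$ and $\|x\|\le1$. Equivalently, the error is measured relative to $\|W\|_F^2B_x^2$. I will state that normalization explicitly and present the bound in the scaled form $\epsilon\,\|W\|_F^2B_x^2$.

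A sharper route uses the input second-moment matrix $\Sigma_x=\mathbb{E}[xx^\top]$. Then
\[
\mathcal{E}=\operatorname{tr}\!\big((W-W_r)\Sigma_x(W-W_r)^\top\big)\le\|\Sigma_x\|_2\,\|W-W_r\|_F^2 .
\]
This yields $\epsilon\,\|\Sigma_x\|_2\,\|W\|_F^2$. It reduces to $\epsilon$ (times $\|W\|_F^2$) for whitened inputs with $\Sigma_x\preceq I$, so the claim holds exactly in that case.
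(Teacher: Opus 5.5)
Your Step~1 is exactly the paper's proof of the Frobenius bound. Eckart--Young--Mirsky gives the tail sum $\sum_{i>r}\sigma_i^2$, and the energy condition rearranges to $\|W-W_r\|_F^2\le\epsilon\|W\|_F^2$.

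The difference is the ``consequently'' clause, which the paper never actually proves. Its proof ends with the assertion that the matrix error ``directly translates to a bounded increase in the network's overall error.'' Your Step~3 supplies that missing argument (operator norm $\le$ Frobenius norm, then expectation). You are also right that the literal ``bounded by $\epsilon$'' holds only after normalization. The correct form is $\epsilon\|W\|_F^2B_x^2$, or the sharper $\epsilon\|\Sigma_x\|_2\|W\|_F^2$. Two small refinements:
\begin{itemize}
\item Since $\|W-W_r\|_2=\sigma_{r+1}$, you get $\sigma_{r+1}^2B_x^2$ pointwise before you even pass to the energy condition.
\item $B_x$ in Assumption~\ref{assumption:bounded} bounds network inputs, so for a hidden layer you need a separate bound on that layer's input.
\end{itemize}

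The step you must not leave vague is the ``appropriate scaling convention'' in Step~2, because under the paper's stated initialization it fails. The heads are initialized to $\frac1H\Sigma_r^{1/2}V_r^\top$, and the gates are a softmax with $\sum_hG_h(x)=1$. So the initialized layer computes $W_r/H$, not $W_r$. The appendix's claim $W^{\text{down}}(W^{\text{up}}_1+\cdots+W^{\text{up}}_H)\approx W$ silently drops the gate normalization. For that map your bound is false. Since $(W-W_r)x$ and $W_rx$ lie in orthogonal singular subspaces,
\[
\bigl\|(W-\tfrac1H W_r)x\bigr\|^2=\|(W-W_r)x\|^2+\bigl(1-\tfrac1H\bigr)^2\|W_rx\|^2 .
\]
For $H\ge2$ the second term tends to $(1-\frac1H)^2\|Wx\|^2$, not to $0$, as $\epsilon\to0$.

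So state explicitly that the ``InherNet-compressed layer'' means the rank-$r$ map $W_r$. Equivalently, initialize each head at $\Sigma_r^{1/2}V_r^\top$ without the $1/H$. This is the reading the paper's proof implicitly uses, since it only ever compares $W$ with $W_r$.

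Also write the composition as $W^{\text{down}}W^{\text{up}}_h\in\mathbb{R}^{m\times n}$, applied to $x\in\mathbb{R}^m$ via the transpose. The product $W^{\text{up}}_hW^{\text{down}}$ is not defined for the paper's shapes, though your norm bounds are unaffected.
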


\begin{proof}
By the Eckart–Young–Mirsky theorem, truncating the SVD of $W$ to its top $r$ singular values yields the optimal rank-$r$ approximation in Frobenius norm. The squared error of this approximation is $\sum_{i=r+1}^{\min(m,n)} \sigma_i^2$. Under the given condition, the relative error is 
\[
\frac{\sum_{i=r+1}^{\min(m,n)} \sigma_i^2}{\sum_{i=1}^{\min(m,n)} \sigma_i^2} \;=\; 1 - \frac{\sum_{i=1}^{r} \sigma_i^2}{\sum_{i=1}^{\min(m,n)} \sigma_i^2} \;\le\; \epsilon\,.
\] 
Thus $\|W - W_r\|_F^2 \le \epsilon\,\|W\|_F^2$. In a neural network, this matrix approximation error directly translates to a bounded increase in the network's overall error.
\end{proof}

\begin{proposition}[Knowledge Preservation Rate]
\label{prop:knowledge-preservation}
Let $f_W$ be a teacher network with weight matrices $\{W_l\}$ and $f_r$ be the corresponding InherNet with rank $r$. The functional similarity between $f_W$ and $f_r$ is lower-bounded by:
\begin{equation}
\text{Sim}(f_W, f_r) \geq 1 - \sum_{l=1}^L \alpha_l \left(1 - \frac{\sum_{i=1}^{r} \sigma_{l,i}^2}{\sum_{i=1}^{\min(m_l,n_l)} \sigma_{l,i}^2}\right)
\end{equation}
where $\alpha_l$ represents the layer's influence on network output, and $\sigma_{l,i}$ are the singular values of $W_l$. This bound is monotonically increasing with $r$ but not with $H$, establishing that rank is the primary determinant of InherNet knowledge preservation.
\end{proposition}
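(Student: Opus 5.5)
I will prove the bound by first fixing the undefined quantities so that it becomes a telescoping perturbation argument. Define the functional similarity through a normalized output discrepancy,
\[
\text{Sim}(f_W,f_r) := 1 - \frac{\mathbb{E}_{x\sim\mathcal{D}}\|f_W(x)-f_r(x)\|^2}{\mathbb{E}_{x\sim\mathcal{D}}\|f_W(x)\|^2}.
\]
I will analyze $f_r$ at its SVD initialization. There $W^{\text{down}}\sum_h W^{\text{up}}_h$ equals the truncated $W_{l,r}$ independently of the gate, since $\sum_h G_h = 1$ and all heads coincide. So at initialization $f_r$ is exactly the teacher with each $W_l$ replaced by its rank-$r$ truncation. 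This observation is also what makes the bound independent of $H$.

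\textbf{Hybrid-network decomposition.} Let $f^{(k)}$ denote the network whose first $k$ layers are truncated and whose remaining layers are original. Then $f^{(0)}=f_W$ and $f^{(L)}=f_r$. By the triangle inequality,
\[
f_W - f_r = \sum_{l=1}^L \bigl(f^{(l-1)}-f^{(l)}\bigr).
\]
The two networks in each term differ only in layer $l$.

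\textbf{Per-layer bound.} For the $l$-th term, the output difference is the downstream map applied to $(W_l - W_{l,r})h_{l-1}(x)$. Using Assumption~\ref{assumption:bounded} to bound the hidden input $\|h_{l-1}(x)\|$, and Lipschitz activations to give a downstream Lipschitz constant $\prod_{k>l}\|W_k\|_2$, the term is controlled by $\|W_l-W_{l,r}\|_F$. Lemma~\ref{lem:spectral} (via Theorem~\ref{thm:young}) identifies this error exactly: $\|W_l-W_{l,r}\|_F^2/\|W_l\|_F^2$ equals the residual spectral fraction $1-\sum_{i\le r}\sigma_{l,i}^2/\sum_i\sigma_{l,i}^2$. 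Collecting all constants, including $\|W_l\|_F^2$ and the normalization by $\mathbb{E}\|f_W\|^2$, into $\alpha_l$ yields the stated inequality.

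\textbf{Monotonicity and the main obstacle.} Monotonicity in $r$ is immediate, since each residual fraction decreases in $r$. Independence from $H$ follows from the initialization remark above. The main obstacle is the aggregation step. Summing norms and then squaring produces cross terms, and Cauchy–Schwarz gives a factor of $L$. I would absorb that factor into $\alpha_l$, or alternatively define $\text{Sim}$ with the unsquared normalized norm and use the first-order linearization of the residuals. I will try the linearization first, keeping second-order terms as $O(\epsilon^2)$, which matches the "$\geq$" bound up to a rescaling of $\alpha_l$. Either way, $\alpha_l$ is exactly where the depth-dependent amplification is hidden, so the statement is a bound with model-dependent constants rather than a sharp one.
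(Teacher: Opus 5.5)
You take the same route as the paper: a layer-wise spectral error from Lemma~\ref{lem:spectral}, aggregated with weights $\alpha_l$, with gating unable to beat the rank-$r$ error. Your telescoping construction is more explicit than the paper's proof, which only asserts that the similarity loss is an $\alpha_l$-weighted sum of the $\epsilon_l$ and never defines $\mathrm{Sim}$ or $\alpha_l$. However, two steps fail as written.

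\textbf{The initialized network is not the rank-$r$ truncation.} Under the paper's initialization every head is $W^{\text{up}}_h=\frac1H\Sigma_{l,r}^{1/2}V_{l,r}^\top$ and the gates are a softmax. The gated layer is therefore $\sum_h G_h(x)\,W^{\text{down}}W^{\text{up}}_h=\bigl(\sum_h G_h(x)\bigr)\frac1H W_{l,r}=\frac1H W_{l,r}$. Your own reasoning (gates sum to one, heads coincide) proves exactly this. The identity $W^{\text{down}}\sum_h W^{\text{up}}_h=W_{l,r}$ is about the ungated sum, which is not what the module computes. Since $W_l-\frac1H W_{l,r}=(W_l-W_{l,r})+(1-\frac1H)W_{l,r}$ splits into Frobenius-orthogonal pieces, the relative squared layer error is $\epsilon_l+(1-\frac1H)^2(1-\epsilon_l)$. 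This grows with $H$ and does not vanish at full rank. At $r=\min(m_l,n_l)$ for all $l$ the claimed bound reads $\mathrm{Sim}\ge1$ for any $\alpha_l$, yet the initialized network differs from $f_W$ when $H\ge2$ (for a bias-free ReLU net it is $H^{-L}f_W$). So the statement is false for the literal initialization, and your ``independence from $H$'' argument fails with it. To repair this, read the proposition as a statement about a suitable member of the rank-$r$ InherNet family, in the spirit of the paper's remark that gating ``can at best maintain this bound.'' Set every head to $\Sigma_{l,r}^{1/2}V_{l,r}^\top$ without the $1/H$; then the gated layer equals $W_{l,r}$ for every gate value and every $H$. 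Alternatively, add an explicit scale-invariance assumption, such as a normalization layer after each layer, that absorbs the $1/H$.

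\textbf{Your preferred aggregation route cannot produce the stated form.} The first-order term in $\Delta W_l=W_l-W_{l,r}$ is linear in $\|\Delta W_l\|_F=\sqrt{\epsilon_l}\,\|W_l\|_F$, so an unsquared similarity gives $\sum_l c_l\sqrt{\epsilon_l}$. Matching $\sum_l\alpha_l\epsilon_l$ would then force $\alpha_l\propto\epsilon_l^{-1/2}$, which depends on $r$ and voids the monotonicity claim. The neglected remainder is $O(\epsilon_l)$, not $O(\epsilon_l^2)$, and a bound holding only up to higher-order terms is not the inequality anyway. Use your squared definition instead. With $1$-Lipschitz activations fixing $0$, the hidden input to layer $l$ in the hybrid satisfies $\|h_{l-1}(x)\|\le B_x\prod_{k<l}\|W_{k,r}\|_2\le B_x\prod_{k<l}\|W_k\|_2$. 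You need this step because Assumption~\ref{assumption:bounded} bounds only the input and output, not hidden states. It gives $\|f_W(x)-f_r(x)\|\le\sum_l\beta_l\sqrt{\epsilon_l}$ with $\beta_l=B_x\|W_l\|_F\prod_{k\ne l}\|W_k\|_2$. Cauchy--Schwarz then yields $1-\mathrm{Sim}\le\sum_l\alpha_l\epsilon_l$ with $\alpha_l=L\beta_l^2/\mathbb{E}\|f_W(x)\|^2$, which is independent of $r$ and $H$. With both repairs your argument is a sound version of the paper's proof and supplies the concrete $\alpha_l$ the paper leaves undefined.
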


\begin{proof}
For a single layer $l$, the error introduced by low-rank approximation is bounded by $\epsilon_l = 1-\frac{\sum_{i=1}^{r} \sigma_{l,i}^2}{\sum_{i=1}^{\min(m_l,n_l)} \sigma_{l,i}^2}$ as established in Lemma~\ref{lem:spectral}. The total loss in functional similarity can be expressed as a weighted sum of these layer-wise errors, where weights $\alpha_l$ reflect each layer's contribution to the network's output. The gating mechanism in multi-head designs can at best maintain this bound by selecting the optimal head for each input, but cannot improve the fundamental spectral approximation determined by rank $r$. Therefore, similarity improves monotonically with rank but not necessarily with the number of heads.
\end{proof}

\subsubsection{Representational Power Preservation}

\begin{theorem}[Representational Power Preservation]
\label{thm:rep-power}
Let $\mathcal{F}_W$ be functions representable by a standard network with weights $\{W_l\}_{l=1}^L$. For any $f^* \in \mathcal{F}_W$ and $\delta > 0$, there exists an InherNet network with appropriate ranks $\{r_l\}$ and head counts $\{H_l\}$ such that the Approximation Error (Definition~\ref{def:approx-error}) satisfies:
\begin{equation}
\mathcal{E}(f_{\text{InherNet}}, f^*) \le \delta
\end{equation}
while the parameter count is reduced by $\Omega(\min_{l} \frac{\min(m_l,n_l)}{r_l H_l})$ relative to the standard network. Consequently, the InherNet architecture attains an EPR (Definition~\ref{def:epr}) higher than that of the standard architecture by at least the same factor.
\end{theorem}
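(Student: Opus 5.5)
The argument is constructive and proceeds layer by layer. We combine Lemma~\ref{lem:spectral} with a perturbation bound that propagates per-layer errors through the network, and we use Theorem~\ref{thm:reduction} for the parameter count.

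\textbf{Setup.} Fix $f^*\in\mathcal{F}_W$, realized by weights $\{W_l\}_{l=1}^L$ with activations $\phi$. We assume, as is implicit in the statement, that $\phi$ is $1$-Lipschitz (e.g.\ ReLU). Inputs satisfy $\|x\|\le B_x$ by Assumption~\ref{assumption:bounded}.

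\textbf{Construction.} In layer $l$ we use the InherNet module initialized as in Algorithm~\ref{alg:inhernet}. Because $\sum_h G_h(x)=1$, every head equals $\frac1H\Sigma_{r_l}^{1/2}V_{r_l}^\top$, and the module computes exactly the truncated matrix $W_{l,r_l}$ for every input, whatever the gate outputs. The gating therefore does not affect the approximation. The heads serve only to meet the statement's head-count bookkeeping, so we may take $H_l=1$ or any fixed value.

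\textbf{Error propagation.} We first bound the operator-norm error of each layer,
\[
\|W_l-W_{l,r_l}\|_2 \le \|W_l-W_{l,r_l}\|_F=\Bigl(\sum_{i>r_l}\sigma_{l,i}^2\Bigr)^{1/2}\le \sqrt{\epsilon_l}\,\|W_l\|_F ,
\]
using Theorem~\ref{thm:young} and Lemma~\ref{lem:spectral}. Let $h_l$ and $\tilde h_l$ denote the teacher and InherNet activations at layer $l$. A telescoping argument gives
\[
\|h_l-\tilde h_l\|\le \|W_l\|_2\,\|h_{l-1}-\tilde h_{l-1}\| + \|W_l-W_{l,r_l}\|_2\,\|\tilde h_{l-1}\|.
\]
Unrolling this recursion yields
\[
\|f^*(x)-f_{\text{InherNet}}(x)\| \le B_x\sum_l \sqrt{\epsilon_l}\,\|W_l\|_F\prod_{k\ne l}\bigl(\|W_k\|_2+\sqrt{\epsilon_k}\|W_k\|_F\bigr).
\]
Squaring and taking the expectation bounds $\mathcal{E}$ by $C(\{W_l\},B_x)\,\bigl(\sum_l\sqrt{\epsilon_l}\bigr)^2$. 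We then choose each $r_l$ as the smallest rank whose tail energy satisfies $\epsilon_l\le \delta/(L^2C)$, which gives $\mathcal{E}\le\delta$.

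\textbf{Parameter count and EPR.} By Theorem~\ref{thm:reduction}, layer $l$ uses about $H_lr_l(m_l+n_l)$ parameters instead of $m_ln_l$. Since $m_l+n_l\le 2\max(m_l,n_l)$ and $m_ln_l=\max(m_l,n_l)\min(m_l,n_l)$, the per-layer ratio is at least $\min(m_l,n_l)/(2H_lr_l)$. Summing over layers, the total ratio is bounded below by the minimum per-layer ratio, because $\sum_l a_l/\sum_l b_l\ge\min_l a_l/b_l$. This gives the $\Omega\bigl(\min_l \min(m_l,n_l)/(r_lH_l)\bigr)$ reduction. For EPR, $1/(|\theta|\,\mathcal{E})$ picks up the factor $|\theta_{\text{std}}|/|\theta_{\text{Inher}}|$ provided the two approximation errors are comparable. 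I would formalize this by comparing against the standard network at the same target accuracy $\delta$, i.e.\ EPR computed at error $\delta$ for both architectures.

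\textbf{Main obstacle.} The delicate step is the EPR comparison. The standard network represents $f^*$ exactly, so its infimum error is $0$ and its EPR is infinite. A literal comparison fails, and the conclusion needs the fixed-accuracy reinterpretation above, which I expect to require an explicit convention. A secondary caveat is that the reduction is meaningful only when the $r_l$ that achieve $\delta$ are small relative to $\min(m_l,n_l)$. This holds when the spectra of the $W_l$ decay, but in the worst case $r_l$ may approach full rank and the factor becomes trivial.
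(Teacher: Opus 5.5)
Your argument shares the paper's skeleton: per-layer truncated SVD with tolerances $\epsilon_l$ from the spectral-energy lemma, then the parameter-reduction theorem for the count. It differs in the steps that actually carry the proof.

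The paper only says the $\epsilon_l$ can be chosen so that ``the errors compose to at most $\delta$.'' Your perturbation recursion proves this. It needs assumptions the paper never states: $\phi$ $1$-Lipschitz \emph{and} $\phi(0)=0$, so that $\|\tilde h_{l-1}\|\le B_x\prod_{k<l}\|W_{k,r_k}\|_2$, plus bounded inputs. Note also that $\|W_{k,r_k}\|_2=\|W_k\|_2$, so your constant $C$ is genuinely independent of the $\epsilon_l$ you are choosing.

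The paper's second stage credits the heads with an improvement, $\|\hat W_l^{(h^*(x))}-W_l\|_F^2\le \frac{\epsilon_l}{H_l}\|W_l\|_F^2$. But each head's effective weight has rank at most $r_l$, and the bound does not depend on $x$. So it cannot go below the Eckart--Young tail $\sum_{i>r_l}\sigma_{l,i}^2$, and the heads buy nothing there. Making the gate irrelevant is therefore the sound choice. It also shows that $H_l$ enters the theorem only as a cost, since $H_l=1$ maximizes the stated reduction factor.

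Your mediant inequality $\sum_l a_l/\sum_l b_l\ge\min_l a_l/b_l$, combined with $m_l+n_l\le 2\max(m_l,n_l)$, supplies the aggregation over layers that the paper leaves implicit.

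Your objection to the EPR clause is also correct. The standard network realizes $f^*$ exactly, so its infimum error is $0$ and its EPR is infinite. The paper's ``follows directly from the definition'' does not address this, so a fixed-accuracy convention like yours is needed for the clause to mean anything.

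There is one slip in your construction. With every head equal to $\frac{1}{H_l}\Sigma_{r_l}^{1/2}V_{r_l}^\top$ and $\sum_h G_h(x)=1$, the module computes $\frac{1}{H_l}W_{l,r_l}$, not $W_{l,r_l}$. The gate-independence is right; the scale is not. The paper's remark that $W^{\text{down}}(W^{\text{up}}_1+\cdots+W^{\text{up}}_H)\approx W$ concerns the plain sum of heads, not the gated convex combination. Since the theorem is existential, set each head to $\Sigma_{r_l}^{1/2}V_{r_l}^\top$ instead, or take $H_l=1$ as you allow. The rest of your argument goes through unchanged.
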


\begin{proof}
We construct $f_{\text{InherNet}}$ in two stages:

\begin{itemize}[leftmargin=*, topsep=2pt, itemsep=2pt]
\item \textbf{Layer-wise low-rank approximation.} For each layer $l$ with weight matrix $W_l$, by Lemma~\ref{lem:spectral} we can choose a rank $r_l$ such that $\|W_l - \hat{W}_l\|_F^2 \le \epsilon_l\,\|W_l\|_F^2$, where $\hat{W}_l$ is the rank-$r_l$ approximation. Each $\epsilon_l > 0$ can be made arbitrarily small by increasing $r_l$. 

\item \textbf{Multi-head specialization.} Using $H_l > 1$ heads for layer $l$ further reduces the approximation error, since each head can specialize on a subset of the input distribution via the gating mechanism. For an input $x$, let $h^*(x)$ be the index of the head that the gating network selects. The effective weight applied to $x$ is then $\hat{W}_l^{(h^*(x))}$. Because inputs are partitioned among $H_l$ heads, the approximation error per head is reduced. Formally, one can ensure 
\begin{equation}
\big\|\,\hat{W}_l^{(h^*(x))} - W_l\,\big\|_F^2 \;\le\; \frac{\epsilon_l}{H_l}\,\|W_l\|_F^2\,,
\end{equation}
since the gating network routes each $x$ to the head whose low-rank weight best approximates $W_l$ on that input's region.
\end{itemize}

Applying these two steps across all layers and choosing $\{\epsilon_l\}$ such that the errors compose to at most $\delta$, we guarantee $\mathcal{E}(f_{\text{InherNet}}, f^*) \le \delta$. The parameter reduction factor follows from Theorem~\ref{thm:reduction}, and the EPR improvement follows directly from Definition~\ref{def:epr}.
\end{proof}

\begin{corollary}[Universal Approximation]
\label{cor:universal}
InherNet retains the universal function approximation property of standard neural networks while using asymptotically fewer parameters. For any continuous target function, a sufficiently large InherNet network can approximate it arbitrarily well (to Approximation Error $\delta \to 0$) with significantly higher Parameter Efficiency and EPR than a standard network of comparable accuracy.
\end{corollary}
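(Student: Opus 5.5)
The plan is to reduce the corollary to two ingredients: the classical universal approximation theorem for standard networks, and Theorem~\ref{thm:rep-power} to transfer that property to InherNet.

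\textbf{Step 1 (approximation by a standard network).} Fix a continuous target $f^*$ on a compact domain carrying the data distribution $\mathcal{D}$, and fix a tolerance $\delta>0$. By the classical universal approximation theorem, there is a standard network $f_W$ with weights $\{W_l\}_{l=1}^L$ such that $\sup_x\|f_W(x)-f^*(x)\|^2\le\delta/4$. Hence $\mathcal{E}(f_W,f^*)\le\delta/4$, and $f_W\in\mathcal{F}_W$.

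\textbf{Step 2 (transfer to InherNet).} Apply Theorem~\ref{thm:rep-power} to $f_W$ with tolerance $\delta/4$. This gives an InherNet $f_{\text{InherNet}}$ with $\mathcal{E}(f_{\text{InherNet}},f_W)\le\delta/4$. The elementary inequality $\|a+b\|^2\le 2\|a\|^2+2\|b\|^2$, taken in expectation over $\mathcal{D}$, then yields
\[
\mathcal{E}(f_{\text{InherNet}},f^*)\le 2\,\mathcal{E}(f_{\text{InherNet}},f_W)+2\,\mathcal{E}(f_W,f^*)\le\delta .
\]
Letting $\delta\to 0$ gives universality.

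To make the per-layer errors in Theorem~\ref{thm:rep-power} compose into a global error, I would use the layerwise route through Lemma~\ref{lem:spectral}. First, the activations are Lipschitz. Second, Assumption~\ref{assumption:bounded} bounds the inputs. Under these, a telescoping argument over layers gives
\[
\|f_W(x)-f_{\text{InherNet}}(x)\|\le \sum_l \Big(\prod_{k\ne l}\|W_k\|_2\Big)\,\|W_l-\hat W_l\|_F\,B_x .
\]
Each term can then be made small by choosing $\epsilon_l$ in Lemma~\ref{lem:spectral}.

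\textbf{Step 3 (efficiency claim).} For efficiency, invoke Theorem~\ref{thm:reduction} layerwise. Whenever $r_lH_l<\min(m_l,n_l)/2$, the compression ratio $\rho_l$ exceeds $1$, and by Definition~\ref{def:epr} the EPR then increases by the factor $\Omega(\min_l \min(m_l,n_l)/(r_lH_l))$.

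\textbf{Main obstacle.} The hardest step is the claim of ``asymptotically fewer parameters'' at \emph{arbitrarily small} error. Driving $\delta\to0$ may force $r_l$ up toward $\min(m_l,n_l)$, and at that point the advantage vanishes. In particular, at full rank InherNet with $H$ heads actually uses more parameters than the standard layer.

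I would address this by stating an explicit spectral-decay hypothesis. Suppose the approximating networks have weights whose singular values decay fast enough that the tail energy satisfies $\sum_{i>r}\sigma_{l,i}^2\le\epsilon\|W_l\|_F^2$ with $r=o(\min(m_l,n_l))$ as the width grows. Under that hypothesis the required $r_l$ remains sublinear, and the compression factor diverges. Without such a hypothesis, the corollary's efficiency clause can only be asserted in the regime of Theorem~\ref{thm:rep-power}, namely a fixed $\delta$ with low-rank-compressible weights. The proof would make this caveat explicit.
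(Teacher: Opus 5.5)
The paper gives no separate proof of this corollary. It is stated right after Theorem~\ref{thm:rep-power} and is meant to follow from it: Lemma~\ref{lem:spectral} handles per-layer truncation, Theorem~\ref{thm:reduction} gives the parameter count, and classical universal approximation implicitly bridges from $f^*\in\mathcal{F}_W$ to arbitrary continuous $f^*$. So your route is the intended one, and it is more careful exactly where the paper hand-waves:
\begin{itemize}
\item Step~1 makes the appeal to the classical theorem explicit.
\item The $\|a+b\|^2\le 2\|a\|^2+2\|b\|^2$ step handles the two-stage error.
\item Your telescoping bound replaces the paper's ``choosing $\{\epsilon_l\}$ such that the errors compose''. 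It is valid because truncation keeps $\|\hat W_k\|_2=\|W_k\|_2$, given $1$-Lipschitz activations with $\phi(0)=0$.
\end{itemize}
For the universality clause alone you need no perturbation bound at all: with $r_l=\operatorname{rank}(W_l)$ and identical heads, the InherNet reproduces $f_W$ exactly.

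Two cautions for Step~2:
\begin{itemize}
\item Realize each layer with identical heads equal to the full factor $\Sigma_r^{1/2}V_r^\top$, or take $H_l=1$. Because the gates sum to one, the paper's $\frac{1}{H}\Sigma_r^{1/2}V_r^\top$ initialization realizes $W_r/H$, not $W_r$.
\item Do not lean on the multi-head stage in the proof of Theorem~\ref{thm:rep-power}. No rank-$r_l$ head can get below the Eckart--Young floor $\sum_{i>r_l}\sigma_{l,i}^2$, and routing inputs cannot improve an input-independent Frobenius error. So the claimed $\epsilon_l/H_l$ bound adds nothing beyond stage one, which is all you actually use.
\end{itemize}

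The obstacle you identify is real, and the paper does not overcome it. The factor $\Omega(\min_l\min(m_l,n_l)/(r_lH_l))$ in Theorem~\ref{thm:rep-power} is just the ratio from Theorem~\ref{thm:reduction}. As $\epsilon_l\to0$, Lemma~\ref{lem:spectral} pushes $r_l$ up to $\operatorname{rank}(W_l)$, typically $\min(m_l,n_l)$, where InherNet has \emph{more} parameters than the layer it replaces. Hence the clause ``asymptotically fewer parameters / significantly higher PE and EPR'' is not a consequence of the paper's results.

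What is provable is universality unconditionally, plus the efficiency gain under a compressibility hypothesis like yours. Even then, the gain is only relative to the particular $f_W$ being inherited. An $H_l=1$ InherNet is itself a standard network with linear width-$r_l$ bottlenecks, so no general advantage over the best standard network of equal accuracy follows.

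Step~3 also needs a repair, because $\mathrm{EPR}$ involves the error as well as the parameter count:
$\mathrm{EPR}(f_{\mathrm{InherNet}})/\mathrm{EPR}(f_W)=\frac{|\theta_W|\,\mathcal{E}(f_W,f^*)}{|\theta_{\mathrm{InherNet}}|\,\mathcal{E}(f_{\mathrm{InherNet}},f^*)}$.
The parameter ratio therefore transfers only at matched error. Set the inheritance tolerance relative to $\mathcal{E}(f_W,f^*)$ rather than $\delta$, e.g.\ $\mathcal{E}(f_{\mathrm{InherNet}},f_W)\le\mathcal{E}(f_W,f^*)$. This yields $\mathcal{E}(f_{\mathrm{InherNet}},f^*)\le 4\,\mathcal{E}(f_W,f^*)$ and an EPR gain of at least $\rho/4$. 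The case $\mathcal{E}(f_W,f^*)=0$ must be excluded.
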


\begin{proposition}[Diminishing Returns of Multiple Heads]
\label{prop:head-diminishing}
For a given layer with rank $r$, the marginal reduction in approximation error from adding the $(H+1)$-th head satisfies:
\begin{equation}
\mathcal{E}(r,H) - \mathcal{E}(r,H+1) \leq \frac{c}{H^2}\mathcal{E}(r,1)
\end{equation}
where $c$ is a constant depending on input distribution characteristics, and $\mathcal{E}(r,H)$ denotes the approximation error with rank $r$ and $H$ heads. This establishes that benefits from additional heads diminish at a rate of $\Omega(1/H^2)$.
\end{proposition}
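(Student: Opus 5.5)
The plan is to combine two properties of the map $H\mapsto\mathcal{E}(r,H)$: an upper bound of order $1/H$ relative to the single-head error, and \emph{convexity} in $H$ (the marginal gains shrink). Together they give the $1/H^2$ rate. Throughout, $\mathcal{E}(r,H)$ denotes the best approximation error, in the sense of Definition~\ref{def:approx-error}, that a rank-$r$ InherNet layer with $H$ heads can reach on inputs $x\sim\mathcal{D}$. The best is taken over the head weights and the gate.

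\textbf{Step 1 (monotonicity).} Adding a head whose up-projection is zero and whose gate logit is very negative reproduces the $H$-head model as closely as desired. Hence $\mathcal{E}(r,H+1)\le\mathcal{E}(r,H)$, so every marginal gain $\Delta_H:=\mathcal{E}(r,H)-\mathcal{E}(r,H+1)$ is nonnegative.

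\textbf{Step 2 ($1/H$ decay).} I will use the routing construction from the proof of Theorem~\ref{thm:rep-power}. Partition the input distribution into $H$ regions of comparable mass, let the gate send each region to its own head, and fit that head's rank-$r$ map to the region. This gives
\[
\mathcal{E}(r,H)\le \frac{c'}{H}\,\mathcal{E}(r,1).
\]
The constant $c'$ depends on how evenly $\mathcal{D}$ can be split into pieces of comparable residual error; this is the ``input distribution characteristics'' dependence named in the statement. I would state this regularity explicitly as a hypothesis, since the earlier theorem asserts it without proof.

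\textbf{Step 3 (convexity, the main obstacle).} I need $\Delta_H$ to be nonincreasing in $H$, i.e.\ diminishing marginal returns. I would try to prove this through a submodularity-type argument on the partition. Splitting one cell of an optimal partition into two reduces error by a fraction of that cell's error, and the largest cell error shrinks as $H$ grows. If this cannot be made rigorous, I will add convexity of $H\mapsto\mathcal{E}(r,H)$ as a stated assumption alongside the one in Step~2, since the statement already allows $c$ to depend on the input distribution.

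\textbf{Step 4 (combine).} By convexity, $\Delta_H$ is no larger than the average gain over heads $\lceil H/2\rceil,\dots,H$. Then, using Step~1 and Step~2,
\[
\Delta_H\le \frac{2}{H}\bigl(\mathcal{E}(r,\lceil H/2\rceil)-\mathcal{E}(r,H)\bigr)\le \frac{2}{H}\,\mathcal{E}(r,\lceil H/2\rceil)\le \frac{4c'}{H^2}\,\mathcal{E}(r,1).
\]
This is the claim with $c=4c'$, and it also gives the exact-profile case $\mathcal{E}\propto 1/H$, where $\Delta_H=\mathcal{E}(r,1)/(H(H+1))$. I will note that what is proved is the upper bound $O(1/H^2)$; the ``$\Omega$'' in the statement should be read as a rate of decay.
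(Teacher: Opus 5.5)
\textbf{There is a genuine gap: Steps 2 and 3 carry all of the content, and neither is established.}

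Steps 1 and 4 are sound as far as they go. Your Step 1 construction uses a gate bias, which the paper's parameter count includes. In Step 4, the average over $k=\lceil H/2\rceil,\dots,H$ telescopes to $\mathcal{E}(r,\lceil H/2\rceil)-\mathcal{E}(r,H+1)$, not $\mathcal{E}(r,\lceil H/2\rceil)-\mathcal{E}(r,H)$. As written, your middle inequality fails for odd $H$ (for $H=1$ its right side is $0$). Since there are $\lfloor H/2\rfloor+1\ge H/2$ terms, the final bound $4c'\mathcal{E}(r,1)/H^2$ still holds.

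\textbf{Step 2 is essentially the conclusion.} Summing the claimed inequality over $k\ge H$ gives $\mathcal{E}(r,H)-\mathcal{E}(r,\infty)\le \frac{c}{H-1}\mathcal{E}(r,1)$, where $\mathcal{E}(r,\infty):=\lim_{H\to\infty}\mathcal{E}(r,H)$. So an $O(1/H)$ tail is \emph{necessary} for the proposition, and adopting it as a hypothesis assumes the substance of what is to be proved.

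The routing construction you borrow does not supply it. Each head's effective matrix $W^{\text{down}}W^{\text{up}}_h$ has rank at most $r$, so the $\epsilon_l/H_l$ Frobenius bound in the proof of Theorem~\ref{thm:rep-power} cannot fall below the Eckart--Young tail. Any benefit from routing is an input-weighted effect, and its size depends entirely on the distribution and the target.

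In the absolute form you wrote, Step 2 can fail even when the proposition holds. Suppose the gate reads the $r$-dimensional code, as in the parameter count of Theorem~\ref{thm:reduction}. Then the output is a function of $z=W^{\text{down}}x$ alone. For Gaussian inputs with the teacher's linear map as target, the conditional mean of the target given $z$ is linear in $z$. So one head is already optimal and $\mathcal{E}(r,H)=\mathcal{E}(r,1)$ for every $H$. At minimum, state Step 2 in tail form, $\mathcal{E}(r,H)-\mathcal{E}(r,\infty)\le c'\mathcal{E}(r,1)/H$; Step 4 handles this unchanged.

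\textbf{Step 3 argues in the wrong direction.} Splitting a cell of an optimal $H$-head configuration exhibits one particular $(H+1)$-head model. That gives an upper bound on $\mathcal{E}(r,H+1)$, hence a \emph{lower} bound on $\Delta_H$. Convexity, like the proposition, needs \emph{upper} bounds on $\Delta_H$, i.e.\ lower bounds on $\mathcal{E}(r,H+1)$ over all $(H+1)$-head models. The optimal $(H+1)$-head model need not refine the optimal $H$-head one, and nothing in the setup provides such a lower bound.

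What you have is a correct conditional lemma: monotone, convex, $O(1/H)$ tail $\Rightarrow$ $\Delta_H\le 4c'\mathcal{E}(r,1)/H^2$. It is not a proof of the statement as given; both hypotheses would have to be added as conditions on the input distribution and target.

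The paper's own argument does not close this gap either. It idealizes the gate as a hard partition into $H$ regions of mass about $1/H$, and simply asserts that the region the new head can improve has measure proportional to $1/H^2$. Your route differs by isolating exactly which assumptions the claim rests on, which is more informative than the paper's heuristic. Your remark that only an $O(1/H^2)$ upper bound, not $\Omega$, is at stake is correct.
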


\begin{proof}
The gating mechanism in InherNet effectively partitions the input space among $H$ heads. When adding an $(H+1)$-th head, the new head can at best specialize on a subset of inputs that were previously sub-optimally handled. Under reasonable assumptions about input distribution and the effectiveness of gating, the probability of an input being significantly better handled by the new head decreases quadratically with $H$. Specifically, if we model the input space as being divided into regions, with each head specializing in approximately $\frac{1}{H}$ of the space, then the $(H+1)$-th head can only improve regions where the existing heads perform poorly. The volume of such regions decreases proportionally to $\frac{1}{H^2}$, leading to the stated bound.
\end{proof}

\subsection{Additional analysis: interaction between InherNet and KD}
\label{app:distill-large-r}

To better understand how KD interacts with InherNet, we conduct an set of experiments on CIFAR-100 using ResNet-56 as the teacher. We construct InherNet models with ranks
$r \in \{8, 16, 32, 64, 128\}$, and train each model both with and without KD. The detailed results are shown in Appendix~\ref{sec:observations} as Insight~1 (Figure~\ref{fig:inhernet_kd}).

For aggressively compressed models (small ranks, e.g., $r \le 32$), adding KD improves performance. For example, at $r = 32$, the top-1 accuracy increases from $74.4\%$ (InherNet only) to $75.6\%$ (InherNet + KD). In this regime, the InherNet model has limited capacity, and the teacher's logits provide a strong inductive bias that compensates for the information lost in the truncated spectrum. For larger ranks (e.g., $r \ge 64$), the behavior changes. At $r = 64$, the KD term brings no additional increase but decrease, and at $r = 128$ it also actually degrades performance: the accuracy drops from $75.8\%$ without KD to $74.1\%$ with KD. In other words, as the InherNet becomes more expressive, KD transitions from being beneficial to being detrimental.

This phenomenon is closely aligned with our theoretical analysis. Proposition~\ref{prop:knowledge-preservation_} shows that, as the rank $r$ increases, the InherNet $f_r$ converges in function space toward the teacher $f_W$, and the approximation error is controlled by the tail of the singular-value spectrum. For sufficiently large $r$, the InherNet model already provides a good or even better performance than the teacher, and the task loss alone can drive it to solutions that generalize better than the original teacher.

When we add KD, we optimize
\begin{equation}
\label{eq:total-loss-inhernet-kd}
\mathcal{L}_{\text{total}}
= \mathcal{L}_{\text{task}}\bigl(f_r(x), y\bigr)
+ \mathcal{L}_{\text{KD}}\bigl(f_r(x), f_W(x)\bigr),
\end{equation}
where $\mathcal{L}_{\text{task}} = \lambda_{\text{CE}} \, \mathcal{L}_{\text{CE}}(f_r(x), y)$ encourages correct predictions, and
\begin{equation}
\mathcal{L}_{\text{KD}}
= \lambda_{\text{KD}} \, \tau^2 \,
\mathrm{KL}\!\left( p_W^{(\tau)} \,\middle\|\, p_r^{(\tau)} \right)
\end{equation}
encourages the InherNet model to align its logits with those of the teacher, with temperature $\tau > 1$ and softened distributions $p_W^{(\tau)}$ and $p_r^{(\tau)}$. For small $r$, this additional constraint is helpful: the teacher acts as a strong prior that guides the low-capacity InherNet toward a better solution. For large $r$, however, the KD term acts as an overly strong regularizer, penalizing any deviation from the teacher's logits even when such deviations would improve the model performance. This explains why the ``InherNet + KD'' curve eventually falls below the ``InherNet only'' curve as $r$ grows in Figure~\ref{fig:inhernet_kd}.

From a practical guideline, this analysis yields a simple suggestion for using KD with InherNet:
\begin{itemize}[leftmargin=*, topsep=2pt, itemsep=2pt]
\item \textbf{Aggressive compression (small rank.}
Use InherNet with KD. The InherNet provides a good low-rank subspace, while the KD loss helps recover discarded information.
\item \textbf{High-fidelity inheritance (large rank).}
Prefer InherNet-only training. The InherNet model already approximates or outperforms the teacher, and forcing it to stay close to the teacher logits can restrict its ability to find solutions that generalize better than the teacher.
\end{itemize}

\subsection{On layer-wise versus network-level compression}
\label{app:layerwise-compression}

This subsection provides additional discussion on whether the layer-wise SVD step in InherNet could lead to suboptimal compressed models.

In InherNet, the decomposition step is applied per layer: for each teacher weight matrix $W$, we perform a truncated SVD and construct a low-rank expert module as initialization. This is the only place where SVD is used in a layer-wise way. After this initialization, all inherited layers are trained jointly, end-to-end, under the same task loss (and, when used, the KD loss). No layer is frozen after compression. Consequently, inter-layer correlations are not fixed at the decomposition stage; instead, they are re-optimized globally during training, and the final solution is determined by network-level optimization dynamics rather than by the initial SVD alone.

\paragraph{Network-level guarantees despite layer-wise SVD.}
Although SVD is performed independently at each layer, our theoretical analysis is carried out at the network level.

\emph{(i) Layer-wise spectral control.}
Lemma~\ref{lem:spectral_} states that if the rank $r$ of layer is chosen then the optimal rank-$r$ approximation $W_{r}$ satisfies $\|W - W_{r}\|_F^2  \leq  \varepsilon \|W\|_F^2$. In other words, the layer-wise approximation error can be made arbitrarily small by increasing $r$, and the truncated SVD gives the best possible rank-$r$ approximation in Frobenius norm.

\emph{(ii) Knowledge preservation for the whole network.}
Proposition~\ref{prop:knowledge-preservation_} aggregates these layer-wise errors into a functional similarity bound between the teacher $f_W$ and the inherited network $f_r$ where $\alpha_l$ measures the influence of layer $l$ on the final output. This bound shows that, although the approximation is performed locally at the level of individual layers, its effect on the network function is controlled globally via a weighted sum over layers: layers that are more influential (larger $\alpha_l$) are allowed smaller spectral truncation error, while less influential layers can tolerate more compression.

\emph{(iii) Representational power under compression.}
Theorem~\ref{thm:rep-power_} further shows that the representational power of the original network is preserved under compression. Formally, it shows that for any function $f^* \in F_W$ representable by the original network, $\forall \delta > 0, \exists\{r_l,H_l\} \text{ such that } \mathcal{E}(f_{\text{InherNet}}, f^*) \le \delta,$ while the total parameter count is reduced by $\Omega \big(\min_l \frac{\min(m_l,n_l)}{r_l H_l}\big)$.
The constructive proof proceeds in two stages: (1) layer-wise low-rank approximation, and (2) multi-head specialization. Importantly, it chooses $\{r_l\}$ and $\{H_l\}$ so that the composition of all layer-wise errors yields a network-level approximation error at most $\delta$. Thus, the interaction between layers is explicitly accounted for through depth, and layer-wise SVD does not inherently force the compressed network into a suboptimal representational regime.

Taken together, these results imply that, as long as each $r_l$ is chosen to keep the spectral tail small, there exists a compressed InherNet that approximates the original network arbitrarily well while using fewer parameters. So, the layer-wise nature of the SVD step is compatible with strong global approximation guarantees.

\paragraph{Empirical evidence against systematic suboptimality.}
If layer-wise SVD led the optimization to consistently suboptimal local minima, one would expect the compressed models to be always worse than the teacher across tasks. Our experiments do not support this. At the compression budgets used in the main text, the combination of layer-wise SVD, joint end-to-end training, and multi-head specialization produces inheritors that are at least competitive with and sometimes strictly better than their teachers.

For example, on CIFAR-100 in Sec.~\ref{sec:result_cifar}, the InherNet$_{\text{Large}}$ compressed from ResNet-110 achieves $75.88\%$ top-1 accuracy, surpassing the ResNet-110 teacher ($74.31\%$). In multimodal CLIP-style benchmarks in Sec.~\ref{sec:vision-language}, the inherited model reaches $36.65\%$ zero-shot ImageNet accuracy, significantly outperforming the ResNet-101 teacher ($32.75\%$). These results indicate that the layer-wise SVD initialization does not trap the model in a systematically suboptimal region of the loss landscape; instead, it provides a good starting point within a low-rank subspace, from which global training recovers or even improves the teacher's performance.

\begin{table*}[t]
\centering
\caption{The parameter sizes of the teacher network, student network, and the proposed InherNet on the CIFAR-100 dataset.}
\vspace{-0.3cm}
\resizebox{\columnwidth}{!}{
\begin{tabular}{@{}cc|ccccccc@{}}
\toprule[1.2pt]
\multicolumn{2}{c|}{\multirow{2}{*}{Teacher}}           & ResNet32$\times 4$ & VGG13           & WRN-40-2         & WRN-40-2         & ResNet56         & ResNet110        & ResNet110        \\
\multicolumn{2}{c|}{}                                   & 7.43M              & 9.46M           & 2.26M            & 2.26M            & 861.62K          & 1.74M            & 1.74M            \\ \midrule
\multicolumn{2}{c|}{\multirow{2}{*}{Student}}           & ResNet8$\times 4$  & VGG8            & WRN-40-1         & WRN-16-2         & ResNet20         & ResNet32         & ResNet20         \\
\multicolumn{2}{c|}{}                                   & 1.23M              & 3.97M           & 569.78K          & 703.28K          & 278.32K          & 472.76K          & 278.32K          \\ \midrule
\multicolumn{1}{c|}{\multirow{2}{*}{InherNet}} & Small & 907.19K$_{r=4}$    & 3.82M$_{r=128}$ & 540.55K$_{r=16}$ & 540.55K$_{r=16}$ & 212.05K$_{r=8}$  & 417.12K$_{r=8}$  & 222.34K$_{r=4}$  \\ \cmidrule(l){2-9} 
\multicolumn{1}{c|}{}                          & Large & 1.76M$_{r=8}$      & 6.91M$_{r=256}$ & 1.01M$_{r=32}$   & 1.01M$_{r=32}$   & 388.88K$_{r=16}$ & 773.18K$_{r=32}$ & 417.12K$_{r=8}$ \\ \bottomrule[1.2pt]
\end{tabular}%
}
\label{tab:param}
\end{table*}

\section{Experimental Details}
\label{sec:exp}

\subsection{Settings}
\label{sec:implementation}

\subsubsection{Vision: Image Classification}

In this task, we conduct extensive experiments to validate the advantages of the proposed InherNet over traditional KD methods. The experimental setup is as follows:

\textbf{Datasets.} We use two widely adopted image classification datasets—CIFAR-100~\citep{krizhevsky2009learning} and ImageNet~\citep{russakovsky2015imagenet}. CIFAR-100 contains 100 classes, with 50,000 training images and 10,000 validation images. ImageNet includes 1,000 classes, with approximately 1.28 million training images and 50,000 validation images.

\textbf{Baselines.} We compare against several mainstream KD methods, including logit-based methods: KD~\citep{hinton2015distilling}, CTKD~\citep{li2022curriculum}, DKD~\citep{zhao2022decoupled}, MLKD~\citep{jin2023multi}, and Logit Std.~\citep{sun2024logit}; as well as feature-based methods: CRD~\citep{peng2019correlation}, OFD~\citep{heo2019comprehensive}, ReviewKD~\citep{chen2021distilling}, SimKD~\citep{chen2022knowledge}, and CAT-KD~\citep{guo2023class}.

\textbf{Implementation Details.} Following previous works~\citep{chen2021distilling, zhao2022decoupled, jin2023multi, sun2024logit}, we use SGD optimizer with an initial learning rate of 0.05, momentum of 0.9, and weight decay of 0.0005. On CIFAR-100, all methods train for 240 epochs except MLKD, which uses 480 epochs. On ImageNet, all methods train for 100 epochs. For InherNet, the default number of experts $H$ is set to 3, and the rank $r$ is adjusted based on the student network. All experiments are repeated four times, and we report the average performance. The experimental settings on CIFAR-100 and ImageNet are shown in Table~\ref{tab:cifar_setting} and Table~\ref{tab:imagenet_setting}, respectively. Table~\ref{tab:param} presents the parameter sizes of the teacher network, student network, and the proposed InherNet on the CIFAR-100 dataset.

\begin{table*}[ht]
\centering
\caption{Experiment settings on CIFAR-100}
\vspace{-0.3cm}
\begin{tabular}{c|c}
\toprule[1.2pt]
\textbf{Parameter} & \textbf{Value} \\ \midrule \midrule
Optimizer & SGD \\ \midrule 
Epochs & 240 (480 for MLKD) \\ \midrule
Batch Size & 64 \\ \midrule
Learning Rate & 0.05 \\ \midrule
Learning Rate Decay & Shrinks by 0.1 at 150th, 180th, and 210th epochs \\ \midrule
Momentum & 0.9 \\ \midrule
Weight Decay & 5e-4 \\ \midrule
CE Loss Weight & KD, CTKD: 0.1 \\
& DKD, MLKD: 1.0 \\ \midrule
Temperature ($\tau$) & 2 (default) \\ \midrule
KD Loss Weight ($\lambda_{\text{KD}}$) & 9 (default) \\
& DKD: 12 \\ \bottomrule[1.2pt]
\end{tabular}

\label{tab:cifar_setting}
\end{table*}

\begin{table*}[ht]
\caption{Experiment settings on ImageNet}
\vspace{-0,3cm}
\centering
\begin{tabular}{c|c}
\toprule[1.2pt]
\textbf{Parameter} & \textbf{Value} \\ \midrule \midrule
Optimizer & SGD \\ \midrule
Epochs & 100 \\ \midrule
Batch Size & 512 \\ \midrule
Learning Rate & 0.2 \\
& Divided by 10 at 30th, 60th, and 90th epochs \\ \midrule
Momentum & 0.9 \\ \midrule
Weight Decay & 1e-4 \\ \midrule
CE Loss Weight & KD, CTKD: 0.1 \\
& DKD, MLKD: 0.5 \\ \midrule
Temperature ($\tau$) & 2 (default) \\ \midrule
KD Loss Weight ($\lambda_{\text{KD}}$) & 9 (default) \\
& DKD: 12 \\ \bottomrule[1.2pt]
\end{tabular}

\label{tab:imagenet_setting}
\end{table*}

\subsubsection{Language: GLUE Benchmark}
In this task, we evaluate InherNet's performance on multiple text tasks. The experimental setup is as follows:

\textbf{Datasets.} To remain consistent with prior research~\citep{raffel2019exploring}, we train and evaluate InherNet's performance on the GLUE benchmark~\citep{wang2018glue}. The benchmark includes 9 datasets used to evaluate natural language understanding systems. Since the WNLI dataset is adversarial in nature in terms of the training set~\citep{raffel2019exploring, devlin2019bert}, we do not conduct experiments on WNLI. Since the original test set is not publicly available, we test on the GLUE validation set.

\textbf{Baselines.} Our experiments are based on the T5 model~\citep{raffel2019exploring} implemented in HuggingFace~\citep{wolf-etal2020transformers}. Due to the lack of widely known and reproducible T5-based KD methods, we use the traditional KD method~\citep{hinton2015distilling}, with a teacher-student setup of T5-Base/T5-Small with parameters 222M/60M.

\textbf{Implementation Details.} We use the AdamW optimizer~\citep{loshchilov2017decoupled} with a training learning rate of 0.0003 and a distillation learning rate of 0.0005, with weight decay set to 0.1. The number of training epochs is 10, the batch size is 32, and the distillation coefficient and temperature are set to 0.3 and 4.0, respectively. The rank and number of expert heads of InherNet are set to 128 and 2, respectively, with a total of 128M parameters.

\begin{table}[t]
\centering
\caption{Configuration of paired visual and text encoders used in multimodal evaluation.}
\vspace{-0.3cm}
\begin{tabular}{@{}cc|cccc@{}}
\toprule[1.2pt]
\multicolumn{2}{c|}{Visual encoder: CNN}                           & \multicolumn{4}{c}{Text encoder: Transformer}                                                                                                            \\ \midrule
\multicolumn{1}{c|}{Model}           & \multicolumn{1}{l|}{Params} & \multicolumn{1}{c|}{Layer}               & \multicolumn{1}{c|}{Width}                & \multicolumn{1}{c|}{Head}               & Params                  \\ \midrule
\multicolumn{1}{c|}{ResNet-101}      & 56.26M                      & \multicolumn{1}{c|}{12}                  & \multicolumn{1}{c|}{512}                  & \multicolumn{1}{c|}{8}                  & 37.83M                  \\ \midrule
\multicolumn{1}{c|}{ResNet-18}       & 11.43M                      & \multicolumn{1}{c|}{\multirow{3}{*}{12}} & \multicolumn{1}{c|}{\multirow{3}{*}{384}} & \multicolumn{1}{c|}{\multirow{3}{*}{6}} & \multirow{3}{*}{21.29M} \\
\multicolumn{1}{c|}{EfficientNet-B0} & 4.66M                       & \multicolumn{1}{c|}{}                    & \multicolumn{1}{c|}{}                     & \multicolumn{1}{c|}{}                   &                         \\
\multicolumn{1}{c|}{MobileNetV3}     & 2.04M                       & \multicolumn{1}{c|}{}                    & \multicolumn{1}{c|}{}                     & \multicolumn{1}{c|}{}                   &                         \\ \midrule
\multicolumn{1}{c|}{InherNet}        & 12.74M                      & \multicolumn{1}{c|}{12}                  & \multicolumn{1}{c|}{512}                  & \multicolumn{1}{c|}{8}                  & 18.95M                  \\ \bottomrule[1.2pt]
\end{tabular}%
\label{tab:conf}
\end{table}

\subsubsection{Vision $\Leftrightarrow$ Language: Multi-modal Evaluation}

We use the AdamW optimizer~\citep{loshchilov2017decoupled} with a learning rate of 0.001 and a weight decay of 0.1. A cosine learning rate schedule is applied with a linear warm-up for 10K iterations over 30 epochs. The batch size is set to 1024, and the hyperparameters for CLIP-KD are consistent with those in the original paper~\citep{yang2024clip}, while other training settings follow the original CLIP~\citep{radford2021learning}. It is worth noting that, due to limited computational resources, our experiments were pre-trained only on CC3M, with all other experimental settings aligned with those in the original CLIP-KD paper. Following standard practices, we use Recall@K (R@K) to evaluate retrieval performance within the top K nearest neighbors. Table~\ref{tab:conf} shows the configurations of the visual and text encoders used in multimodal evaluation.

\subsection{Results on ImageNet}
\label{sec:results}

Table~\ref{tab:imagenet} presents a comparison of top-1 and top-5 accuracy between InherNet and various distillation methods on the ImageNet dataset. It can be observed that InherNet$_{\ \text{Large}}$ continues to demonstrate leading performance on the large-scale dataset, while InherNet$_{\ \text{Small}}$ achieves a good balance between efficiency and effectiveness.

\begin{table*}[t]
\centering
\caption{Comparison of top-1 and top-5 Accuracy (\%) on the ImageNet validation set between InherNet and previous state-of-the-art knowledge distillation methods, using ResNet34 as the teacher and ResNet18 as the student.}
\vspace{-0.3cm}
\resizebox{\textwidth}{!}{%
\begin{tabular}{@{}c|cc|ccccc|ccccc|cc@{}}
\toprule[1.5pt]
Accuracy & Teacher & Student & CRD   & OFD   & ReviewKD & SimKD & CAT-KD & KD    & KD+CTKD & DKD   & MLKD  & MLKD+Logit Std. & InherNet$_{\text{\ Small}}$ & InherNet$_{\text{\ Large}}$ \\ \midrule
top-1    & 73.31              & 69.75              & 71.17 & 70.81 & 71.61    & 71.59 & 71.26  & 71.03 & 71.38   & 71.70 & 71.90 & \underline{72.08}           & 71.83              & \textbf{72.46}              \\
top-5    & 91.42              & 89.07              & 90.13 & 89.98 & 90.51    & 90.48 & 90.45  & 90.05 & 90.27   & 90.41 & 90.55 & \underline{90.74}           & 90.73              & \textbf{90.88}              \\ \bottomrule[1.5pt]
\end{tabular}%
}

\label{tab:imagenet}
\vspace{-0.3cm}
\end{table*}

\subsection{Additional experiments on large Transformer language models}
\label{app:inhernet-llama}

The T5 experiments in the main text demonstrate that InherNet naturally extends to encoder--decoder Transformers~\cite{raffel2019exploring}. To further assess the applicability of InherNet to large-scale autoregressive LMs, we conduct additional experiments on LLaMA-2-7B~\cite{touvron2023llama}, using its HuggingFace implementation.

\paragraph{Why conventional KD is difficult for LLaMA-style LMs.}
Token-level KD for large decoder-only LMs faces several practical challenges:
\begin{itemize}[leftmargin=*, topsep=2pt, itemsep=2pt]
\item \textbf{High-dimensional logits.}
LLaMA-2 uses a vocabulary of roughly $32$K tokens. Token-level KD requires the teacher to output the full vocabulary distribution at every generation step and the student to match it via a KL loss. This leads to large tensors of shape $32k \times (\text{sequence length}) \times (\text{batch size})$, which makes KD memory- and bandwidth-intensive, especially for long sequences.
\item \textbf{Strict teacher forcing.}
For stable token-level KD, the teacher must read ground-truth tokens when producing logits. If the teacher instead conditions on student-generated tokens, the KL targets become unstable. However, teacher forcing at each time step requires forwarding the teacher model for every position in the sequence, which increases both memory usage and computation.
\item \textbf{Tokenizer mismatch.}
Standard KL-based KD assumes that the teacher and student share the same vocabulary and output dimension. When the tokenizers differ, the teacher's probability vector over its vocabulary cannot be directly aligned with the student's vocabulary, and the KL loss is no longer well-defined. In practice, KD is thus only straightforward when both models share an identical tokenizer and output layer.
\item \textbf{Empirical instability.}
Recent studies on KD for autoregressive LMs report that traditional KD can degrade performance or fail to converge in some regimes~\cite{gu2023minillm,zhong2024revisiting}, further limiting its applicability to large LLMs.
\end{itemize}

In contrast, our proposed InherNet does not rely on a token-level KL loss between teacher and student outputs. Instead, it operates directly at the level of network weights: we factorize and inherit the linear layers of the teacher using the proposed SVD-based initialization and asymmetric MoE structure. As a result, InherNet is agnostic to the exact vocabulary and can be applied even when the teacher and student have different tokenizers.

We implement InherNet on LLaMA-2-7B, setting the number of experts to $H = 4$ and the rank of each inherited linear mapping to $r = 256$. We consider two domains:
\begin{itemize}[leftmargin=*, topsep=2pt, itemsep=2pt]
\item \textbf{Math reasoning.}
We fine-tune on MetaMathQA~\cite{yu2023metamath} and evaluate on GSM8K~\cite{cobbe2021training}.
\item \textbf{General instruction following.}
We fine-tune on databricks-dolly-15k~\cite{conover2023free} and evaluate on an MMLU-style benchmark~\cite{hendrycks2020measuring}.
\end{itemize}

Conventional KD baselines are not applicable due to the aforementioned constraints. Instead, we include a self-distillation baseline where both teacher and student share the same LLaMA-2-7B weights. This setting is equivalent to~\cite{zhang2019your}, and thus offers no parameter compression. For InherNet, we compress the $7$B-parameter teacher into a $4.2$B-parameter InherNet model. The results are summarized in Table~\ref{tab:llama-inhernet}.

\begin{table}[t]
\centering
\caption{Additional results on LLaMA-2-7B. ``Math'' reports accuracy on GSM8K; ``Generality'' reports accuracy on an MMLU-style evaluation. The InherNet achieves comparable or better performance than the $7$B teacher in math reasoning while using substantially fewer parameters.}
\vspace{-0.3cm}
\label{tab:llama-inhernet}
\begin{tabular}{lccc}
\toprule[1.2pt]
Method & Parameters & Math & Generality \\
\midrule
Teacher & 7B   & 76.34 & 50.31 \\
Student & 7B   & 76.88 & 51.06 \\
InherNet & 4.2B & 77.19 & 49.82 \\
\bottomrule[1.2pt]
\end{tabular}
\end{table}

\paragraph{Discussion.}
The $4.2$B-parameter InherNet model slightly surpasses the original $7$B teacher on GSM8K and maintains competitive performance on the general instruction-following benchmark. This indicates that InherNet is also applicable to larger-scale Transformer LMs, achieving strong performance with nearly half the parameter count. We attribute this behavior to the extended SVD initialization, which preserves most of the teacher's forward computation, and the asymmetric MoE structure, which supports stable generalization during fine-tuning.

\paragraph{Summary of scalability to larger models}

The above experimental results already evaluate InherNet on substantial-scale models and datasets. To further demonstrate scalability to large decoder-only Transformer LMs, we instantiate InherNet on LLaMA-2-7B, following the above setup (Table~\ref{tab:llama-inhernet}). In this experiment, the teacher and the self-distilled baseline both use the full 7B-parameter LLaMA-2 model, whereas the inherited model compresses the teacher into a 4.2B-parameter InherNet. The inherited model matches or slightly surpasses the teacher. 

These results, together with the architecture-agnostic nature of our theory (Theorem~\ref{thm:convergence}, Theorem~\ref{thm:reduction}, and Corollary~\ref{cor:universal_}), indicate that InherNet is not limited to small networks: the same SVD-based inheritance and asymmetric MoE structure can be applied to larger convolutional, Transformer, and multimodal architectures, and remains effective in large-model regimes.

\paragraph{InherNet's performance on more complex multimodal tasks} We have conducted additional experiments on the VQA task using VQA v2.0~\cite{goyal2017making}. With the teacher/student models MobileVLM V2 3B\footnote{\url{https://huggingface.co/mtgv/MobileVLM_V2-3B}}/1.7B\footnote{\url{https://huggingface.co/mtgv/MobileVLM_V2-1.7B}} \cite{chu2024mobilevlm}, we follow~\cite{chu2024mobilevlm} by freezing the vision encoder and tokenizer, and splitting training into a pre-training stage and a multi-task fine-tuning stage. For top-K selection in text-focused vision-token knowledge distillation, we take the top 16 tokens with the highest attention scores. In the multi-task stage, only VQA data is used. All other settings follow~\cite{chu2024mobilevlm}. Given the dataset's large scale and limited computational resources, which make training excessively time-consuming, we train the model on only 10\% of the training set and evaluate cross-modal alignment on the full test set. We compare the proposed InherNet with vanilla KD~\cite{hinton2015distilling}, setting the number of experts $H=4$ and rank $r=256$. Results are shown below:

\begin{table}[t]
\centering
\caption{Comparison of VQA performance between vanilla KD and InherNet.}
\vspace{-0.3cm}
\begin{tabular}{lcccc}
\toprule[1.2pt]
Method & Size & overall & other & number \\
\hline
Vanilla & 1.7B & 56.12 & 38.45 & 34.28 \\
InherNet & 1.5B & 57.03 & 41.63 & 37.22 \\
\bottomrule[1.2pt]
\end{tabular}
\end{table}

\subsection{Experimental Details of Insights on InherNet}
\label{sec:observations}

\begin{githubquote}
\textbf{Insight 1. Distillation benefits small-scale InherNet, but harms the performance of large-scale InherNet.}
\end{githubquote}

\begin{figure}[tbp]
\centering
    \vspace{-0.3cm}
    \includegraphics[width=0.6\linewidth]{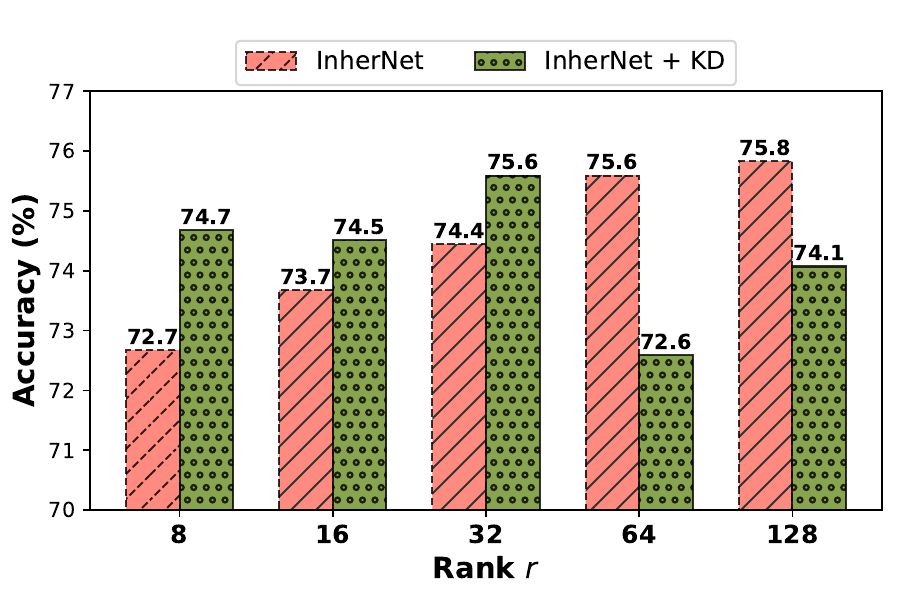}
    \vspace{-0.4cm}
    \textbf{\caption{\small The impact of distillation on InherNet of different scales with varying ranks.}
    \label{fig:inhernet_kd}}
\end{figure}

We construct InherNets of different scales based on various ranks ($r = 8, 16, 32, 64, 128$), using them as student networks, and adopt ResNet-56 as the teacher network for knowledge distillation~\citep{hinton2015distilling}. Figure~\ref{fig:inhernet_kd} shows the performance changes of InherNets of different scales before and after distillation. The results indicate that small-scale InherNets ($r \leq 32$) benefit significantly from distillation, with smaller models showing greater improvement. In contrast, large-scale InherNets experience a performance drop after distillation. We speculate that this is because large-scale InherNets already inherit most of the teacher's knowledge and may even outperform the teacher through independent training, InherNet\textsubscript{\ Large} consistently outperforms the teacher network). Therefore, further distillation offers little benefit to such networks and may even have a negative effect.

\begin{figure}[t]
\centering
    \includegraphics[width=0.5\linewidth]{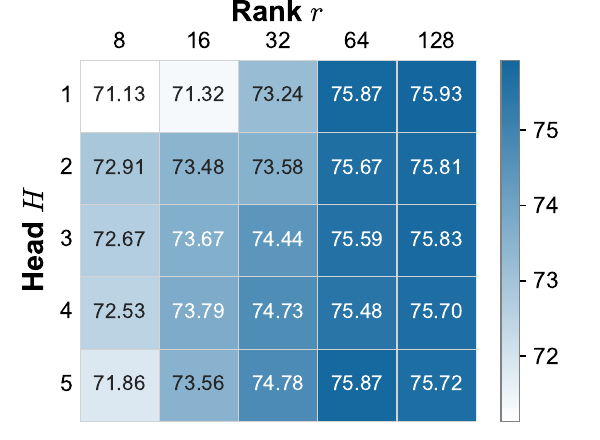}
    \vspace{-0.4cm}
    \textbf{\caption{\small The impact of rank size $r$ and the number of expert heads $H$ on the performance of the  proposed InherNet.}
    \label{fig:rank_head}}
\end{figure}

\begin{githubquote}
\textbf{Insight 2. Rank significantly affects model performance; using multiple expert heads performs better than a single head.}
\end{githubquote}

We investigate the impact of rank size $r$ and the number of expert heads $H$ on the performance of InherNet. As shown in Figure~\ref{fig:rank_head}, rank exerts a more significant influence on InherNet. We attribute this to the fact that rank directly determines how well InherNet inherits core knowledge from the teacher network, which is crucial for subsequent learning—an advantage that cannot be compensated by simply increasing model parameters such as the number of expert heads. Moreover, we also observe that single-head structures perform worse than multi-head ones. However, increasing the number of heads beyond one ($H > 1$) does not necessarily lead to further gains. We speculate that while multi-head mechanisms combined with gating improve generalization and diversity, too many expert heads may cause overfitting.

\begin{figure*}
    \centering
    \includegraphics[width=0.9\linewidth]{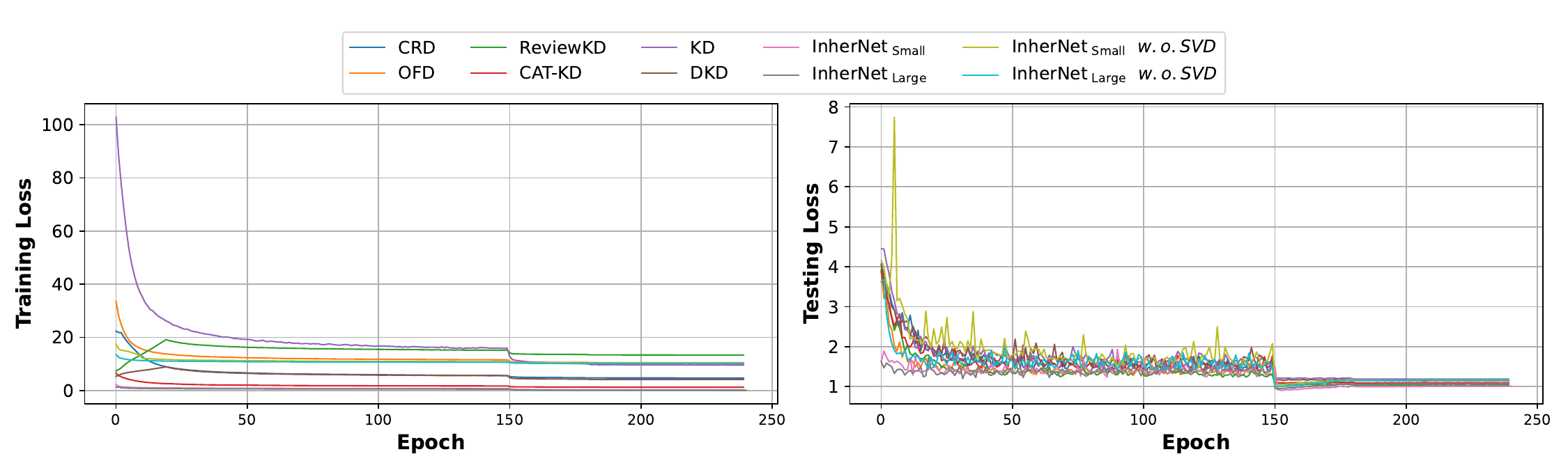}
    \vspace{-0.4cm}
    \textbf{\caption{The training and testing loss curves of InherNet and various KD methods during training.}
    \label{fig:loss}}
\end{figure*}

\begin{githubquote}
\textbf{Insight 3. Compared to traditional distillation methods, InherNet converges significantly faster.}
\end{githubquote}

We select the commonly used KD method with epoch=240 as the subject for studying model convergence speed. Figure~\ref{fig:loss} shows the change in training and testing loss during the training process for different methods in a single experiment. The results indicate that InherNet converges significantly faster than traditional KD methods. However, InherNet without SVD initialization shows highly unstable generalization performance during training. This phenomenon strongly supports the effectiveness of the proposed extended SVD method in accelerating model convergence.

\section{Related Work}
\label{sec:related_work}

\subsection{Knowledge Distillation}
\label{sec:kd}
Knowledge distillation (KD)~\citep{hinton2015distilling} is an efficient model compression technique that helps a smaller student network learn better during training by transferring the knowledge of a larger teacher network. Traditional knowledge distillation methods typically focus on minimizing the prediction differences between the teacher and student networks, typically using loss functions like KL divergence calculated from the teacher's logit outputs. ecent research has focused on two main directions: logit distillation~\citep{hinton2015distilling, chen2020online, li2020online, li2022curriculum, zhao2022decoupled, jin2023multi, zhang2023blindly, sun2024logit} and feature distillation~\citep{yim2017gift, peng2019correlation, heo2019comprehensive, chen2021distilling, li2021online, chen2022knowledge, lin2022knowledge, guo2023class}. In logit distillation, the goal is to optimize the similarity of the outputs by matching the prediction probabilities of the teacher and the student, while feature distillation focuses on transferring the intermediate feature representation within the model, providing more fine-grained learning information. However, due to the limited capacity of the student network, especially the performance ceiling imposed by the teacher network, the student's performance often struggles to match that of the teacher~\citep{gou2021knowledge, hu2023teacher, xu2024survey}.

\subsection{Low-Rank Decomposition}
\label{sec:lora}
In recent years, low-rank decomposition has become a key focus in model compression, aiming to reduce storage requirements and improve inference efficiency by approximating weight matrices through the product of low-rank matrices. Initially applied to the weight matrices in convolutional neural networks (CNNs), particularly the 4D tensors of convolutional kernels, early studies~\citep{denton2014exploiting, lebedev2014speeding, tai2015convolutional, xu2019trained, yang2020learning} focused on effective matrix and tensor decomposition methods, such as using CP decomposition to decompose convolutional kernels into multiple low-rank layers~\citep{lebedev2014speeding}. While this approach reduces model complexity, it can lead to issues like vanishing or exploding gradients when scaling to large, deep networks, making training and fine-tuning more challenging~\citep{tai2015convolutional}. To address this, subsequent research has reshaped high-dimensional tensors into 2D matrices and applied techniques like singular value decomposition (SVD) for compression, effectively reducing computational load. For example, channel decomposition~\citep{zhang2015accelerating} using SVD reduces convolution layers to two, with one layer being a $1\times 1$ convolution, and further eliminates small singular value channels to reduce computation. However, these low-rank decomposition methods are typically applicable to specific CNN structures and vision tasks~\citep{wang2026unified}. Moreover, existing methods have not explored the potential link with student networks in KD, while our proposed InherNet aims to establish an effective bridge between the two.

\paragraph{Low-precision low-rank compression.}
Recent work has explored randomized low-precision low-rank factorization for compressing individual matrices and large language models. \cite{saha2023matrix} propose LPLR, a general matrix-compression algorithm that approximates a single matrix $A$ as $A \approx LR$ using randomized sketching and joint low-rank and low-precision factors, with approximation-error guarantees. \cite{saha2024compressing} builds on this idea and applies low-rank and low-precision decompositions to all weight matrices in an LLM, yielding a post-training compression scheme that focuses on bit- and rank-efficient storage while keeping the network architecture unchanged. 

In contrast, our work is not a generic matrix-compression routine but an SVD-driven network inheritance framework: we reparameterize the teacher model into a low-rank, gated one-down-many-ups module and train the resulting InherNet end-to-end. Our analysis centers on inheritance-aware architecture and optimization, including convergence guarantees, parameter-efficiency bounds, and expert specialization, which is theoretically instructive and complementary to \cite{saha2023matrix} and \cite{saha2024compressing}.

\section{The Use of Large Language Models}
\label{sec:LLM}
A Large Language Model (LLM) provided assistance in the preparation of this manuscript. Its role was strictly limited to:

\begin{itemize}
    \item {Proofreading: Checking the text for spelling errors and grammatical accuracy.}
    \item {Improving Clarity: Suggesting alternative phrasing to enhance readability and flow, without changing the scientific meaning.}
\end{itemize}

All changes suggested by the tool were manually reviewed and implemented by the authors to maintain the integrity of the work.

\end{document}